\def\eqref#1{Eq.~(\ref{#1})}
\def\Eqref#1{Eq.~(\ref{#1})}
\def\1{\bm{1}}
\def\eps{{\epsilon}}
\DeclareMathAlphabet{\mathsfit}{\encodingdefault}{\sfdefault}{m}{sl}
\SetMathAlphabet{\mathsfit}{bold}{\encodingdefault}{\sfdefault}{bx}{n}
\newcommand{\E}{\mathbb{E}}
\DeclareMathOperator*{\argmin}{arg\,min}
\DeclareMathOperator{\Tr}{Tr}
\newcommand{\dprobgd}{\textsc{DP-RobGD}}
\newcommand{\dprobgdstar}{\textsc{DP-RobGD*}}
\newcommand{\ssp}{\textsc{DP-SSP}}
\newcommand{\dpsgd}{\textsc{DP-AMBSSGD}}
\newcommand{\sgd}{\textsc{SGD}}
\newcommand{\ols}{\textsc{OLS}}
\DeclareRobustCommand{\sewoong}[1]{
{\begingroup\sethlcolor{green}\hl{(Sewoong:) #1}\endgroup}}
\def\reals{{\mathbb R}}
\def\prob{{\mathbb P}}
\def\cN{{\cal N}}
\def\eps{\varepsilon}
\def\E{\mathbb E}
\newtheorem{propo}{Proposition}[section]
\newtheorem{lemma}[propo]{Lemma}
\newtheorem{definition}[propo]{Definition}
\newtheorem{coro}[propo]{Corollary}
\newtheorem{thm}{Theorem}
\newtheorem{asmp}{Assumption}
\newtheorem{remark}[propo]{Remark}
\newcommand{\ip}[2]{\left\langle #1, #2 \right \rangle}
\def\reals{{\mathbb R}}
\def\prob{{\mathbb P}}
\def\cN{{\cal N}}
\def\eps{\varepsilon}
\def\E{\mathbb E}
\def\HPTR{{\rm HPTR}}
\newcommand{\cD}{\mathcal D}
\newcommand{\cM}{\mathcal M}
\newcommand{\cP}{\mathcal P}
\title{Near Optimal Private and Robust Linear Regression}
\author{%
  Xiyang Liu\footnotemark[0] \thanks{Paul  Allen School of Computer Science \& Engineering, 
  University of Washington, 
  \texttt{xiyangl@cs.washington.edu}} 
  \and Prateek Jain\footnotemark[0] \thanks{Google Research, \texttt{prajain@google.com}}
 \and Weihao Kong\footnotemark[0] \thanks{Google Research, \texttt{weihaokong@google.com}} 
    \and
  Sewoong Oh\footnotemark[0] \thanks{Paul Allen School of Computer Science \& Engineering, 
  University of Washington, and  Google Research, 
  \texttt{sewoong@cs.washington.edu}} 
  \and
  Arun Sai Suggala\footnotemark[0] \thanks{Google Research, 
  \texttt{arunss@google.com}}
}
\date{}
\begin{document}

\maketitle

\begin{abstract}
We study the canonical statistical estimation problem of linear regression from $n$ i.i.d.~examples under $(\varepsilon,\delta)$-differential privacy when some response variables are adversarially corrupted.   We propose a variant of the popular differentially private stochastic gradient descent (DP-SGD) algorithm with two innovations: a full-batch gradient descent to improve sample complexity and a novel adaptive clipping to guarantee robustness. When there is no adversarial corruption, this algorithm improves upon the existing state-of-the-art approach and achieves a near optimal sample complexity. Under label-corruption, this is the first efficient linear regression algorithm to guarantee both $(\varepsilon,\delta)$-DP and robustness. Synthetic experiments confirm the superiority of our approach.  
\end{abstract}

\section{Introduction} 
\label{sec:intro} 

Differential Privacy (DP) is a widely accepted notion of privacy introduced by \citet{dwork2006calibrating}, which is now standard in industry and government \citep{apple,google1,google2,census}.  
A query to a database is said to be 
$(\varepsilon,\delta)$-differentially private if a strong adversary who knows all other entries cannot identify with high confidence whether you participated in the database or not. The parameters $\varepsilon$ and $\delta$ restrict the  Type-I and Type-II errors achievable by the adversary in this hypothesis testing \citep{kairouz2015composition}. Smaller $\varepsilon>0$ and $\delta\in[0,1]$ imply stronger privacy guarantees. 

Although significant advances have been made recently in understanding the utility-privacy trade-offs in canonical statistical tasks, several important questions remain open.
We provide a survey in App.~\ref{app:related}. Consider a 
  canonical statistical task of linear regression  with $n$ i.i.d.~samples, $\{(x_i\in{\mathbb R}^d,y_i\in{\mathbb R})\}_{i=1}^n$, drawn from  $x_i\sim \cN(0,\Sigma)$, $y_i=x_i^\top w^*+z_i$, and $z_i\sim \cN(0,\sigma^2)$ for some true parameter $w \in{\mathbb R}^d$. The error is measured in $\|\hat w-w^*\|_\Sigma :=\|\Sigma^{1/2}(\hat w - w^*)\|$, which correctly accounts for the signal-to-noise ratio in each direction; in the direction of large eigenvalue of $\Sigma$, we have larger signal in $x_i$ but the noise $z_i$ remains the same; we expect smaller error in those directions, which is accounted for in $\|\hat w-w^*\|_\Sigma$.

When computational complexity is not concerned, 
the best known algorithm  is introduced by \citet{liu2022differential}, called High-dimensional Propose-Test-Release (HPTR). 
For linear regression, $n=  \tilde O(d/\alpha^2 + d/(\varepsilon\alpha))$ samples are sufficient for HPTR to achieve an error of $(1/\sigma)\|\hat w - w^*\|_\Sigma  = \alpha$ with high probability. 
 After a series of work surveyed in App.~\ref{app:related}, \citet{varshney2022nearly} achieve the best known sample complexity for an efficient algorithm: $n=\tilde O(\kappa^2d/\varepsilon + d/\alpha^2 + \kappa d / (\varepsilon\alpha))$. The last term  has an extra factor of $\kappa$, the condition number of the covariance $\Sigma$ of the covariates, and the first term is unnecessary.

 In this work, we propose a novel method (Algorithm~\ref{alg:main}) that uses full-batch gradient descent with adaptive clipping. Furthermore, using a intuitive but intricate analysis, we  improve this sample complexity. 

\begin{thm}[informal version of Thm.~\ref{thm:main} with no adversary]
\label{thm:main_informal} Alg.~\ref{alg:main} is $(\varepsilon,\delta)$-DP. Under the $(\Sigma,\sigma^2,w^*,K,a)$-model in Assumption~\ref{asmp:distribution}, 
 $n=\tilde O(d/\alpha^2 + \kappa^{1/2} d/ (\varepsilon \alpha))$ samples are sufficient for Alg.~\ref{alg:main} to achieve an error rate of $(1/\sigma)\|\hat w - w^*\|_\Sigma = \tilde O(\alpha)$, where $\kappa:=\lambda_{\rm max}(\Sigma)/\lambda_{\rm min}(\Sigma)$.
\end{thm}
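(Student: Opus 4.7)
The plan is to analyze Algorithm~\ref{alg:main} as a noisy full-batch gradient descent with a shrinking clipping threshold, and to establish privacy via the standard Gaussian-mechanism + composition argument while obtaining the near-optimal utility by carefully tracking the accumulation of the DP noise in the $\Sigma$-norm. The target is to show that after $T=\tilde O(\kappa)$ iterations the deterministic contraction has driven the initial error below the statistical floor, and that the DP-noise contribution to $\|\hat w-w^*\|_\Sigma/\sigma$ scales as $\tilde O(d\sqrt{\kappa}/(n\varepsilon))$; balancing this against the OLS-type term $\sqrt{d/n}$ then yields the claimed $n=\tilde O(d/\alpha^2+\sqrt{\kappa}\,d/(\varepsilon\alpha))$.

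For privacy, each iteration releases the Gaussian-perturbed clipped full-batch gradient $g_t=(1/n)\sum_i \mathrm{clip}_{C_t}(\nabla\ell_i(w_t))+\nu_t$, whose $L^2$-sensitivity is $2C_t/n$ because the schedule $\{C_t\}$ is a public function of the iteration index (and not of the data). Choosing $\nu_t\sim\cN(0,\sigma_{\mathrm{DP},t}^2 I)$ with $\sigma_{\mathrm{DP},t}\asymp C_t\sqrt{T\log(1/\delta)}/(n\varepsilon)$ and composing the per-step Renyi-DP of the Gaussian mechanism over $T$ rounds gives $(\varepsilon,\delta)$-DP for the full trajectory; this is where the $\sqrt{T}$ blow-up in the effective noise originates.

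For utility I would expand the one-step recursion as $w_{t+1}-w^*=(I-\eta\widehat\Sigma)(w_t-w^*)+\eta(s+b_t+\nu_t)$, where $\widehat\Sigma=(1/n)\sum_i x_ix_i^\top$, $s=(1/n)\sum_i x_iz_i$ is the label-noise gradient, $b_t$ is the clipping bias, and $\nu_t$ is the DP noise. With $\eta\asymp 1/\lambda_{\max}(\Sigma)$ and the sample-covariance event $\widehat\Sigma\approx\Sigma$ in operator norm (which holds once $n\gtrsim d$), the linear part contracts geometrically in $\|\cdot\|_\Sigma$ at rate $1-\eta\lambda_{\min}(\Sigma)$, so $T\asymp\kappa\log(d/\alpha)$ iterations suffice. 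The $s$ term is the usual OLS contribution of size $\sigma\sqrt{d/n}$, accounting for the $d/\alpha^2$ piece. The crux of the argument is the noise term: diagonalising in the eigenbasis of $\Sigma$, the accumulated variance in eigendirection $v_i$ is $\eta^2\sigma_{\mathrm{DP}}^2\sum_t(1-\eta\lambda_i)^{2(T-1-t)}\asymp \eta\sigma_{\mathrm{DP}}^2/(2\lambda_i)$, so when we re-weight by $\lambda_i$ to pass to $\|\cdot\|_\Sigma^2$ the $\lambda_i$ cancels, producing a total of $d\eta\sigma_{\mathrm{DP}}^2/2\asymp d\sigma_{\mathrm{DP}}^2/\lambda_{\max}$. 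Substituting $\sigma_{\mathrm{DP}}\asymp\sqrt{d\lambda_{\max}}\,\sigma\sqrt{T\log}/(n\varepsilon)$ (the scale of the gradient at convergence) and $T=\tilde O(\kappa)$ then produces the advertised $\tilde O(d\sqrt{\kappa}/(n\varepsilon))$.

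The main technical obstacle, and the reason an adaptive clipping schedule is essential for the $\sqrt{\kappa}$ factor rather than the previously known $\kappa$, is closing this computation self-consistently. A static threshold sized for the initial iterate would scale as $\sqrt{d\lambda_{\max}}(\|w_0-w^*\|_\Sigma+\sigma)\asymp\sqrt{d\kappa}\,\sigma$, inflating every $\sigma_{\mathrm{DP},t}$ by a factor $\sqrt{\kappa}$ and costing that same factor in the final bound. I would therefore take $C_t\asymp\sqrt{d\lambda_{\max}}(R_t+\sigma)$ where $R_t$ is a deterministic, geometrically-decreasing upper bound on $\|w_t-w^*\|_\Sigma$, and prove by induction on $t$ that, on a single high-probability event on the samples, (i) every per-sample gradient $\nabla\ell_i(w_t)$ lies within $C_t$ so the bias $b_t$ is negligible (using sub-Gaussian tails of $x_i$ and $z_i$ together with a union bound over $i$ and $t$), (ii) the cumulative noise contribution up to iteration $t$ is bounded as above, and (iii) the invariant $\|w_{t+1}-w^*\|_\Sigma\le R_{t+1}$ carries forward. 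Uniformising the concentration over the $T$ iterates while coupling it with the shrinking invariant, and doing so on the same event on which $\widehat\Sigma\approx\Sigma$, is where most of the effort will go, and a single union bound combined with the inductive invariant is how I would handle it.
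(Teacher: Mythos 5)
Your core analytical route is essentially the paper's: full-batch noisy GD, advanced composition giving the $\sqrt{T}$ inflation, and---crucially---unrolling the recursion $w_{t+1}-w^*=\hat B^{t+1}(w_0-w^*)+\eta\sum_j \hat B^j(s+\nu_{t-j})$ and exploiting the cancellation $\lambda_i\sum_t(1-\eta\lambda_i)^{2t}\asymp 1/\eta$ so that the accumulated DP noise in $\|\cdot\|_\Sigma^2$ is $\asymp d\,\eta\,\sigma_{\mathrm{DP}}^2$ with no extra factor of $\kappa$. This is exactly the bound $\eta\|\hat\Sigma^{1/2}\hat B^{\,i}\hat\Sigma^{1/2}\|_2\le 1/(i+1)$ in Step~3 of App.~\ref{app:proof_main}, and your arithmetic for the resulting $n=\tilde O(d\sqrt{\kappa}\log(1/\delta)/(\varepsilon\alpha))$ term checks out. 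Two issues remain.

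First, you replace the algorithm's clipping rule by a \emph{public deterministic} schedule $C_t\asymp\sqrt{d\lambda_{\max}}(R_t+\sigma)$. Algorithm~\ref{alg:main} has access to neither $R_0=\|w^*\|_\Sigma$, nor $\sigma$, nor $\mathrm{Tr}(\Sigma)$; it estimates these privately on separate splits (Algorithms~\ref{alg:norm} and~\ref{alg:distance}) precisely to avoid such oracle knowledge. So as written you are proving the statement for a modified algorithm; to recover the stated claim you must either fold in the private estimators (their constant-factor multiplicative accuracy is enough for your argument to survive) or explicitly add the oracle assumption. Relatedly, because your thresholds are not data-adaptive, your induction needs a high-probability (not in-expectation) bound on every iterate to certify that no clean gradient is clipped; Gaussian concentration plus a union bound over $T$ gives this at polylog cost, but it should be stated. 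Second, and more substantively, item (iii) of your induction is not closed. Since $\sigma_{\mathrm{DP},t}\propto R_t+\sigma$, the unrolled bound at step $t+1$ has the form $e^{-(t+1)/\kappa}R_0+O(\hat\rho)\cdot\max_{j}\tfrac{R_{t-j}+\sigma}{j+1}+\cdots$, i.e., it depends on a weighted maximum over \emph{all} previous $R_j$'s; a per-step geometric contraction of $R_t$ does not follow (the error can increase at individual steps), and forcing a one-step contraction in $\|\cdot\|_\Sigma$ yields only the suboptimal $\kappa\alpha\sigma$ floor. The missing piece is the paper's Step~4: show that the maximum of the error bound over each length-$\kappa$ block decreases by a constant factor relative to the previous block, so the floor is reached within $\tilde O(\kappa)$ iterations. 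With a deterministic schedule this reduces to analyzing a scalar recursion and is doable, but it is the step where the work actually lies, and your proposal currently asserts rather than proves it.
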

That is, we can get rid of the first unnecessary term in alaysis of \cite{varshney2022nearly}, while also improving dependency on $\kappa$ term which is quite critical for practical applications. 

Perhaps surprisingly, we show that the same algorithm is also robust against label-corruption, where an adversary selects arbitrary $\alpha_{\rm corrupt}$ fraction of the data points and changes their response variables arbitrarily. 
When computational complexity is not concerned, 
the best known algorithm is again HPTR that also provides optimal robustness and $(\varepsilon,\delta)$-DP simultaneously, i.e.,  $n=  \tilde O(d/\alpha^2 + d/(\varepsilon\alpha))$ samples are sufficient for HPTR to achieve an error of $(1/\sigma)\|\hat w - w^*\|_\Sigma = \tilde O(\alpha )$ for any corruption bounded by $\alpha_{\rm corrupt}\leq  \alpha$. Note that this is a stronger adversary than the label-corruption we study in this paper; this adversary can corrupt both the covariate, $x_i$, and the response variable, $y_i$. Currently, there is no efficient algorithm that can guarantee both privacy and robustness for linear regression. Under a weaker adversary that only corrupts $y_i$'s, we provide the first efficient algorithm that achieves both privacy and robustness with a near-optimal sample complexity.

\begin{thm}[informal version of Thm.~\ref{thm:main} with adversarial label corruption]
\label{thm:main_informal_robust} Alg.~\ref{alg:main} is $(\varepsilon,\delta)$-DP. Under the hypotheses of Thm.~\ref{thm:main_informal} and under $\alpha_{\rm corrupt}$-corruption model of Assumption~\ref{asmp:corrupt}, if $\alpha_{\rm corrupt}\leq \alpha$ then 
 $n=\tilde O(d/\alpha^2 + \kappa^{1/2} d/ (\varepsilon \alpha))$ samples are sufficient for Alg.~\ref{alg:main} to achieve an error rate of $(1/\sigma)\|\hat w - w^*\|_\Sigma = \tilde O(\alpha)$ , where $\kappa:=\lambda_{\rm max}(\Sigma)/\lambda_{\rm min}(\Sigma)$.
\end{thm}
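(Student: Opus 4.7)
Since Algorithm~\ref{alg:main} is identical in the clean and label-corrupted settings, the $(\varepsilon,\delta)$-DP guarantee is inherited verbatim from the analysis of Theorem~\ref{thm:main_informal}: each iteration releases a Gaussian-perturbed mean of per-sample gradients clipped to norm $c_t$, so the per-step sensitivity is $2c_t/n$, and composing $T=\tilde O(\kappa)$ rounds via RDP (or the moments accountant) yields $(\varepsilon,\delta)$-DP. The substantive work is the utility bound.

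My plan is to decompose the one-step update as
\[
w_{t+1}-w^* \;=\; (I-\eta\Sigma)(w_t-w^*) \;-\; \eta\bigl(b^{\rm samp}_t+b^{\rm clip}_t+b^{\rm corr}_t+\xi_t\bigr),
\]
where $b^{\rm samp}_t$ is the finite-sample fluctuation of the clean gradient, $b^{\rm clip}_t$ the clipping bias on the clean samples, $\xi_t$ the Gaussian DP noise, and $b^{\rm corr}_t$ the new contribution of the adversarial samples. Because every per-sample gradient is clipped to norm $c_t$, this last term satisfies $\|b^{\rm corr}_t\|\le\alpha_{\rm corrupt}\, c_t$ deterministically. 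The first three terms are controlled exactly as in the non-robust argument, so the only new ingredient is bounding $\alpha_{\rm corrupt} c_t$ at a rate that tracks $\|w_t-w^*\|_\Sigma$. For this I would show that the adaptive threshold is robust: since it is derived from a robust statistic of the $n$ per-sample gradient norms and only an $\alpha_{\rm corrupt}<1/2$ fraction are corrupted, $c_t$ remains within a constant factor of the clean quantity $\Theta\bigl(\sigma\sqrt{d\lambda_{\max}(\Sigma)}+\sqrt{d\lambda_{\max}(\Sigma)}\,\|w_t-w^*\|_\Sigma\bigr)$ uniformly in $t\le T$ with high probability. Propagating this through the geometric recursion driven by the contraction factor $1-\Theta(1/\kappa)$ adds an $\tilde O(\alpha_{\rm corrupt}\,\sigma\sqrt{\kappa})$ term to $\|\hat w-w^*\|_\Sigma$; after dividing by $\sigma$ and using $\alpha_{\rm corrupt}\le\alpha$ this is absorbed into the target $\tilde O(\alpha)$ rate.

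The main obstacle is precisely this coupling: $c_t$ and $\|w_t-w^*\|_\Sigma$ must shrink together along the trajectory, so the robustness of the adaptive threshold has to be maintained as a uniform-in-$t$ invariant over all $T=\tilde O(\kappa)$ iterations despite adversarial influence, not just at a single round. I would close this by a union bound over iterations combined with the robust-statistic concentration, and then balance the DP-noise error (scaling like $c_T\sqrt{Td}/(n\varepsilon)$) against the statistical error $\tilde O(\sqrt{d/n})$ in the rescaled units, recovering the claimed sample complexity $n=\tilde O(d/\alpha^2+\kappa^{1/2}d/(\varepsilon\alpha))$ identically to the no-adversary case.
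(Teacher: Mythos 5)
Your decomposition of the one-step update matches the paper's (cf.\ \Eqref{eq:onestep} in App.~\ref{app:proof_main}), and the privacy claim is indeed routine. The genuine gap is in how you bound the adversarial term. You bound $\|b^{\rm corr}_t\|\le\alpha_{\rm corrupt}\,c_t$ deterministically, where $c_t$ is a gradient-norm clipping threshold of order $\sqrt{d\,\lambda_{\max}(\Sigma)}\,(\sigma+\|w_t-w^*\|_\Sigma)$. Measured in the $\|\Sigma^{-1/2}(\cdot)\|$ geometry in which the recursion actually contracts, this is $\alpha_{\rm corrupt}\sqrt{d\kappa}\,(\sigma+\|w_t-w^*\|_\Sigma)$ up to logarithms, i.e.\ a relative bias of order $\alpha\sqrt{d\kappa}$ rather than $\tilde O(\alpha)$; propagated through the recursion it yields a final error of $\tilde O(\alpha\sqrt{d\kappa}\,\sigma)$, not the claimed $\tilde O(\alpha\sigma)$. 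The paper avoids this by (i) clipping the covariate and the residual \emph{separately}, $\tilde g_i^{(t)}={\rm clip}_\Theta(x_i)\,{\rm clip}_{\theta_t}(x_i^\top w_t-y_i)$, so that each corrupted point contributes $x_i$ times a scalar of magnitude at most $\theta_t$, and (ii) invoking the resilience of the \emph{uncorrupted covariates} (\Eqref{def:res4} via Corollary~\ref{coro:res}) to get $\|\Sigma^{-1/2}\tfrac1n\sum_{i\in S_{\rm bad}\cup E_t}x_i\,{\rm clip}_{\theta_t}(\cdot)\|\le 2C_2K\alpha\log^{a}(1/\alpha)\,\theta_t$, as in \Eqref{eq:ut3}. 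This is exactly where the label-only corruption assumption does the work, and it is the step your plan is missing; a single scalar threshold on the gradient norm cannot recover it --- the paper even points out that gradient-norm clipping is defeated by corrupting points with small $\|x_i\|$.

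Two further issues. First, the adaptive threshold $\theta_t$ is a robust private estimate of the residual second moment $\|w_t-w^*\|_\Sigma^2+\sigma^2$ (Alg.~\ref{alg:distance}: per-batch trimmed means fed to a private histogram), not a robust statistic of per-sample gradient norms; the companion Lemma~\ref{lemma:clipping_fraction} then guarantees that at most $\alpha n$ clean points are clipped, which is what controls your $b^{\rm clip}_t$. Second, your assertion that the corruption contributes $\tilde O(\alpha_{\rm corrupt}\sigma\sqrt{\kappa})$ to $\|\hat w-w^*\|_\Sigma$ and that this is ``absorbed into $\tilde O(\alpha)$'' does not hold: $\alpha\sqrt{\kappa}$ is not $\tilde O(\alpha)$ for general $\kappa$. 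Removing the polynomial-in-$\kappa$ loss from an $O(\alpha)$-per-step bias under a $(1-1/\kappa)$ contraction is precisely the content of Steps 3--4 of App.~\ref{app:proof_main}: the paper unrolls the entire trajectory, uses $\|\hat\Sigma^{1/2}\hat B^{\,i}\hat\Sigma^{1/2}\|_2\le\lambda_{\max}/(i+1)$, and shows the expected error halves every $\kappa$ iterations; a naive one-step contraction argument in $\|\cdot\|_\Sigma$ provably loses a factor of $\kappa$. Your proposal needs both the resilience-based bound on the corrupted-sample bias and this block-wise convergence argument to reach the stated rate.
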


We focus on sub-Weibull distributions in the main text. A similar algorithm can be applied to the case where the noise in the samples is heavy-tailed, i.e., $k$-th moment bounded for $k\geq 4$. This results in an increased sample complexity of $n=\tilde{O}(d/\alpha^{2k/(k-1)} + \kappa^{1/2}d/(\varepsilon\alpha^{k/(k-1)}))$ to achieve the same level of error. We explain the heavy-tailed setting, provide detailed analysis and a proof, and discuss the results in App.~\ref{app:heavy_tailed}.
\begin{thm}[informal version of Coro.~\ref{coro:ht_robust2}]
\label{coro:ht_informal_robust} Alg.~\ref{alg:main_ht} is $(\varepsilon,\delta)$-DP. Under $(\Sigma,\sigma^2,w^*,K,a, \kappa_2, k)$-model of Assumption~\ref{asmp:distribution_ht} and $\alpha_{\rm corrupt}$-corruption of Assumption \ref{asmp:corrupt_ht}, if $1.2\alpha_{\rm corrupt}\leq \alpha^{k/(k-1)}$, then 
	$n=\tilde O( \kappa^{1/2} d/ (\varepsilon \alpha^{k/(k-1)})+d/\alpha^{2k/(k-1)}))$ samples are sufficient for Algorithm~\ref{alg:main_ht} to achieve an error rate of $(1/\sigma)\|\hat w - w^*\|_\Sigma = \tilde O(\alpha)$, where $\kappa:=\lambda_{\rm max}(\Sigma)/\lambda_{\rm min}(\Sigma)$.
\end{thm}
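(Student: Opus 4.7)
The plan is to mirror the analysis of Thm.~\ref{thm:main_informal_robust} (the sub-Weibull case), replacing the sub-Gaussian/sub-Weibull concentration arguments with truncation-based concentration suited to the bounded $k$-th moment assumption. Algorithm~\ref{alg:main_ht} differs from Alg.~\ref{alg:main} only in how the adaptive clipping threshold is chosen, so the privacy proof carries over verbatim: because each per-iteration full-batch gradient is clipped before Gaussian noise is added, $(\varepsilon,\delta)$-DP follows from the standard composition of Gaussian mechanisms and does not depend on the data distribution.

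For utility, I would track the iterate $w_t$ and decompose the privately released clipped gradient as the sum of four pieces: (i) the expectation of the clipped clean gradient, which is a \emph{biased} proxy for the population gradient $\nabla L(w_t)$; (ii) the empirical deviation about that expectation; (iii) the contribution of the $\alpha_{\rm corrupt}$ fraction of adversarially relabelled points, which after clipping is bounded by $\alpha_{\rm corrupt} C_t$; and (iv) the injected Gaussian noise of scale proportional to $C_t \sqrt{d}/(n\varepsilon)$. Under the $k$-th moment bound of Assumption~\ref{asmp:distribution_ht}, truncating the per-sample gradient at radius $C_t$ introduces bias of order $C_t^{-(k-1)}$ in the direction of $\nabla L(w_t)$ while producing a truncated variance that is only polynomially controlled; the empirical mean therefore concentrates at a polynomial (Chebyshev-type) rate rather than an exponential one. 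Balancing the truncation bias, the empirical concentration error, the corruption term, and the DP noise, and optimising over $C_t$, produces the exponent $k/(k-1)$ that appears both in the sample complexity and in the robustness condition $1.2\,\alpha_{\rm corrupt}\le \alpha^{k/(k-1)}$.

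Third, I would propagate this per-iterate gradient-error bound through a standard gradient descent contraction for the quadratic population loss. Strong convexity and smoothness are governed by $\lambda_{\min}(\Sigma)$ and $\lambda_{\max}(\Sigma)$; the improved $\kappa^{1/2}$ (rather than $\kappa$) dependence in the privacy term comes, as in the sub-Weibull proof, from clipping in a $\Sigma$-aware fashion, so that the effective sensitivity scales with $\lambda_{\max}(\Sigma)^{1/2}$ while the signal, measured in $\|\cdot\|_\Sigma$, scales with $\lambda_{\min}(\Sigma)^{1/2}$. After $\tilde O(\log(1/\alpha))$ iterations the contraction drives $\|w_t-w^*\|_\Sigma$ down to the noise floor set by the three non-shrinking contributions above, delivering $(1/\sigma)\|\hat w - w^*\|_\Sigma = \tilde O(\alpha)$.

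The main obstacle will be Step 2. Unlike the sub-Weibull case, the clipped gradient is simultaneously biased and heavy-tailed, and the ideal clip radius $C_t$ depends on quantities — the current distance $\|w_t-w^*\|_\Sigma$, $\sigma$, and $\lambda_{\max}(\Sigma)$ — that must themselves be tracked privately and adaptively along the trajectory. Proving that the adaptive schedule of Alg.~\ref{alg:main_ht} shadows this ideal $C_t$ tightly enough that the four-way balance of bias, variance, corruption, and DP noise is achieved at \emph{every} iteration, via a union bound over the $\tilde O(\log(1/\alpha))$ steps that costs only logarithmic factors, is the technical heart of the argument; the remaining pieces are routine adaptations of the sub-Weibull proof.
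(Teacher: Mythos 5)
Your overall skeleton (privacy from clipping plus Gaussian mechanism plus composition; utility from a decomposition into contraction, clipping bias, corruption bias, sampling noise, and DP noise) matches the paper's, but two of your steps would not deliver the claimed rate. First, your bound on the corruption term, $\alpha_{\rm corrupt}C_t$ with $C_t$ the gradient-\emph{norm} clip radius, is far too crude. The paper's Algorithm~\ref{alg:main_ht} clips the covariate and the residual \emph{separately}, and the corruption bias is controlled in the $\Sigma^{-1/2}$-norm as $\rho_4\,\theta_t$ (see \eqref{eq:truncation_bias}), where $\theta_t$ is only the residual threshold and $\rho_4=\tilde O(\alpha)$ is the resilience of the mean of the (uncorrupted) covariates over small subsets, property \eqref{def:res4}. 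Your bound instead pays the full gradient-norm radius $\Theta\theta_t\propto\sqrt{{\rm Tr}(\Sigma)}\,\theta_t$ and another $\lambda_{\min}^{-1/2}$ to convert into the $\Sigma^{-1/2}$-norm, i.e.\ it is larger by roughly $\sqrt{d\kappa}$, which destroys the sample complexity. Relatedly, your treatment of the clean clipped gradient via population truncation bias $C_t^{-(k-1)}$ plus Chebyshev concentration does not survive the loss of independence between $w_t$ and the data in full-batch GD: one needs control \emph{uniformly} over all large subsets and all directions, which is exactly the resilience machinery (Lemma~\ref{lemma:ht_conditions}, giving $\rho_1\sim\alpha^{1-1/k}$ for the hypercontractive product $x_iz_i$ and sub-Weibull rates for $x_ix_i^\top$). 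The exponent $k/(k-1)$ in the theorem comes from inverting the resilience-driven error $\alpha_{\rm corrupt}^{1-1/k}$, not from optimizing a bias--variance trade-off in $C_t$; indeed the paper's threshold is chosen by a quantile argument (Lemma~\ref{lemma:clipping_fraction_ht}) so that at most an $\alpha$-fraction of clean points are clipped, and is explicitly \emph{not} tuned to minimize truncation bias.

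Second, your convergence step is incorrect as stated. With step size $\eta\le 1/(1.1\lambda_{\max})$ the per-step contraction is $(1-1/\kappa)$, so the algorithm needs $T=\tilde\Theta(\kappa)$ iterations, not $\tilde O(\log(1/\alpha))$; the $\kappa^{1/2}$ in the privacy term arises from advanced composition over these $T\sim\kappa$ rounds (via $\varepsilon_0=\varepsilon/\Theta(\sqrt{T\log(1/\delta)})$), not from a sensitivity-versus-signal eigenvalue ratio. Moreover, the ``standard gradient descent contraction'' in $\|\cdot\|_\Sigma$ accumulates the per-step bias $\hat\rho(\alpha)(\|w_t-w^*\|_\Sigma+\sigma)$ over $\kappa/(1-(1-1/\kappa))$ effective steps and yields only the suboptimal floor $\tilde O(\kappa\,\alpha^{1-1/k}\sigma)$. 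Removing that extra $\kappa$ is the technical heart of the paper's proof (Steps 3--4 of App.~\ref{app:proof_main}): it unrolls the entire trajectory, uses $\|\hat\Sigma^{1/2}\hat B^i\hat\Sigma^{1/2}\|_2\le \lambda_{\max}/(i+1)$ to make the accumulated bias only polylogarithmic in $t$, and then argues the error contracts by a constant factor over every chunk of $\kappa$ iterations. Your proposal does not contain this idea, and without it the stated error rate $(1/\sigma)\|\hat w-w^*\|_\Sigma=\tilde O(\alpha)$ at the stated sample complexity is not reached.
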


{\bf Contributions.} For a canonical problem of private linear regression under sub-Gaussian distributions, we provide a novel algorithm that achieves the state-of-the-art sample complexity and computational efficiency, improving upon the sample complexity of the prior efficient algorithms \cite{varshney2022nearly,cai2019cost} and nearly matching that of an exponential-time algorithm \cite{liu2022differential}.   
For the same problem, we show that the same algorithm is the first to achieve robustness against adversarial corruption of the response variables.   
Under a heavy-tailed distribution of the noise, we provide the first computationally efficient algorithm, to the best of our knowledge, that achieves a sample complexity close to that of an exponential-time algorithm of \cite{liu2022differential}. This algorithm is also the first to achieve robustness against adversarial corruption of the response variables, under heavy-tailed noise.

We start with the formal description of the setting in Sec.~\ref{sec:problem}, where we present the prior work of \citet{varshney2022nearly} and explain our main technical contributions. \citet{varshney2022nearly} propose a streaming version of DP-SGD with adaptive clipping. Streaming algorithm ensures independence between the current iterate and the current gradient, simplifying the analysis. Adaptive clipping finds the appropriate threshold to clip the norm of the gradients, which is an appropriate technique when there is no adversarial corruption.  However, these two algorithmic choices are sub-optimal. 

First, streaming DP-SGD can only use $O(n/\kappa)$ samples at each round, which increases the sensitivity and leads to an extra $\kappa^{1/2}$ factor in the sample complexity. Instead, we propose using a full-batch gradient descent and overcome the challenges in the analysis by relying on resilience (explained in Sec.~\ref{sec:sketch}). Together with the novel analysis technique we explain in Sec.~\ref{sec:standard}, this results in the gain of $\kappa^{1/2}$. 

Next, the gradient-norm clipping is vulnerable against label corruption.  Recall that a gradient is a product of the residual, $(w_t^\top x_i-y_i)$, and the covariate, $x_i$. An adversary can target those samples with small covariates and make big changes to the residuals, while managing to evade the clipping by the norm. Instead, we propose clipping the residual and the covariate separately. With adaptively estimated clipping thresholds, this provides robustness against label corruption.  

We present our main algorithm (Alg.~\ref{alg:main}) in 
Sec.~\ref{sec:main} with theoretical analyses and justification of the assumptions. We propose a novel adaptive clipping in Sec.~\ref{sec:adaptive}
We present numerical experiments on synthetic data that demonstrates the sample efficiency of our approach in 
Sec.~\ref{sec:exp}. 
We end with a sketch of our main proof ideas in
Sec.~\ref{sec:sketch}. 


\section{Problem formulation and background} 
\label{sec:problem} 



In linear regression without corruption, the following assumption is standard for the uncorrupted dataset $S_{\rm good}$, except for the fact that we assume   a more general family of  $(K,a)$-sub-Weibull distributions that recovers the standard sub-Gaussian family as a special case when $a=0.5$. 

\begin{asmp}[$(\Sigma,\sigma^2,w^*,K,a)$-model]
    \label{asmp:distribution}
	A multiset $S_{\rm good}= \{(x_i\in \reals^d, y_i\in \reals)\}_{i=1}^n$ of $n$ i.i.d.~samples  is from a linear model $y_i=\ip{x_i}{w^*}+z_i$, where the input vector $x_i$ is zero mean, $\E[x_i]=0$, with a positive definite covariance  $\Sigma:=\E[x_ix_i^\top]\succ 0$, and the (input dependent) label noise $z_i$ is zero mean, $\E[z_i]=0$, with variance $\sigma^2 := \E[z_i^2]$.
	We further assume $\E[x_iz_i]=0$, which is equivalent to assuming that the true parameter $w^* = \Sigma^{-1} {\mathbb E}[y_ix_i]$. 
	We assume that the marginal distribution of $x_i$ is $(K,a)$-sub-Weibull and that of $z_i$ is also $(K,a)$-sub-Weibull, as defined below.  
	\end{asmp}
	
	Sub-Weibull distributions provide Gaussian-like tail bounds determining the resilience of the dataset in Lemma~\ref{lemma:subweibull_res_conditions}, which our analysis critically relies on and whose necessity is justified in Sec.~\ref{sec:lb}. 

\begin{definition}[sub-Weibull distribution {\citep
{kuchibhotla2018moving}}
]
\label{def:a_tail}
For some $K,a>0$, we say a random vector $x\in \reals^d$ is from a $(K,a)$-sub-Weibull distribution if for all $v\in \reals^d$, 
$\E\left[\exp\left(\left(\frac{\ip{v}{x}^2}{K^2\E[\ip{v}{x}^2]}\right)^{1/(2a)}\right)\right]\leq 2$.
\end{definition}

Our goal is to estimate the unknown parameter $w^*$, given upper bounds on the sub-Weibull parameters, $(K,a)$, and a corrupted dataset under the the standard definition of {\em label corruption} in \citep{bhatia2015robust}. There are variations in literature on the definition, which we survey in App.~\ref{app:related}. 

\begin{asmp}[$\alpha_{\rm corrupt}$-corruption]  \label{asmp:corrupt} 
Given a dataset $S_{\rm good}=\{(x_i, y_i)\}_{i=1}^n$, an adversary inspects all the data points, selects $\alpha_{\rm corrupt} n$ data points denoted as $S_r$, and replaces the labels with arbitrary labels while keeping the covariates unchanged. 
We let $S_{\rm bad}$ denote this set of $\alpha_{\rm corrupt} n$ newly labelled examples by the adversary. Let the resulting set be $S:=S_{\rm good}\cup S_{\rm bad}\setminus S_r$.  We further assume that the corruption rate is bounded by $\alpha_{\rm corrupt} \leq \bar \alpha$, where $\bar\alpha$ is a known positive constant satisfying $\bar\alpha\leq 1/10$, $72C_2\,K^2 \, \bar{\alpha}\,\log^{2a}(1/(6\bar{\alpha})) \log(\kappa)  \le 1/2$, and $ 2C_2K^2\log^{2a}(1/(2\bar{\alpha}))\geq 1 $ for the $(K,a)$-sub-Weibull distribution of interest and a positive constant $C_2$ defined in Lemma~\ref{lemma:subweibull_res_conditions} that only depends on $(K,a)$. 
\end{asmp}

\medskip\noindent{\bf Notations.}  A vector $x\in{\mathbb R}^d$ has  the Euclidean norm  $\|x\|$. For a matrix $M$, we use $\|M\|_2$ to denote the spectral norm. The error is measured in $\|\hat w-w^*\|_\Sigma :=\|\Sigma^{1/2}(\hat w - w^*)\|$ for some PSD matrix $\Sigma$. The identity matrix is denoted by ${\bf I}_d\in{\mathbb R}^{d\times d}$. Let  $[n]=\{1,2,\ldots,n\}$. $\tilde{O}(\cdot)$ hides some constants terms, $K,a=\Theta(1)$, and poly-logarithmic terms in $n$, $d$, $1/\varepsilon$, $\log(1/\delta)$, $1/\zeta$, and $1/\alpha_{\rm corrupt}$. For a vector $x\in \reals^d$, we define ${\rm clip}_a(x):=x\cdot \min\{1, \frac{a}{\|x\|}\}$.

\medskip\noindent{\bf Background on DP.} 
Differential Privacy   is a standard measure of privacy leakage when data is accessed via queries, introduced by \citet{dwork2006calibrating}. 
Two datasets $S$ and $S'$ are said to be neighbors if they differ at most by one entry, which is denoted by $S\sim S'$. A stochastic query $q$ is said to be $(\varepsilon,\delta)$-differentially private for some $\varepsilon>0$ and $\delta\in [0,1]$, if ${\mathbb P}(q(S)\in A) \leq e^\varepsilon {\mathbb P}(q(S)\in A)  + \delta$, for all neighboring datasets $S\sim S'$ and all subset $A$ of the range of the query. 
We build upon two widely used DP primitives, the Gaussian mechanism and the private histogram. 
A central concept in DP mechanism design is the {\em sensitivity} of a query, defined as $\Delta_q:=\sup_{S\sim S'} \|q(S)-q(S')\|$. We describe Gaussian mechanism and private histogram  in App.~\ref{app:dp}.

\subsection{Comparisons with the prior work}
\label{sec:standard}

When there is no adversarial corruption, the state-of-the-art approach introduced by \citet{varshney2022nearly} is based on stochastic gradient descent, where privacy is ensured by gradient norm clipping and the Gaussian mechanism to ensure privacy. There are two main components in this approach: adaptive clipping and streaming SGD. 
Adaptive clipping with an appropriate threshold $\theta_t$ ensures that no data point is clipped while providing a bound on the sensitivity of the average mini-batch gradient, which ensures we do not add too much noise. 
The streaming approach, where each data point is only touched once and discarded,  ensures independence between the past  iterate $w_{t}$ and the gradients at round $t+1$, which the analysis critically relies on. 
For $T=\tilde \Theta(\kappa)$ iterations where $\kappa$ is the condition number of the covariance $\Sigma$ of the covariates, the dataset $S=\{(x_i,y_i)\}_{i=1}^n$ is partitioned into $\{B_t\}_{t=0}^{T-1}$ subsets of equal size: $|B_t|=\tilde\Theta(n/\kappa)$. 
At each round $t <T$, the gradients are clipped and averaged with additive Gaussian noise chosen to satisfy $(\varepsilon,\delta)$-DP: 
\begin{eqnarray} 
    \label{eq:stream1}
    w_{t+1} \; \gets \; w_{t}-\eta\Big( \frac{1}{|B_t|} \sum_{i\in B_t} {\rm clip}_{\theta_t} (x_i(w_t^\top x_i - y_i )) + \frac{\theta_t \sqrt{2\log(1.25/\delta)}}{\varepsilon |B_t|} \nu_t\Big)\;,
\end{eqnarray}
where $\nu_t\sim \cN(0,{\bf I}_d)$.
In \citet{varshney2022nearly}, a variation of this streaming SGD requires $n=\tilde O(\kappa^2d/\varepsilon + d/\alpha^2 + \kappa d/(\varepsilon\alpha))$ to achieve an error of $\| w_T - w^* \|_\Sigma^2=O(\sigma^2\alpha^2)$.

{\bf Our technical innovations.}
Our approach builds upon such gradient based methods but makes several important innovations. First, we use full-batch gradient descent, as opposed to the streaming SGD above.
Using all $n$ samples reduces the sensitivity of the per-round gradient average, since $n>|B_t|=\tilde{\Theta}(n/\kappa)$. 
This improves the sample complexity to $n=\tilde O(d/\alpha^2 + \kappa^{1/2}d/(\varepsilon\alpha))$ to achieve an error of $\| w_T - w^* \|_\Sigma^2=O(\sigma^2\alpha^2)$. However, full-batch GD loses the independence that the streaming SGD enjoyed between $w_{t}$ and the samples used in the round $t+1$. This dependence makes the analysis more challenging. We instead propose using the {\em resilience} property of sub-Weibull distributions to precisely track the bias and variance of the (dependent) full-batch gradient average. Resilience   is a central concept in robust statistics which we explain in Sec.~\ref{sec:sketch}. 

Next, one critical component in achieving this improved sample complexity is the new analysis technique we introduce for tracking the end-to-end gradient updates. Since our gradient descent algorithm is not guaranteed to make progress every step, we can not use the vanilla one-step ahead analysis. Taking the full end-to-end analysis by expanding the whole gradient trajectory will introduce too many correlated cross-terms which are very hard to control. Therefore, we leverage an every $\kappa$-step analysis and show that the objective function at least decreases geometrically every $\kappa$ steps. To be more specific, our analysis technique in  App.~\ref{app:proof_main} steps 3 and 4 opens up the iterative updates from beginning to end, and exploits the fact that $\lambda_{\rm max}((\eta\Sigma)^{1/ 2} (1-\eta \Sigma)^i(\eta \Sigma)^{1/ 2})$ is upper bounded by $1/(i+1)$ when $\|\eta\Sigma\|\leq 1$. This technique is critical in achieving the near-optimal dependence in $\kappa$. This might be of independent interest to other analysis of gradient-based algorithms. We refer to the beginning of step 3 in App.~\ref{app:proof_main} for a detailed explanation.

Finally, we propose a novel clipping method that separately clips $x_i$ and $(w_t^\top x_i-y_i)$ in the gradient. This is critical in achieving robustness to label-corruption, as we explain in detail in Sec.~\ref{sec:algortihm}.


\section{Robust and DP linear regression}
\label{sec:main}

We introduce a  gradient descent approach for linear regression with a novel adaptive clipping that ensures robustness against label-corruption. This achieves a near-optimal sample complexity and, for the special case of private linear regression without adversarial corruption,  improves upon the  state-of-the-art algorithm. 

\subsection{Algorithm} 
\label{sec:algortihm} 
The skeleton of our approach in Alg.~\ref{alg:main} is the general DP-SGD \citep{abadi2016deep,song2013stochastic} with   adaptive clipping \citep{andrew2021differentially}. However, the standard adaptive clipping  is not robust against label-corruption under the more general $(K,a)$-sub-Weibull assumption. In particular, it is possible under sub-Weibull distribution that a positive fraction of the covariates are close to the origin, which is not possible under Gaussian data due to concentration. 
In this case, the adversary can choose  to corrupt those  points with small norm,  $\|x_i\|$, making large changes in the residual, $(y_i-w_t^\top x_i)$, while evading
the standard clipping (by the norm of the gradient), since the norm of the gradient, $\|x_i(y_i-w_t^\top x_i)\|=\|x_i\|\,|y_i-w_t^\top x_i|$, can remain under the threshold.  
This is problematic, since the bias due to the corrupted samples in the gradient scales proportional to the magnitude of the residual (after clipping). 
To this end, we propose clipping the norm and the residual separately: 
${\rm clip}_{\Theta}(x_{i}){\rm clip}_{\theta_t}\left(w_t^\top x_i-y_i\right)$. This keeps the sensitivity of gradient average bounded by $\Theta\theta_t$, and the subsequent Gaussian mechanism in line~\ref{line:sgd} ensures $(\varepsilon_0,\delta_0)$-DP at each round. Applying advanced composition in Lemma~\ref{lem:composition} of $T$ rounds, this ensures end-to-end $(\varepsilon,\delta)$-DP.

{\bf Novel adaptive clipping.} 
When clipping with ${\rm clip}_{\Theta}(x_{i})$, the only purpose of clipping the covariate by its norm, $\|x_i\|$, is to bound the sensitivity of the resulting clipped gradient. In particular, we do not need to make it robust as there is no corruption in the covariates. Ideally, we want to select the smallest threshold $\Theta$ that does not clip any of the covariates.  Since the norm of a covariate is  upper bounded by $\|x_i\|^2 \leq K^2 {\rm Tr}(\Sigma) \log^{2a}(1/\zeta)$ with probability $1-\zeta$ (Lemma~\ref{lemma:norm_a_tail}),  we estimate the unknown ${\rm Tr}(\Sigma)$ using Private Norm Estimator in Alg.~\ref{alg:norm} in App.~\ref{app:norm} and set the norm threshold $\Theta=K\sqrt{2\Gamma} \log^a(n/\zeta)$ (Alg.~\ref{alg:main} line~\ref{line:clip0}). The  $n$ in the logarithm ensures that the union bound holds. 

When clipping with ${\rm clip}_{\theta_t} (w_t^\top x_i-y_i )$, the purpose of clipping the residual by its magnitude, $|y_i-w_t^\top x_i|=|(w^*-w_t)^\top x_i + z_i|$, is to bound the sensitivity of the gradient and also to provide robustness against label-corruption. We want to choose a threshold that only clips corrupt data points and at most a few clean data points. In order to achieve an error $(1/\sigma) \|w_T-w^*\|_\Sigma = O(\alpha)$, we know that any set of $(1-\alpha)$ fraction of the clean data points is sufficient to get a good estimate of the average gradient, and we can find such a large enough set of points that satisfy  $|(w^*-w_t)^\top x_i + z_i|^2 \leq  (\|w_t-w^*\|_\Sigma^2 + \sigma^2) C K^2 \log^{2a}(1/(2\alpha))$ from  
Lemma~\ref{lemma:norm_a_tail}.  
At the same time, this threshold on the residual is small enough to guarantee robustness against the label-corrupted samples. 
We introduce the robust and private Distance Estimator in Alg.~\ref{alg:distance} to estimate the unknown (squared and shifted) distance, $\|w_t-w^*\|_\Sigma^2 + \sigma^2$, and set the distance threshold $\theta_t=2\sqrt{2\gamma_t}\sqrt{9C_2K^2\log^{2a}(1/(2\alpha))}$ (Alg.~\ref{alg:main} line~\ref{line:clip}). 
Both norm and distance estimation rely on private histogram (Lemma~\ref{lem:hist-KV17}), but over a set of statistics computed on  partitioned datasets, which we explain in detail in Sec.~\ref{sec:adaptive}. 

\begin{algorithm2e}  
   \caption{Robust and Private  Linear Regression}
   \label{alg:main}
   	\DontPrintSemicolon 
	\KwIn{ $S=\{(x_i, y_i)\}_{i=1}^{3n}$, DP parameters  $(\varepsilon ,\delta )$, $T$, learning rate $\eta$, failure probability $\zeta$,  target error  $\alpha$, distribution parameter $(K,a)$}
	\SetKwProg{Fn}{}{:}{}
	{ 
	Partition dataset $S$ into three equal sized disjoint subsets $S =  S_1\cup S_2\cup S_3  $. \\
	$\delta_0\gets\frac{\delta}{2T}$, $\varepsilon_0\gets\frac{\varepsilon}{4\sqrt{T\log(1/\delta_0)}}$, $\zeta_0\gets \frac{\zeta}{3}$, 	$w_0\gets 0$\\
	$\Gamma\gets {\rm Private Norm Estimator}(S_1, \varepsilon_0, \delta_0, \zeta_0)$ 
 \tcc{using Algorithm~\ref{alg:norm}, Appendix~\ref{app:norm}}
    $\Theta\gets K\sqrt{2\Gamma}\log^{a}(n/\zeta_0)$\label{line:clip0}\\
	\For{$t=0,1, 2, \ldots, T-1$}{ 
	$ \gamma_t \gets {\rm Robust Private Distance Estimator}(S_2, w_t, \varepsilon_0, \delta_0, \alpha, \zeta_0)$ 
  \tcc{using Algorithm~\ref{alg:distance}}
	$\theta_t \gets 2 \sqrt{2\gamma_t}  \cdot\sqrt{9C_2K^2\log^{2a}(1/(2\alpha ) )}$.\label{line:clip}\\
	Sample $\nu_t \sim \cN\left(0, \mathbf{I}_d\right)$\\
 $\tilde{g}_i^{(t)}\gets {\rm clip}_{\Theta}(x_i){\rm clip}_{\theta_t}(x_i^\top w_t-y_i)$\\
 $\phi_t=(\sqrt{2\log(1.25/\delta_0)}\Theta\theta_t)/(\varepsilon_0 n)$\\
	$w_{t+1}\gets   w_{t}-\eta\left(\frac{1}{n}\sum_{i\in S_3}\tilde{g}_i^{(t)}+\phi_t\nu_t\right)	$ \label{line:sgd}\\
	}
	Return $w_T$
	}
\end{algorithm2e}
\subsection{Analysis} 
\label{sec:analysis}
We show that Algorithm~\ref{alg:main} achieves a near-optimal sample complexity. We provide a proof in Appendix~\ref{app:proof_main} and a sketch of the proof  in Section~\ref{sec:sketch}. We address the necessity of the assumptions in Sec.~\ref{sec:lb}, along with some lower bounds.

\begin{thm}
    \label{thm:main}
    Algorithm~\ref{alg:main} is $(\varepsilon, \delta)$-DP. 
Under $(\Sigma,\sigma^2,w^*,K,a)$-model of Assumption~\ref{asmp:distribution} and $\alpha_{\rm corrupt}$-corruption of Assumption \ref{asmp:corrupt} and for any failure probability $\zeta\in(0,1)$ and target error rate $\alpha \geq \alpha_{\rm corrupt}$, if the sample size is large enough such that 
	\begin{align}
	    n = 
	    \tilde O  \left(K^2d \log^{2a+1}\Big(\frac{1}{\zeta}\Big)+\frac{d+\log(1/\zeta)}{\alpha ^2}
	    +
	    \frac{K^2d T^{1/2}\log(\frac{1}{\delta})\log^{a}(\frac{1}{\zeta})}{\varepsilon \alpha}  \right) ,
	    \label{eq:main_n} 
	\end{align} 
	with a large enough constant where $\tilde O$ hides poly-logarithmic terms in $d$, $n$, and $\kappa$, then the choices of a  step size $ \eta =  1/(C\lambda_{\max}(\Sigma))$ for any $C \geq 1.1$ and the number of iterations,  
	    $T= \tilde\Theta\left(\kappa \log\left(\|w^*\|
	    \right)\right)\,$ for a condition number of the covariance $\kappa:=\lambda_{\rm max}(\Sigma)/\lambda_{\rm min}(\Sigma)$, 
ensures that, with probability $1-\zeta$, Algorithm~\ref{alg:main} achieves  
	\begin{align}
		 &\E_{\nu_1, \cdots, \nu_t\sim \cN(0, \mathbf{I}_d)}\big[\,\|  w_{T}-w^*\|_\Sigma^2\,\big] =
		 \tilde O\Big(\, K^4\sigma^2 \alpha^2\log^{4a}\Big(\frac{1}{\alpha}\,\Big)\,\Big)\;,
		 \label{eq:main} 
	\end{align}
	where the expectation is taken over the noise added for DP, and $\tilde\Theta(\cdot)$ hides logarithmic terms in $K,\sigma,d,n,1/\varepsilon,\log(1/\delta),1/\alpha$, and $\kappa$. 
\end{thm}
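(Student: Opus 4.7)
The argument splits cleanly into a privacy part and a utility part. For privacy, each round consists of a Gaussian mechanism on a gradient whose sensitivity is bounded deterministically by $\Theta \theta_t/n$ (because we clip $x_i$ by $\Theta$ and the residual by $\theta_t$ before averaging), plus calls to the Private Norm Estimator and the Robust Private Distance Estimator, all with parameters $(\varepsilon_0,\delta_0)$. Combining these $2T+1$ mechanisms by advanced composition (Lemma~\ref{lem:composition}) with $\varepsilon_0 = \varepsilon/(4\sqrt{T\log(1/\delta_0)})$ and $\delta_0 = \delta/(2T)$ yields the claimed $(\varepsilon,\delta)$-DP. Note that this part does not rely on any distributional assumption.

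For utility I would first fix a high-probability event on which (i) the private norm estimator returns $\Gamma = \Theta(\mathrm{Tr}(\Sigma))$ so that $\Theta = \Theta(K\sqrt{\mathrm{Tr}(\Sigma)}\log^a(n/\zeta))$ is large enough that no clean covariate gets clipped (using the sub-Weibull tail bound on $\|x_i\|$), (ii) the robust private distance estimator returns $\gamma_t = \Theta(\|w_t-w^*\|_\Sigma^2 + \sigma^2)$ uniformly across rounds, and (iii) a resilience event on the clean dataset $S_{\rm good} \cap S_3$ holds in the sense of Lemma~\ref{lemma:subweibull_res_conditions}, so that for every direction $v$ and every subset of $(1-\alpha)n$ clean points the empirical moments of $\langle v, x_i\rangle$ and $z_i$ are close to their population values. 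On this event, the residual threshold $\theta_t$ is simultaneously (a) large enough to leave all but an $\alpha$-fraction of the clean residuals unclipped, and (b) small enough, namely $O(K\sqrt{\gamma_t}\log^a(1/\alpha))$, that the $\alpha_{\rm corrupt}$-fraction of corrupted labels contribute at most an $\alpha \cdot \sqrt{\gamma_t}$ bias per coordinate to the clipped gradient average.

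Next I would write the one-step update in the template
\[
w_{t+1}-w^* \;=\; (I-\eta\Sigma)(w_t-w^*) \;-\; \eta\,b_t \;-\; \eta\,\phi_t\,\nu_t,
\]
where $b_t$ aggregates the resilience/clipping bias from clean samples, the adversarial bias from the $\alpha_{\rm corrupt}$ corrupted labels, and the bias introduced by replacing the population gradient by the empirical one. Using resilience, $\|b_t\|_{\Sigma^{-1}}$ is controlled by $O(\alpha \sqrt{\gamma_t})$ times logarithmic factors, and the Gaussian noise term has variance $\phi_t^2 \mathrm{Tr}(\Sigma^{-1}) = \tilde O(\Theta^2\theta_t^2 \log(1/\delta) T / (\varepsilon^2 n^2 \lambda_{\min}(\Sigma)))$ per step. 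The delicate part is that $w_t$, $b_t$, and $\gamma_t$ all depend on $S$, so a naive one-step-ahead analysis loses control over the correlations; this is precisely where the end-to-end expansion alluded to in Section~\ref{sec:standard} is needed.

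I would then unroll the recursion to get $w_T - w^* = (I-\eta\Sigma)^T(w_0-w^*) - \eta\sum_{i=0}^{T-1}(I-\eta\Sigma)^{T-1-i}(b_i+\phi_i\nu_i)$, measure in $\|\cdot\|_\Sigma$, and exploit the key spectral bound $\lambda_{\max}\!\bigl((\eta\Sigma)^{1/2}(I-\eta\Sigma)^i(\eta\Sigma)^{1/2}\bigr) \le 1/(i+1)$ to turn the $\eta\sum_i (I-\eta\Sigma)^{T-1-i}b_i$ term into a harmonic sum that saves a factor of $\kappa$ compared to the trivial geometric bound. Combining this with the geometric decay of the initial error after $T = \tilde\Theta(\kappa\log\|w^*\|)$ steps and bootstrapping the bound on $\gamma_t$ (which decreases as $\|w_t-w^*\|_\Sigma^2$ shrinks), the target error $\tilde O(K^4\sigma^2\alpha^2\log^{4a}(1/\alpha))$ follows once $n$ meets the stated lower bound. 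I expect the main obstacle to be making the bootstrap rigorous: the threshold $\gamma_t$ controls the clipping, which in turn controls the next $\|w_{t+1}-w^*\|_\Sigma^2$, which feeds back into $\gamma_{t+1}$, all under the resilience-based substitute for independence, and one must propagate a uniform-in-$t$ high-probability event through this feedback loop together with the $1/(i+1)$ spectral trick.
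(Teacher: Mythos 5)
Your plan follows the paper's proof essentially step for step: privacy via bounded sensitivity $\Theta\theta_t/n$ plus (parallel and advanced) composition; a high-probability event combining the norm estimator, the distance estimator, and the resilience of $S_{\rm good}\cap S_3$ (Lemma~\ref{lemma:subweibull_res_conditions}); a one-step decomposition into contraction, data noise, clipping bias, adversarial bias, and privacy noise, each controlled by resilience; and a full unrolling of the recursion combined with the spectral bound $\|\hat\Sigma^{1/2}\hat B^i\hat\Sigma^{1/2}\|_2\le \lambda_{\max}/(i+1)$ to convert the accumulated bias into a harmonic sum. All of that matches Steps~1--3 of the paper's argument in App.~\ref{app:proof_main} (the paper works with the empirical covariance $\hat\Sigma$ of the clean points and sandwiches $0.9\Sigma\preceq\hat\Sigma\preceq1.1\Sigma$, but that is cosmetic relative to your write-up).

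The one genuine gap is the piece you yourself flag as "the main obstacle" and then do not resolve. After unrolling and applying the $1/(i+1)$ bound, the resulting inequality has the form
\begin{align*}
\E\big[\|w_{t'+t}-w^*\|_{\hat\Sigma}^2\big] \;\le\; 2e^{-2t/\kappa}\,\E\big[\|w_{t'}-w^*\|_{\hat\Sigma}^2\big] \;+\; C(\log t)^2\hat\rho(\alpha)^2\Big(\max_{i\in[t]}\E\big[\|w_{t'+t-i}-w^*\|_{\hat\Sigma}^2\big]+\sigma^2\Big)\,,
\end{align*}
so the error at step $t'+t$ is controlled only through the \emph{maximum} expected error over the preceding window, and the per-step error need not be monotone. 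Bounding that maximum by the initial error would leave a floor proportional to $\|w^*\|^2$ rather than $\sigma^2\alpha^2$, so the "bootstrap" does not close by itself. The paper's resolution (Step~4) is a chunked induction: define $M_s:=\max_{i\in[(s-1)\kappa+1,\,s\kappa]}\E[\|w_i-w^*\|_{\hat\Sigma}^2]$ and show $M_{s+1}\le(e^{-2}+(\log\kappa)^2\hat\rho(\alpha)^2)M_s+(\log 2\kappa)^2\hat\rho(\alpha)^2\sigma^2$, so that under the smallness condition $(\log\kappa)^2\hat\rho(\alpha)^2\le 1/2-1/e^2$ the chunk maximum halves every $\kappa$ iterations until it reaches the noise floor $O((\log\kappa)^2\hat\rho(\alpha)^2\sigma^2)$; a separate argument bounds $M_1$ in terms of $\|w_0-w^*\|_{\hat\Sigma}^2$. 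You would need to add this (or an equivalent device) to turn your plan into a proof; everything else in your proposal is consistent with the paper.
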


{\bf Optimality.} Omitting some constant and logarithmic terms, Alg.~\ref{alg:main} requires 
\begin{eqnarray} 
n&=&\tilde O\Big( \,\frac{d}{\alpha_{}^2} +  \frac{\kappa^{1/2} d \log(1/\delta)}{\varepsilon\alpha_{}}   \,\Big)\;, \label{eq:main_n_simple}
\end{eqnarray} 
samples to ensure an error rate of ${\mathbb E}[\|w_{T}-w^*\|_\Sigma^2] = \tilde O(\sigma^2 \alpha^2) $ for any $\alpha\geq \alpha_{\rm corrupt}$. 
The lower bound on the achievable error of $\sigma^2\alpha^2 \geq \sigma^2\alpha^2_{\rm corrupt}$  is due to the label-corruption and cannot be improved, as it matches an information theoretic lower bound we provide in Proposition~\ref{propo:lb}. 
In the special case when  the covariate follows  a sub-Gaussian distribution, that is $(K, 1/2)$-sub-Weibull for a constant $K$, there is an $n=\Omega(d/\alpha^2 + d/(\eps\alpha))$ lower bound (\cite{cai2019cost}, Theorem 4.1), and our upper bound matches this lower bound up to a factor of $\kappa^{1/2}$ in the second term and other logarithmic factors. \eqref{eq:main_n_simple} is the best known rate among all efficient private linear regression algorithms, strictly improving upon existing methods when $\log(1/\delta)=\tilde{O}(1)$. We discuss some exponential time algorithms that closes the $\kappa^{1/2}$ gap in Sec.~\ref{sec:lb}.

{\bf Comparisons with the  state-of-the-art.} The best existing efficient algorithm by \cite{varshney2022nearly} can only handle the case where there is {\em no adversarial corruption}, and requires
$n=\tilde O(\kappa^2 d \sqrt{\log(1/\delta)} /\varepsilon + d/\alpha^2 + \kappa d \sqrt{\log(1/\delta)}/(\varepsilon\alpha))$ to achieve an error rate of $\sigma^2 \alpha^2$. Compared to   \eqref{eq:main_n_simple}, the first term dominates in its dependence in $\kappa$, which is a factor of $\kappa$ larger than \eqref{eq:main_n_simple}. The third term is larger by a factor of $\kappa^{1/2}$ but smaller by a factor of $\log^{1/2} (1/\delta)$, compared to the second term in  \eqref{eq:main_n_simple}. 

In the {\em non-private case}, when $\varepsilon=\infty$, a recent line of work has developed algorithms for linear regression that are robust to label corruptions~\citep{bhatia2015robust, bhatia2017consistent, suggala2019adaptive, dalalyan2019outlier}. Of these, \cite{bhatia2015robust, dalalyan2019outlier} are relevant to our work as they consider the same adversary model as Assumption~\ref{asmp:corrupt}. When $x_i$'s and $z_i$'s are sampled from $\mathcal{N}(0, \Sigma)$ and $\mathcal{N}(0,\sigma^2)$, \citet{dalalyan2019outlier} proposed a Huber loss based estimator that achieves error rate of $ \sigma^2 \alpha^2\log^2(n/\delta)$  when $n = \tilde{O}\left(\kappa^2 d/\alpha^2\right)$. Under the same setting, \citet{bhatia2015robust} propoased a hard thresholding based estimator that achieves $\sigma^2 \alpha^2$ error rate with $\tilde{O}\left(d/\alpha^2\right)$ sample complexity. Our results in Theorem~\ref{thm:main} match these rates, except for the sub-optimal dependence on $\log^{4a}(1/\alpha).$ Another line of work considered both label and covariate corruptions and developed optimal algorithms for parameter recovery~\citep{diakonikolas2019efficient,diakonikolas2019sever, prasad2018robust, pensia2020robust, cherapanamjeri2020optimal,jambulapati2020robust, klivans2018efficient,bakshi2021robust, zhu2019generalized, depersin2020spectral}. The best existing efficient algorithm , e.g. \citep{pensia2020robust}, achieves error rate of $\sigma^2\alpha^2\log(1/\alpha)$ when $n= \Tilde{O}\left(d/\alpha^2\right)$, and the  $x_i$ and $z_i$ are  sampled from $\mathcal{N}(0, I)$ and  $\mathcal{N}(0,\sigma^2)$. 

Under both privacy requirements and adversarial corruption, the only algorithm with a provable guarantee is an exponential time approach, known as High-dimensional Propose-Test-Release (HPTR), of {\citet[Corollary C.2]{liu2022differential}}, which achieves a  sample complexity of $n=O(d/\alpha^2 + (d+\log(1/\delta))/ (\varepsilon \alpha))$. Notice that there is no dependence on $\kappa$ and the $\log(1/\delta)$ term scales as $1/(\varepsilon \alpha)$ as opposed to $\kappa d^{1/2}/(\varepsilon \alpha)$ of \eqref{eq:main_n_simple}. It remains an open question if {\em computationally efficient} private linear regression algorithms can achieve such a $\kappa$-independent sample complexity. 
Further, HPTR is robust against a stronger adversary who corrupts the covariates also and not just the labels. Under this stronger adversary, it remains open if there is an efficient algorithm that achieves $n=O(d/\alpha^2+d/(\varepsilon\alpha))$ sample complexity  even for constant $\kappa$ and $\delta$.


\subsection{Lower bounds}
\label{sec:lb}

{\bf Necessity of our assumptions.} 
A tail assumption on the covariate $x_i$ such as Assumption~\ref{asmp:distribution} is necessary to achieve $n=O(d)$ sample complexity in \eqref{eq:main_n_simple}. Even when the covariance $\Sigma$ is close to identity, without further assumptions on the tail of covariate $x$, the result in~\citep{bassily2014private} implies that for $\delta<1/n$ and sufficiently large $n$, no $(\varepsilon,\delta)$-DP estimator can achieve excess risk $\|\hat w - w^*\|^2_\Sigma$ better than $\Omega(d^3/(\varepsilon^2n^2))$ (see Eq.~(3) in \citep{wang2018revisiting}). Note that this lower bound is a factor $d$ larger than our upper bound that benefits from the additional tail assumption.

A tail assumption on the noise $z_i$ such as Assumption~\ref{asmp:distribution} is necessary to achieve $n=O(d/(\varepsilon\alpha))$ dependence on the sample complexity in \eqref{eq:main_n_simple}. For heavy-tailed noise, such as $k$-th moment bounded noise, the dependence can be significantly larger. {\citet[Proposition C.5]{liu2022differential}} implies that for $\delta=e^{-\Theta(d)}$ and $4$-th moment bounded $x_i$ and $z_i$, any $(\varepsilon,\delta)$-DP estimator requires  $n=\Omega(d/(\varepsilon\alpha^2))$, which is a factor of  $1/\alpha$ larger, to achieve excess risk ${\mathbb E}[\|\hat w - w^*\|_\Sigma^2 ] = \tilde O(\sigma^2\alpha^2)$.

The assumption that only label is corrupted is critical for Algorithm~\ref{alg:main}. 
The average of the clipped gradients can be significantly more biased, if the adversary can place the covariates of the corrupted samples in the same direction. In particular, the bound on the bias of our gradient step  in \eqref{eq:ut3} in  App.~\ref{app:proof_main} would no longer hold.  
Against such strong attacks, one requires additional steps to estimate the mean of the gradients robustly and privately, similar to those used in robust private mean estimation \citep{liu2021robust,kothari2021private,hopkins2022efficient,ashtiani2022private}.  This is outside the scope of this paper.

{\bf Lower bounds under label  corruption.} 
Under the $\alpha_{\rm corrupt}$ label corruption setting (Assumption~\ref{asmp:corrupt}), even with infinite data and without privacy constraints, no algorithm is able to learn $w^*$ with $\ell_2$ error better than $\alpha_{\rm corrupt}$. We provide a formal derivation  for completeness. 

\begin{propo}\label{propo:lb}
Let $\cD_{\Sigma,\sigma^2, w^*,K,a}$ be a class of joint distributions on $(x_i, y_i)$ from $(\Sigma,\sigma^2, w^*,K,a)$-model in Assumption~\ref{asmp:distribution}. Let $S_{n,\alpha}$ be an $\alpha$-corrupted dataset of $n$ i.i.d. samples from some distribution $\cD\in \cD_{\Sigma,\sigma^2, w^*,K,a}$ under Assumption~\ref{asmp:corrupt}. Let $\cM$ be a class of estimators that are functions over the datasets $S_{n, \alpha}$. Then there exists a positive constant $c$ such that 
$$\min_{n, \hat{w}\in \cM} \; \;\max_{S_{n,\alpha}, \cD\in \cD_{\Sigma,\sigma^2, w^*,K,a}, w^*, K , a,} \;\; \E[\|\hat{w}-w^*\|_\Sigma^2] \geq  c\,  \alpha^2\, \sigma^2$$. 
\end{propo}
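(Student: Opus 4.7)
The plan is a Le Cam two-point argument. Since the statement takes a maximum over the distributional parameters, it suffices to exhibit one choice of $(\Sigma,\sigma^2,K,a)$ and two parameters $w_1^*,w_2^*$ on which the problem is already hard. We specialize to the Gaussian subfamily $x_i\sim\cN(0,\Sigma)$ and $z_i\sim\cN(0,\sigma^2)$, which is $(K,1/2)$-sub-Weibull for an absolute constant $K$ and hence sits inside $\cD_{\Sigma,\sigma^2,w^*,K,a}$. Along an arbitrary $\Sigma$-unit direction we place $w_1^*,w_2^*$ with $\|w_1^*-w_2^*\|_\Sigma=c_0\alpha\sigma$ for a small absolute constant $c_0$ to be fixed. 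Because the two instances share the $x$-marginal and differ only in the conditional law of $y\mid x$, a direct Gaussian KL calculation followed by Pinsker's inequality gives
\begin{equation*}
\mathrm{TV}(P_1,P_2)\;\le\;\sqrt{\tfrac{1}{2}\mathrm{KL}(P_1\Vert P_2)}\;=\;\tfrac{\|w_1^*-w_2^*\|_\Sigma}{2\sigma}\;=\;\tfrac{c_0\alpha}{2}.
\end{equation*}

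The second step is to exhibit an adversary strategy that makes the two corrupted experiments identical in law, so that no test statistic can separate them. We argue first in the i.i.d.\ (Huber) label-corruption model: for any joint distribution $Q$ with $Q_x=P_x$ satisfying $Q\ge(1-\alpha)\max(P_1,P_2)$ pointwise, setting $R_j:=(Q-(1-\alpha)P_j)/\alpha$ produces a valid corruption kernel (nonnegative, integrating to one, with $x$-marginal $P_x$, so only labels are touched), and $(1-\alpha)P_j+\alpha R_j\equiv Q$ for $j=1,2$. Such a $Q$ exists iff $(1-\alpha)(1+\mathrm{TV}(P_1,P_2))\le 1$, i.e., $\mathrm{TV}(P_1,P_2)\le\alpha/(1-\alpha)$, which is comfortably satisfied by taking $c_0=1$. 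To transfer the Huber-model bound to the deterministic fixed-count model of Assumption~\ref{asmp:corrupt}, we invoke a Chernoff bound: Huber with parameter $\alpha/2$ produces at most $\alpha n$ corrupted indices with probability $1-o(1)$, so its output is realizable by a deterministic $\alpha n$-corruption adversary (who is strictly stronger, since it can peek at the whole dataset), and the lower bound transfers with only an absolute-constant loss.

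Given that the two corrupted experiments are coupled to identical laws, Le Cam's inequality immediately yields $\inf_{\hat w}\max_{j\in\{1,2\}}\E[\|\hat w-w_j^*\|_\Sigma^2]\ge\tfrac12(\|w_1^*-w_2^*\|_\Sigma/2)^2=c\,\alpha^2\sigma^2$ for an absolute constant $c>0$, which is independent of $n$ and delivers the claim. The main conceptual hurdle is the second step: checking that the $x$-marginal preservation inherent to label-only corruption does not obstruct the common-distribution construction. This turns out to be automatic because $P_1$ and $P_2$ already share the $x$-marginal, so the ``mass-ceiling'' $Q$ inherits the property and the induced kernels $R_j$ act only on the $y$-coordinate. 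Everything else reduces to routine Gaussian computation and constant tracking.
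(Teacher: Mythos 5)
Your overall strategy is the same as the paper's: a two\-/point Le Cam argument on Gaussian instances whose joint laws share the $x$-marginal and are at total variation distance $O(\alpha)$ (via a KL computation plus Pinsker), combined with the observation that a label-only adversary with budget $\alpha n$ can erase such a TV gap, so the two corrupted experiments are indistinguishable and the risk is at least $\Omega(\|w_1^*-w_2^*\|_\Sigma^2)=\Omega(\alpha^2\sigma^2)$. The paper instantiates this with a concrete one-dimensional pair ($w^*=0$ versus $w^*=\alpha$, realized as bivariate Gaussians with covariances $\mathbf{I}$ and off-diagonal $\alpha$) and corrupts $P_1$ toward $P_2$ directly; you keep $\sigma$ fixed across the two hypotheses, work with a general $\Sigma$, and meet in the middle at a common law $Q$. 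These differences are cosmetic, and your KL/Pinsker and Le Cam steps are correct.

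There is, however, a genuine flaw in your construction of the common law $Q$. You require $Q\ge(1-\alpha)\max(P_1,P_2)$ pointwise \emph{and} $Q_x=P_x$, and you claim feasibility is equivalent to $(1-\alpha)(1+\mathrm{TV}(P_1,P_2))\le 1$. That equivalence ignores the marginal constraint: integrating the pointwise inequality over $y$ at fixed $x$ shows feasibility forces $\mathrm{TV}\left(P_1(\cdot\mid x),P_2(\cdot\mid x)\right)\le \alpha/(1-\alpha)$ for almost every $x$, not merely on average. For your Gaussian instance the conditional total variation distance depends on $|x^\top(w_1^*-w_2^*)|$ and tends to $1$ as this quantity grows, so the required $Q$ does not exist; the set of covariates violating the pointwise condition has small but positive probability under $\cN(0,\Sigma)$. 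The repair is standard and is essentially what the paper does implicitly: since the $x$-marginals agree, $\mathrm{TV}(P_1,P_2)=\E_x[\mathrm{TV}(P_1(\cdot\mid x),P_2(\cdot\mid x))]\le \alpha/2$, so corrupting via the conditional maximal coupling of $P_1(\cdot\mid x_i)$ and $P_2(\cdot\mid x_i)$ changes each label independently with probability equal to the conditional TV at $x_i$, hence at most $\alpha n$ labels in total with high probability by the Chernoff bound you already invoke (the complementary event is absorbed into the testing error). The adversary's budget is global, not per covariate value, which is exactly why the average rather than the pointwise condition is the right one. With that one step replaced, your argument goes through and coincides with the paper's proof.
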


A proof is provided in Appendix~\ref{sec:label-lower-bounds}. A similar lower bound can be found in {\citet[Theorem 6.1]{bakshi2021robust}}.

\section{Adaptive clipping for the gradient norm} 
\label{sec:adaptive}

In the ideal clipping thresholds for the norm and the residual, there are unknown terms which we need to estimate adaptively, 
$(\|w_t-w^*\|_{\Sigma}^2 + \sigma^2)$ and ${\rm Tr}(\Sigma)$, up to  constant multiplicative errors. 
 We privately estimate the (squared and shifted) distance to optimum, $(\|w_t-w^*\|_{\Sigma}^2 + \sigma^2)$, with Alg.~\ref{alg:distance} and privately estimate the average input norm, ${\mathbb E}[\|x_i\|^2] = {\rm Tr}(\Sigma) $, with Alg.~\ref{alg:norm} in App.~\ref{app:norm}.
These are used to get the clipping thresholds in Alg.~\ref{alg:main}. 
We propose a trimmed mean approach below for distance estimation.  
The norm estimator is similar and is provided in App.~\ref{app:norm}.

\medskip\noindent{\bf Private distance estimation using private trimmed mean.} 
The goal is to estimate the (shifted) distance to optimum, $\|w_t-w^* \|_\Sigma^2 + \sigma^2$, up to some constant multiplicative error. 
Note that this is precisely the task of estimating the variance of the residual $b_i = y_i-w_t^\top x_i$. When there is no adversarial corruption and no privacy constraint, we can simply use the empirical variance estimator $(1/n) \sum_{i\in[n]}(y_i-w_t^\top x_i)^2$ to obtain a good estimate. However, the empirical variance estimator is not robust against adversarial corruptions since one outlier can make the estimate arbitrarily large. A classical idea is using the \textit{trimmed  estimator} from~\citep{tukey1963less}, which throws away the $2\alpha$ fraction of residuals $b_i$ with the largest magnitude. For datasets with resilience property as assumed in this paper, this will guarantee an accurate estimate of the distance to optimum in the presence of $\alpha$ fraction of corruptions.

To make the estimator private, it is tempting to simply add a Laplacian noise to the estimate. However, the sensitivity of the trimmed estimator is unknown and depends on the distance to the optimum that we aim to estimate; this makes it challenging to determine  the  variance of the Laplacian noise we add. Instead, we propose to partition the dataset into $k$ batches, compute an estimate for each batch, and form a histogram with over those $k$ estimates. Using a private histogram mechanism with geometrically increasing bin sizes, we propose using the bin with the most estimates to guarantee a constant factor approximation of the distance to the optimum. We describe the algorithm as follows.




\begin{algorithm2e}  
   \caption{Robust and Private Distance Estimator} 
   \label{alg:distance} 
   	\DontPrintSemicolon 
	\KwIn{$S_2 = \{(x_i, y_i)\}_{i=1}^n$, current  $w_t$,   $(\varepsilon_0,\delta_0)$,  $\bar\alpha$,  $\zeta$}
	\SetKwProg{Fn}{}{:}{}
	{
	Let $b_i\gets (y_i-w_t^\top x_i)^2$,  $\forall i\in[n]$ and  $\tilde{S}\gets \{b_i\}_{i=1}^n$.\\
	Partition $\tilde{S}$ into $k=\lceil C_1\log(1/(\delta_0\zeta))/\varepsilon_0 \rceil$ subsets of equal size and let $G_j$ be the $j$-th partition.\\
	For $j\in [k]$, denote $\psi_j$ as the $(1-3\bar{\alpha})$-quantile of $G_j$ and  $\phi_j \gets \frac{1}{|G_j|} \sum_{i\in G_j} b_i\mathbf{1}\{b_i\le \psi_j\}$.\\	
	Partition $[0, \infty)$ into  geometrically increasing intervals $\Omega:= \left\{\ldots,\left[2^{-1}, 1\right),\left[ 1, 2\right),\left[2,2^2\right), \ldots\right\} \cup\{[0,0]\}$   \\
	
	Run $(\varepsilon_0, \delta_0)$-DP histogram  of Lemma~\ref{lem:hist-KV17} on $\{ \phi_j\}_{j=1}^k$ over $\Omega$ \\
	{\bf if} all the bins are empty {\bf then} 
	Return $\perp$\\
	Let $[\ell, r]$ be a non-empty bin that contains the maximum number of points in the DP histogram\\
	\Return  $ \ell$
	} 
\end{algorithm2e}
This algorithm gives an estimate of the distance up to a constant multiplicative error as we show in the following theorem. 
We provide a proof in App.~\ref{app:proof_distance}. 
\begin{thm}
\label{thm:distance}
	 Algorithm~\ref{alg:distance} is $(\varepsilon_0, \delta_0)$-DP. 
	 For an $\alpha_{\rm corrupt}$-corrupted dataset $S_2$ and an upper bound $\bar\alpha$ on $\alpha_{\rm corrupt}$ that satisfy  Assumption~\ref{asmp:distribution} and  $37C_2K^2\cdot \bar{\alpha}\log^{2a}(1/(6\bar{\alpha}))\le 1/4$ and any $\zeta\in (0,1)$, if 
    \begin{align}
n=O\left(\frac{(d+\log((\log(1/(\delta_0\zeta)))/\varepsilon_0\zeta))(\log(1/(\delta_0\zeta)))}{\bar{\alpha}^2 \varepsilon_0}
		\right), \label{eq:distance}
	\end{align} 
	with a large enough constant then, with probability $1-\zeta$,   Algorithm~\ref{alg:distance} returns $\ell$ such that 
$
	\frac{1}{4}(\|w_t-w^*\|_{\Sigma}^2+\sigma^2) \;\leq\; \ell\; 
	\leq\; 
	 4(\|w_t-w^*\|_{\Sigma}^2+\sigma^2)
	 $.
\end{thm}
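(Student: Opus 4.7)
I would first note that each sample $(x_i,y_i)\in S_2$ is contained in exactly one partition $G_j$, so each per-batch statistic $\phi_j$ is a function of a disjoint slice of the data. Thus a one-record change in $S_2$ modifies exactly one coordinate of $\{\phi_j\}_{j=1}^k$, which makes the inputs to the private histogram neighboring. Since the DP histogram of Lemma~\ref{lem:hist-KV17} is $(\varepsilon_0,\delta_0)$-DP and is the only mechanism that touches the data, the whole algorithm is $(\varepsilon_0,\delta_0)$-DP by post-processing.

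\textbf{Utility: reducing to a counting argument over batches.} Write $L:=\|w_t-w^*\|_\Sigma^2+\sigma^2$ and $b_i:=(y_i-w_t^\top x_i)^2$, so that on the uncorrupted distribution $\E[b_i]=L$ and each $b_i$ is a squared sub-Weibull. The plan is to show that (i) at least $2k/3$ of the batches are ``mildly corrupted'' in the sense of containing at most $3\bar\alpha|G_j|$ adversarial samples, and (ii) every mildly corrupted batch produces $\phi_j\in[L/2,2L]$. Part (i) is pure pigeonhole: since there are only $\alpha_{\rm corrupt}n\le\bar\alpha n$ corrupt samples overall, fewer than $k/3$ batches can each contain more than $3\bar\alpha|G_j|$ of them. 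For (ii), the trimming at the $(1-3\bar\alpha)$-quantile removes every adversarially inflated residual in a mildly corrupted batch (and if the adversary instead shrank residuals, those corrupted entries lie below clean large residuals and are harmless in a trimmed mean); the surviving indices form an $(1-6\bar\alpha)$-fraction sub-batch of clean samples.

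\textbf{Per-batch concentration via resilience.} The remaining step in (ii) is showing that a trimmed mean over at least a $(1-6\bar\alpha)$-fraction of clean samples of $b_i$ lies in $[L/2,2L]$. This is where I would invoke Lemma~\ref{lemma:subweibull_res_conditions}: sub-Weibull resilience bounds the deviation of any sub-population mean by $C_2K^2\bar\alpha\log^{2a}(1/\bar\alpha)\cdot L$, and the hypothesis $37C_2K^2\bar\alpha\log^{2a}(1/(6\bar\alpha))\le 1/4$ is exactly what makes this deviation at most $L/2$ (so that even with an adversarial choice of which clean samples are trimmed, $\phi_j\in[L/2,2L]$). For this resilience to hold on the empirical distribution of each batch, I need uniform concentration of the empirical covariance of $(w^*-w_t)^\top x_i+z_i$ over $|G_j|=n/k$ samples, which, by the standard sub-Weibull covariance bound, requires $n/k=\tilde\Omega(d/\bar\alpha^2)$; a union bound over $k$ batches contributes an extra $\log(k/\zeta)$ term. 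Multiplying through by $k=\lceil C_1\log(1/(\delta_0\zeta))/\varepsilon_0\rceil$ gives the stated sample complexity.

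\textbf{Decoding the histogram bin.} Given that $\ge 2k/3$ values $\phi_j$ lie in $[L/2,2L]$, and the dyadic bins $\Omega$ have geometric ratio $2$, the interval $[L/2,2L]$ intersects at most three bins; so some bin contains at least $2k/9$ true counts. Choosing $C_1$ large in the definition of $k$ makes $2k/9$ strictly exceed the $O(\log(1/(\delta_0\zeta))/\varepsilon_0)$ error of the DP histogram (Lemma~\ref{lem:hist-KV17}), so with probability $1-\zeta$ the bin $[\ell,r]$ returned by the algorithm has positive true count and intersects $[L/2,2L]$. Since $r=2\ell$, this forces $\ell\in[L/4,2L]\subseteq[L/4,4L]$, which is the claimed multiplicative guarantee. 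The principal obstacle is the resilience step: one must simultaneously (a) justify that a single-sided top-quantile trim neutralizes arbitrary two-sided label corruption and (b) pin the resulting trimmed mean to within a factor $2$ of $L$ using the tight sub-Weibull resilience constant, which is why the hypothesis on $\bar\alpha$ is engineered so carefully.
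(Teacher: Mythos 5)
Your overall architecture (disjoint batches, per-batch trimmed mean, resilience, private histogram, max-bin decoding) matches the paper's, and your privacy argument is the same as theirs. But there is a genuine gap in how you handle the corrupted batches, and it breaks the decoding step. Your pigeonhole argument only guarantees that at least $2k/3$ of the batches contain at most a $3\bar\alpha$ fraction of corrupted points; the remaining (up to) $k/3$ batches can be heavily corrupted, and for such a batch the trimmed mean $\phi_j$ can be made arbitrary --- the trim only removes the top $3\bar\alpha$ fraction, so if, say, a $10\bar\alpha$ fraction of residuals in a batch are inflated to a huge value $M$, a $7\bar\alpha$ fraction of them survive and drive $\phi_j$ to roughly $7\bar\alpha M$. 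An adversary can therefore concentrate its budget into $k/3$ batches and place all of their $\phi_j$ into a single far-away histogram bin, giving that bin $k/3$ true counts. Meanwhile your window $[L/2,2L]$ has ratio $4$ and can straddle three dyadic bins, so the best "good" bin is only guaranteed $2k/9 < k/3$ counts. The algorithm returns the bin with the maximum (noisy) count, so it can return the adversarial bin; your conclusion that the returned bin intersects $[L/2,2L]$ does not follow.

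The paper closes exactly this hole differently: it treats the partition as uniformly random and uses a hypergeometric tail bound to show that, with probability $1-\zeta_0$, \emph{every} batch contains at most a $2\bar\alpha$ fraction of corrupted points (this is where the $\log(1/(\delta_0\zeta))\log(1/\zeta)/(\bar\alpha\varepsilon_0)$ piece of the sample complexity comes from). It then proves the sharper per-batch bound $|\phi_j - L|\le 37C_2K^2\bar\alpha\log^{2a}(1/(6\bar\alpha))\,L\le L/4$, so all $k$ values $\phi_j$ lie in $[\tfrac34 L,\tfrac54 L]$, whose endpoint ratio $5/3<2$ forces them into at most two consecutive bins; the max-count bin is then necessarily one of these two, and the factor-$4$ guarantee follows. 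To repair your proof you would need either (a) to import the random-partition/hypergeometric step so that no batch is heavily corrupted, or (b) to change the decoding analysis so it tolerates a $1/3$ fraction of adversarial $\phi_j$ values, which the "maximum bin" rule as written does not. Your per-batch resilience computation and the final bin-to-$\ell$ containment are otherwise consistent with the paper's.
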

Note that in Theorem~\ref{thm:distance}, we only need to estimate distance up to a constant multiplicative error, as opposed to an error that depends on our final end-to-end desired level $\alpha$. Consequently, we require smaller sample complexity (that doesn’t depend on $\alpha$) than other parts of our approach.
\begin{remark}
	While DP-STAT (Algorithm 3 in \cite{varshney2022nearly}) can also be used to estimate $\|w_t-w^*\|_\Sigma+\sigma$ (and it would not change the ultimate sample complexity in its dependence on $\kappa$, $d$, $\varepsilon$, and $n$), there are three important improvements we make: $(i)$ DP-STAT requires the knowledge of $\|w^*\|_\Sigma+\sigma$; $(ii)$ our utility guarantee has improved dependence in $K$ and $\log^{2a}(n)$; and $(iii)$  Algorithm~\ref{alg:distance} is robust against label corruption.
\end{remark}

{\bf Upper bound on clipped good data points.} Using the above estimated distance to the optimum in selecting a threshold $\theta_t$, we also need to ensure that we do not clip too many clean data points. The tolerance in our algorithm to reach the desired level of accuracy is clipping $O(\alpha)$ fraction of clean data points. This is ensured by the following lemma, 
and we provide a proof in Appendix~\ref{app:proof_clipping_fraction}.

\begin{lemma}
\label{lemma:clipping_fraction}
Under Assumption~\ref{asmp:distribution} and for all $t\in [T]$, if $
	\theta_t  \geq \sqrt{9C_2K^2\log^{2a}(1/(2\alpha))} \cdot \left(\|w^*-w_t\|_\Sigma+\sigma\right)$ 
then  
$
	\left|\left\{i\in S_3\cap S_{\rm good}: \left|w_t^\top x_i-y_i\right|\geq \theta_t\right\}\right|  \leq  \alpha n$.

\end{lemma}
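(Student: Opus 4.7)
The plan is to reduce the statement to a scalar sub-Weibull tail estimate on the clean residual, and then invoke the resilience conditions of Lemma~\ref{lemma:subweibull_res_conditions} to obtain a count bound that is uniform over the data-dependent iterate $w_t$. For each $i \in S_3 \cap S_{\rm good}$, the clean residual decomposes as $w_t^\top x_i - y_i = (w_t - w^*)^\top x_i - z_i$. Under Assumption~\ref{asmp:distribution}, this has mean zero and, using $\E[x_i z_i]=0$, second moment exactly $\|w_t - w^*\|_\Sigma^2 + \sigma^2 \leq (\|w_t - w^*\|_\Sigma + \sigma)^2$. Applying Definition~\ref{def:a_tail} with the direction $v = w_t - w^*$, the scalar $(w_t - w^*)^\top x_i$ is $(K,a)$-sub-Weibull with scale $K\|w_t - w^*\|_\Sigma$, and by hypothesis $z_i$ is $(K,a)$-sub-Weibull with scale $K\sigma$.

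Next I would split the threshold as $\theta_t \geq \theta_t^{(x)} + \theta_t^{(z)}$ where $\theta_t^{(x)} := 3\sqrt{C_2}\, K\|w_t - w^*\|_\Sigma \log^{a}(1/(2\alpha))$ and $\theta_t^{(z)} := 3\sqrt{C_2}\, K\sigma \log^{a}(1/(2\alpha))$. This split is consistent with the hypothesis since $\theta_t^{(x)} + \theta_t^{(z)} = \sqrt{9C_2 K^2 \log^{2a}(1/(2\alpha))}\,(\|w_t - w^*\|_\Sigma + \sigma) \leq \theta_t$. By the triangle inequality, $|w_t^\top x_i - y_i| \geq \theta_t$ forces either $|(w_t - w^*)^\top x_i| \geq \theta_t^{(x)}$ or $|z_i| \geq \theta_t^{(z)}$, so it suffices to bound the count in each of these two events by $\alpha n/2$ and then add them.

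The main obstacle is that $w_t$ depends on $S_3$ through the gradient updates, so a naive Chernoff argument on a fixed Bernoulli product does not apply. To get around this, I would invoke Lemma~\ref{lemma:subweibull_res_conditions}, whose resilience statement is uniform over directions and holds with high probability over the sample draw: for every $v \in \R^d$, the number of $i \in S_{\rm good}$ with $|\langle v, x_i\rangle|^2 \geq 9 C_2 K^2 \|v\|_\Sigma^2 \log^{2a}(1/(2\alpha))$ is at most $\alpha n/2$, and an analogous one-dimensional bound holds for the $z_i$'s. The constant $9C_2$ in the lemma hypothesis is calibrated precisely so that, after splitting the threshold into the $x$-part and the $z$-part, each resulting count lands at $\alpha n/2$ and their sum yields the stated $\alpha n$ bound. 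Applying the uniform resilience bound to $v = w_t - w^*$ simultaneously for all $t \in [T]$ completes the argument; since resilience is established once and for all on the draw of $S$, no additional per-iteration union bound is needed.
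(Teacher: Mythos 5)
Your decomposition and threshold split mirror the paper's proof exactly: the paper likewise writes $w_t^\top x_i - y_i = (w_t-w^*)^\top x_i - z_i$, bounds separately the number of clean points with large $|\langle w_t-w^*, x_i\rangle|$ and with large $|z_i|$ by $\alpha n/2$ each, and obtains uniformity over the data-dependent iterate $w_t$ from the resilience of Lemma~\ref{lemma:subweibull_res_conditions} rather than from a fixed-direction tail bound. The one step you assert rather than prove is the conversion of resilience into a tail count: Lemma~\ref{lemma:subweibull_res_conditions} only controls \emph{averages} of $\langle v, x_i\rangle^2$ (and of $z_i^2$) over subsets of size at least $(1-\alpha)n$; it does not directly say that at most $\alpha n/2$ points satisfy $\langle v, x_i\rangle^2 \geq 9C_2K^2\log^{2a}(1/(2\alpha))\, v^\top \Sigma v$. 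That implication is precisely the content of the paper's written argument: one supposes for contradiction that more than an $\alpha/2$ fraction of $S_{\rm good}$ exceeds the level $(8C_2K^2\log^{2a}(1/(2\alpha))+1)\, v^\top\Sigma v$, lower-bounds the average of $\langle v,x_i\rangle^2$ over all of $S_{\rm good}$ by the resilience bound on the bottom $(1-\alpha/2)$ fraction plus the contribution of the exceeding points, and finds this strictly exceeds $(1+2C_2K^2\alpha\log^{2a}(1/(2\alpha)))\,v^\top\Sigma v$, contradicting resilience applied to $S_{\rm good}$ itself; the condition $C_2K^2\log^{2a}(1/(2\bar\alpha))\geq 1$ from Assumption~\ref{asmp:corrupt} then absorbs the $+1$ into the stated $9C_2K^2$ constant. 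With that short Markov-type step filled in, your argument coincides with the paper's.
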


\section{Experimental results} 
\label{sec:exp}

\begin{figure*}
    \centering
    \includegraphics[scale=0.25]{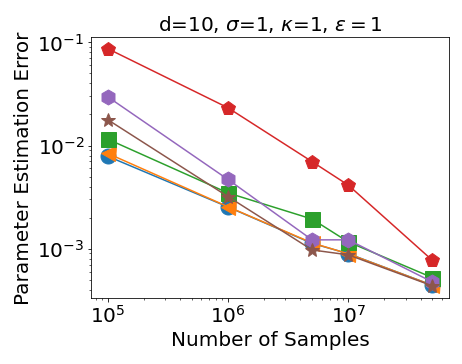}
    \includegraphics[scale=0.25]{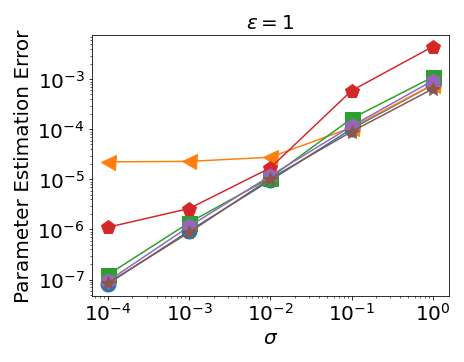}
    \includegraphics[scale=0.25]{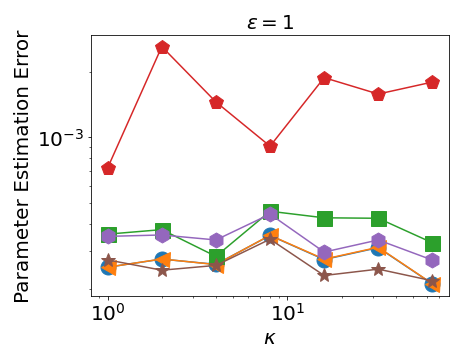}
    \includegraphics[scale=0.3]{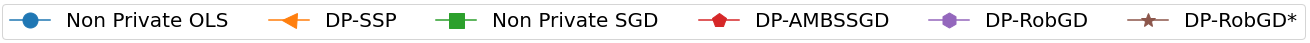}
    
    \vspace{-0.1in}\caption{Performance of various techniques on DP linear regression. $d=10$ in all the experiments. $n=10^7, \kappa=1$ in the $2^{nd}$ experiment. $n=10^7, \sigma=1$ in the $3^{rd}$ experiment, where $\kappa$ is the condition number of  $\Sigma$ and $\sigma^2$ is the variance of the label noise $z_i$.} 
    \label{fig:dp_regression}
    \vspace{-0.1in}
\end{figure*}

  \subsection{DP Linear Regression}
We present experimental results comparing our proposed technique ($\dprobgd$) with other baselines. We  consider non-corrupted regression in this section and defer corrupted regression to the App.~\ref{appendix:experiments}. We begin by describing the problem setup and the baseline algorithms first.  \\
\textbf{Experiment Setup.} We generate data for all the experiments using the following generative model. The parameter vector $w^*$ is uniformly sampled from the surface of a unit sphere. The covariates $\{x_i\}_{i=1}^n$ are first sampled from $\mathcal{N}(0, \Sigma)$ and then projected to unit sphere. We consider diagonal covariances $\Sigma$ of the following form: $\Sigma[0,0] = \kappa$, and $\Sigma[i,i] = 1$ for all $i\geq 1.$  Here $\kappa\geq 1$ is the condition number of $\Sigma$. We generate noise $z_i$ from uniform distribution over $[-\sigma, \sigma].$ Finally, the response variables are generated as follows $y_i =  x_i^\top  w^* + z_i.$ All the experiments presented below are repeated $5$ times and the averaged results are presented.  We set the DP parameters $(\epsilon, \delta)$ as $\epsilon = 1, \delta = \min(10^{-6}, n^{-2}).$ Experiments for $\epsilon=0.1$ can be found in  Fig.~\ref{fig:dp_regression_more} in the App.~\ref{appendix:experiments}.
\\
\textbf{Baseline Algorithms.} We compare our estimator with the following baseline algorithms: 
\begin{itemize}[leftmargin=*]
    \vspace{-0.2cm}\item \textit{Non private algorithms:} ordinary least squares ($\ols$), one-pass stochastic gradient descent with tail-averaging ($\sgd$). For $\sgd$, we use a constant step-size of $1/(2\lambda_{\text{max}})$ with $n/T$ minibatch size, where $T = 3\kappa \log{n}$. 
    \vspace{-0.2cm}\item \textit{Private algorithms:} sufficient statistics perturbation ($\ssp$)~\citep{foulds2016theory, wang2018revisiting}, differentially private stochastic gradient descent ($\dpsgd$)~\citep{varshney2022nearly}. $\ssp$ had the best empirical performance among numerous techniques studied by \citet{wang2018revisiting}, and $\dpsgd$ has the best known theoretical guarantees. The $\ssp$ algorithm involves releasing $X^TX$ and $X^T\mathbf{y}$ differentially privately and computing $(\widehat{X^TX})^{-1}\widehat{X^T\mathbf{y}}$. $\dpsgd$ is a private version of SGD where the DP noise is set adaptively according to the excess error in each iteration. For both the algorithms, we use the hyper-parameters recommended in their respective papers. To improve the performance of $\dpsgd,$ we reduce the clipping threshold recommended by the theory by a constant factor. 
\end{itemize}
\textbf{$\dprobgd$.} We implement Algorithm~\ref{alg:main} with the following key changes. Instead of relying on ${\rm Private Norm Estimator}$ to estimate $\Gamma$, we set it to its true value $\Tr(\Sigma).$  This is done for a fair comparison with $\dpsgd$ which assumes the knowledge of $\Tr(\Sigma).$ Next, we use $20\%$ of the samples to compute $\gamma_t$ in line 5 (instead of the $50\%$ stated in Algorithm~\ref{alg:main}). In our experiments we also present results for a variant of our algorithm called $\dprobgdstar$ which outputs the best iterate based on $\gamma_t$, instead of the last iterate. One could also perform tail-averaging instead of picking the best iterate. Both these modifications are primarily used to reduce the variance in the output of Algorithm~\ref{alg:main} and achieved similar performance in our experiments. \vspace{0.1in}\\
 \textbf{Results.}  Figure~\ref{fig:dp_regression}  presents the performance of various algorithms as we vary $n, \kappa, \sigma$. It can be seen that $\dprobgd$ outperforms $\dpsgd$ in almost all the settings (and $\dprobgdstar$ outperforms $\dprobgd$ in all cases). $\ssp$ has poor performance when the noise $\sigma$ is low, but performs slightly better than $\dprobgd$ in other settings. A major drawback of $\ssp$ is its computational complexity which scales as $O(nd^2 + d^{\omega})$. In contrast, the computational complexity of $\dprobgd$ has smaller dependence on $d$ and scales as $\Tilde{O}(nd\kappa).$ Thus the latter is more computationally efficient for high-dimensional problems. 
More experimental results on both robust and private linear regression can be found in the App.~\ref{appendix:experiments}.
\section{Sketch of the main ideas in the analysis}
\label{sec:sketch}

 We provide the main ideas behind the proof of Theorem~\ref{thm:main}. 
The privacy proof is straightforward since no matter what clipping threshold we use the noise we add is always proportionally to the clipping threshold which guarantees privacy. In the remainder, we focus on the utility analysis. 

The proof of the utility heavily relies on the \textit{resilience}~\citep{steinhardt2017resilience} (also known as \textit{stability}~\citep{diakonikolas2019recent}), which states that given a large
enough sample set $S$, various statistics (for example, sample mean and sample variance) of any large enough subset of $S$ will be close to each other. We  define  resilience in App.~\ref{app:res}. 


The main effort for proving Theorem~\ref{thm:main} lies  in the analysis of the gradient descent algorithm. Without clipping and adding noise for differential privacy, convergence property of gradient descent for linear regression is well known. The convergence proof of noisy gradient descent is also relatively straightforward. However, our algorithm requires both clipping and adding noise for robustness and privacy purposes. The key difference between our setting and the classical setting is the existence of adversarial bias and random noise in the gradient. We give an overview of the proof of our robust and private gradient descent as follows. 

First, we introduce some notations. Let ${g}_i^{(t)}:=(x_i^\top w_t-y_i)x_i$ be the raw gradient. Note that when the data follows from our distributional assumption, uncorrupted samples are not clipped: ${\rm clip}_\Theta(x_i)=x_i$ for $i\in S_{\rm good}$. Let $G:=S_{\rm good}\cap S_3=S_3\setminus S_{\rm bad}$ denote the clean data that remains in the input dataset. We can write down one step of gradient update as follows:
\begin{align*}
	&w_{t+1}-w^* 
	\;\;=\;\; w_{t}-\eta\left(\frac{1}{n}\sum_{i\in S}\tilde{g}_i^{(t)}+\phi_t\nu_t\right) -w^*\nonumber\\
	=&\left(\mathbf{I}-\frac{\eta}{n}\sum_{i\in G}x_ix_i^\top\right)(w_{t}-w^*)+\frac{\eta}{n}\sum_{i\in G}x_iz_i+
	\frac{\eta}{n}\sum_{i\in G}(g_i^{(t)}-\tilde{g}_i^{(t)})
	-\frac{\eta}{n}\sum_{i\in S_{\rm bad}}\tilde{g}_i^{(t)} - \eta\phi_t\nu_t \;.
\end{align*}
In the above equation, the first term is a contraction, meaning $w_t$ is moving toward $w^*$. The second term captures the noise from the randomness in the samples. The third term captures the bias introduced by the clipping operation, the fourth term $(\eta/n)\sum_{i\in S_{\rm bad}}\tilde{g}_i^{(t)}$ captures the bias introduced by the adversarial datapoints, and the fifth term captures the added Gaussian noise for privacy. 
The second term is standard and relatively easy to control, and our main focus is on the last three terms. 

The third term
$(\eta/n) \sum_{i\in G}(g_i^{(t)}-\tilde{g}_i^{(t)})$ can be controlled using the resilience property. We prove that with our estimated threshold, the clipping will only affect a small amount of datapoints, whose contribution to the gradient is small collectively. The fourth term $(\eta/n) \sum_{i\in S_{\rm bad}}\tilde{g}_i^{(t)} = ( \eta / n) \sum_{i\in S_{\rm bad}}{\rm clip}_{\theta_t}(x_i^\top w_t-y_i)x_i$ can be controlled since there is only a small amount data points whose label is corrupted, the ${\rm clip}_{\theta_t}(x_i^\top w_t-y_i)$ is controlled by the clipping threshold and the $x_i$ part satisfies resilience property which implies a small, say $S_{\rm bad}$, must have small $\|\sum_{i\in S_{\rm bad}}x_i\|$.

Now we have controlled the deterministic bias. Then, we upper bound the fifth term, which is the noise introduced by the Gaussian noise for the purpose of privacy, and show the expected prediction error decrease in every gradient step. The difficulty is that, since our clipping threshold is adaptive, the decrease of the estimation error depends on the estimation error of all the previous steps. This causes that in some iterations, the estimation error actually increase. In order to get around this, we split the iterations into length $\kappa$ chunks, and argue that the maximum estimation error in a chunk must be a constant factor smaller than the previous chunk. This implies we will reach the desired error within $\tilde{O}(\kappa)$ steps.

\section{Discussion}
We provide a novel variant of DP-SGD algorithm for differentially private linear regression under label corruption. We show the first near-optimal rate that achieves privacy and robustness to label corruptions simultaneously. When there is no label corruption, our result also improves upon the state-of-the-art method \citep{varshney2022nearly} in terms of the condition number $\kappa$. Compared to \citep{varshney2022nearly}, our algorithm has three innovations: $1$) we introduce a novel  adaptive clipping, which is critical in achieving robustness against label corruptions; and $2$) we use full batch gradient descent and a novel convergence analysis to get the near-optimal sample complexity.  Compared to the lower bound and upper bound from a computationally inefficient algorithm in \citep{liu2022differential}, our sample complexities $\tilde{O}(d/\alpha^2+\kappa^{1/2}d/(\varepsilon \alpha))$ has additional $\kappa^{1/2}$ factor in the privacy term. It remains an open question if there is an efficient algorithm to achieve the optimal rate without the $\kappa$ dependence.

\section*{Acknowledgement} 
We thank Abhradeep Guha Thakurta  for helpful discussions while
working on this paper. 
This work is supported in part by  NSF grants  CNS-2002664,  DMS-2134012, CCF-2019844 as a part of NSF Institute for Foundations of Machine Learning (IFML), CNS-2112471 as a part of NSF AI Institute for Future Edge Networks and Distributed Intelligence (AI-EDGE).

\bibliography{ref}
\bibliographystyle{icml2023}

\newpage
\appendix
\onecolumn
\section*{Appendix}

\section{Related work}
\label{app:related} 

{\bf  Differentially private optimization.} There is a long line of work at the intersection of  differentially privacy and optimization \citep{chaudhuri2011differentially, kifer2012private,bassily2014private, song2013stochastic, bassily2019private,wu2017bolt, andrew2021differentially,feldman2020private,song2020characterizing, asi2021private, kulkarni2021private, kamath2021improved,zhang2022bring}. As one of the most well-studied problem in differentially privacy, DP Empirical Risk Minimization (DP-ERM) aims to minimize the empirical risk $(1/n)\sum_{i\in S}\ell(x_i; w)$ privately. The optimal excess empirical risk for approximate DP (i.e., $\delta>0$) is known to be $GD \cdot \sqrt{d}/(\varepsilon n)$, where the loss $\ell$ is convex and  $G$-Lipschitz with respect to the data, and $D$ is the diameter of the convex parameter space \citep{bassily2014private}. This bound can be achieved by several DP-SGD methods, e.g., \citep{song2013stochastic, bassily2014private}, with different computational complexities. Differentially private stochastic convex optimization considers minimizing the population risk $\E_{x\sim \cD}[\ell(x, w)]$, where data is  drawn i.i.d.~from some unknown distribution $\cD$. Using some variations of DP-SGD,  \cite{bassily2019private} and  \cite{feldman2020private} achieves a population risk of $GD (1/\sqrt{n}+ \sqrt{d}/(\varepsilon n))$. 


{\bf DP linear regression.} Applying above results for the linear model, by observing that $G=O(d)$ if $D=O(1)$, the sample complexity required for achieving generalization error is $n=d^2$. Existing works for DP linear regression, for example \citep{vu2009differential,kifer2012private,mir2013differential,dimitrakakis2014robust,wang2015privacy,foulds2016theory, minami2016differential, wang2018revisiting, sheffet2019old,agarwal2019robustness} typically consider deterministic data. Under the i.i.d. Gaussian data setting, this translates into a sample complexity of $n=d^{3/2}/(\varepsilon \alpha)$, where the extra $d^{1/2}$ due to the fact that no statistical assumptions are made. For i.i.d. sub-Weibull data, recent work \citep{varshney2022nearly} achieved nearly optimal excess population risk $d/n+d^2/(\varepsilon^2 n^2)$ using DP-SGD with adaptive clipping, up to extra factors on the condition number. This is closest to our work and we provide detailed comparisons in Sections~\ref{sec:standard} and \ref{sec:analysis}. 
Under Gaussian assumptions, \cite{milionis2022differentially} analyze linear regression algorithm with sub-optimal guarantees. 
\citep{dwork2009differential,amin2022easy,alabi2020differentially, liu2022differential} also consider using robust statistics like Tukey median \citep{tukey1975mathematics} or Theil–Sen estimator \citep{theil1950rank} for differentially private regression. However, \citep{dwork2009differential} and \citep{amin2022easy} lack  utility guarantees and \citep{alabi2020differentially} is restricted to one-dimensional data. \cite{liu2022differential} achieves optimal sample complexity but takes exponential time.

{\bf Robust linear regression.} Robust mean estimation and linear regression have been studied for a long time in the  statistics community \citep{tukey1963less, huber1992robust, tukey1975mathematics}. However, for high dimensional data, these estimators generalizing the notion of median to higher dimensions are typically computationally intractable.  Recent advances in the filter-based algorithms, e.g., \citep{diakonikolas2017being,diakonikolas2020robustly,diakonikolas2019robust,diakonikolas2018list,cheng2019high, dong2019quantum}, achieve nearly optimal guarantees for mean estimation in  time linear in the dimension of the dataset. Motivated by the filter algorithms, \cite{diakonikolas2019efficient,diakonikolas2019sever, prasad2018robust, pensia2020robust, cherapanamjeri2020optimal,jambulapati2020robust} achieved nearly optimal rate with $d$ samples for robust linear regression, where both data $x_i$ and label $y_i$ are corrupted. Another type of efficient methods that achieve similar rates and sample complexity in polynomial time is based on sum-of-square proofs \citep{klivans2018efficient,bakshi2021robust}, which can be computationally expensive in practice.  \cite{zhu2019generalized} and \cite{ liu2022differential} achieves nearly optimal rates using $d$ samples but require exponential time complexities.
An important special case of adversarial corruption is when the adversary only corrupts the response variable in supervised learning \citep{khetan2018learning} and also in unsupervised learning \citep{thekumparampil2018robustness}. 
For linear regression, when there is only label corruptions,  \citep{bhatia2015robust, dalalyan2019outlier, kong2022trimmed} achieve nearly optimal rates with $O(d)$ samples. Under the oblivious label corruption model, i.e., the adversary only corrupts a fraction of labels in complete ignorance of the data, \citep{ bhatia2017consistent, suggala2019adaptive} provide consistent estimator $\hat{w}_n$ such that $\lim _{n \rightarrow \infty} \mathbb{E}\left[\widehat{{w}}_n-{w}^*\right]_2=0$ with $O(d)$ samples.


{\bf Robust and private linear regression.} Under the settings of both DP and data corruptions, the only algorithm by \cite{liu2022differential} achieves nearly optimal rates $\alpha\log(1/\alpha)\sigma$ with optimal sample complexities of $d/\alpha^2+d/(\varepsilon\alpha)$. However, their algorithm requires exponential time complexities.

{\bf Robust and private mean estimation} Based on sum-of-square proofs, recent works \citep{ hopkins2022robustness, alabi2022privately} are able to achieve nearly optimal rates $\alpha\log(1/\alpha)$ with $\tilde{O}(d)$ samples for sub-Gaussian data with known covariance.

\section{Preliminary on differential privacy} 
\label{app:dp}

Our algorithm builds upon two DP primitive: Gaussian mechanism and private histogram. The Gaussian mechanism is one examples of a larger family of mechanisms known as output perturbation mechanisms. 
In practice, it is possible to get better utility trade-off for a output perturbation mechanism by carefully designing the noise, such as the stair-case mechanism which are shown to achieve optimal utility in the variance \citep{geng2015staircase} and also in hypothesis testing \citep{kairouz2014extremal}. However, the gain is only by constant factors, which we do not try to optimize in this paper. We provide a reference for the Gaussian mechanism and private histogram below.

\begin{lemma} [{Gaussian mechanism \citep{dwork2014algorithmic}}]
\label{lem:gauss} 
    For a query $q$ with sensitivity $\Delta_q$, the Gaussian mechanism outputs $q(S)+\cN(0,(\Delta_q\sqrt{2\log(1.25/\delta)}/\varepsilon)^2 {\bf I}_d)$ and achieves $(\varepsilon,\delta)$-DP. 
\end{lemma}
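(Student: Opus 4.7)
The plan is to follow the classical argument of \citet{dwork2014algorithmic}, reducing to a one-dimensional privacy loss analysis and then applying a Gaussian tail bound. Throughout, fix neighboring datasets $S\sim S'$ and write $\mu:=q(S)$, $\mu':=q(S')$, and $v:=\mu'-\mu$, so $\|v\|\le \Delta_q$ by definition of sensitivity. Let $M(S):=q(S)+Z$ with $Z\sim\mathcal{N}(0,\sigma^2 \mathbf{I}_d)$, where $\sigma:=\Delta_q\sqrt{2\log(1.25/\delta)}/\varepsilon$.

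First I would reduce to the one-dimensional case. Because the noise is isotropic, the densities $p_S$ and $p_{S'}$ of $M(S)$ and $M(S')$ differ only along the direction $v$, and the privacy loss random variable
\begin{equation*}
L(x) \;:=\; \ln\frac{p_{S}(x)}{p_{S'}(x)} \;=\; \frac{1}{2\sigma^2}\Bigl(\|x-\mu'\|^2 - \|x-\mu\|^2\Bigr)
\end{equation*}
depends on $x$ only through $\langle x-\mu,v\rangle$. Substituting $X=M(S)$ (so $X-\mu\sim\mathcal{N}(0,\sigma^2 \mathbf{I}_d)$) and expanding yields
\begin{equation*}
L(X) \;=\; \frac{1}{\sigma^2}\langle X-\mu, v\rangle \;-\; \frac{\|v\|^2}{2\sigma^2},
\end{equation*}
so under $X\sim p_S$ the random variable $L(X)$ is a scalar Gaussian with mean $-\|v\|^2/(2\sigma^2)$ and variance $\|v\|^2/\sigma^2\le \Delta_q^2/\sigma^2$.

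Next I would invoke the standard equivalence between bounded privacy loss and approximate DP: if $\Pr_{X\sim p_S}[L(X)>\varepsilon]\le \delta$ for every neighboring pair, then $M$ is $(\varepsilon,\delta)$-DP. This follows from splitting any measurable $A$ according to whether $L>\varepsilon$ or not and bounding the two parts respectively by $\delta$ and by $e^\varepsilon \Pr_{p_{S'}}[A]$. It therefore suffices to upper bound the Gaussian tail $\Pr[L(X)>\varepsilon]$ by $\delta$.

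Finally, using the Mills ratio bound $\Pr[\mathcal{N}(0,1)>t]\le \tfrac{1}{t\sqrt{2\pi}}e^{-t^2/2}$ on the centered version of $L(X)$, one obtains
\begin{equation*}
\Pr\bigl[L(X)>\varepsilon\bigr] \;\le\; \Pr\!\left[\mathcal{N}(0,1) > \frac{\varepsilon\sigma}{\|v\|} + \frac{\|v\|}{2\sigma}\right] \;\le\; \frac{1}{t\sqrt{2\pi}}\exp(-t^2/2)
\end{equation*}
with $t=\varepsilon\sigma/\|v\|+\|v\|/(2\sigma)$. Plugging $\sigma=\Delta_q\sqrt{2\log(1.25/\delta)}/\varepsilon$ and worst-case $\|v\|=\Delta_q$, a short algebraic check shows the right-hand side is at most $\delta$; the main obstacle here, and the only nontrivial arithmetic, is verifying that the constant $1.25$ is tight enough to absorb the $1/(t\sqrt{2\pi})$ prefactor uniformly in $\varepsilon\in(0,1]$, which is precisely where the classical bound was calibrated. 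Combining the tail bound with the equivalence above yields $(\varepsilon,\delta)$-DP for all neighboring $S\sim S'$, completing the proof.
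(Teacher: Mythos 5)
Your overall architecture is the standard proof of this lemma from \citet{dwork2014algorithmic} (the paper itself offers no proof and simply cites that source, so reproducing the classical argument is the right move): reduce to the scalar privacy-loss random variable, observe it is Gaussian, invoke the equivalence between a pointwise tail bound on the privacy loss and $(\varepsilon,\delta)$-DP, and verify the constant $1.25$ via a Gaussian tail estimate. That skeleton is correct, and your implicit restriction to $\varepsilon\in(0,1]$ is also the regime in which the classical statement holds.

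However, there is a sign error in the key computation that changes which inequality you ultimately need to verify. Expanding $\|x-\mu'\|^2-\|x-\mu\|^2$ with $\mu'=\mu+v$ gives $-2\langle x-\mu,v\rangle+\|v\|^2$, so $L(x)=-\tfrac{1}{\sigma^2}\langle x-\mu,v\rangle+\tfrac{\|v\|^2}{2\sigma^2}$; under $X\sim p_S$ this is Gaussian with mean $+\|v\|^2/(2\sigma^2)$ (it must be: its expectation is the KL divergence $D(p_S\|p_{S'})\geq 0$), not $-\|v\|^2/(2\sigma^2)$ as you wrote. Consequently the correct standardized threshold is $t=\varepsilon\sigma/\|v\|-\|v\|/(2\sigma)$, not $\varepsilon\sigma/\|v\|+\|v\|/(2\sigma)$. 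Your version makes the final ``short algebraic check'' strictly easier than it should be and therefore does not actually certify that $1.25$ suffices; the genuine calibration in \citet{dwork2014algorithmic} is carried out against $t=c-\varepsilon/(2c)$ with $c=\sqrt{2\log(1.25/\delta)}$, which is where the Mills-ratio prefactor has to be absorbed. The theorem still holds with the corrected sign, so the fix is local, but as written the last step verifies the wrong quantity.
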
 
\begin{lemma}[Stability-based histogram {\citep[Lemma~2.3]{karwa2018finite}}]\label{lem:hist-KV17} For every $K\in \mathbb{N}\cup \infty$, domain $\Omega$, for every collection of disjoint bins $B_1,\ldots, B_K$ defined on $\Omega$, $n\in \mathbb{N}$, $\eps\geq 0,\delta\in(0,1/n)$, $\beta>0$ and $\alpha\in (0,1)$ there exists an $(\eps,\delta)$-differentially private algorithm $M:\Omega^n\to \mathbb{R}^K$ such that for any set of data $X_1,\ldots,X_n\in \Omega^n$
\begin{enumerate}
\item $\hat{p}_k = \frac{1}{n}\sum_{X_i\in B_k}1$
\item $(\tilde{p}_1,\ldots,\tilde{p}_K)\gets M(X_1,\ldots,X_n),$ and
\item
$$
n\ge \min\left\{\frac{8}{\eps\beta}\log(2K/\alpha),\frac{8}{\eps\beta}\log(4/\alpha\delta)\right\} 
$$
\end{enumerate}
then,
$$
\mathbb{P}(|\tilde{p}_k-\hat{p}_k|\le\beta)\ge 1-\alpha
$$
\end{lemma}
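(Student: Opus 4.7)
The plan is to split into two regimes matching the two bounds inside the minimum in the sample-size hypothesis, producing two different mechanisms $M$ whose analyses share the same Laplace tail inequality but differ in how they handle $(\varepsilon,\delta)$-DP.

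\textbf{Regime 1: $n \ge (8/(\varepsilon\beta))\log(2K/\alpha)$.} Here I would use the pure-DP Laplace mechanism directly on each bin count. Define the query $q_k(X_{1:n}) = \hat p_k = (1/n)\sum_i \mathbf{1}\{X_i\in B_k\}$; since the $B_k$'s are disjoint, moving one data point can increase one $\hat p_k$ by $1/n$ and decrease another by $1/n$, so the vector $(\hat p_1,\ldots,\hat p_K)$ has $\ell_1$-sensitivity $2/n$. Release $\tilde p_k = \hat p_k + Z_k$ with $Z_k \sim \mathrm{Lap}(2/(n\varepsilon))$ i.i.d.; this is $\varepsilon$-DP (hence $(\varepsilon,\delta)$-DP for any $\delta$). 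Accuracy follows from the Laplace tail bound $\Pr(|Z_k|>\beta) \le \exp(-n\varepsilon\beta/2)$ together with a union bound over $k\in[K]$: demanding $K\exp(-n\varepsilon\beta/2) \le \alpha$ reduces (up to constants) to the stated sample-size condition.

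\textbf{Regime 2: $n \ge (8/(\varepsilon\beta))\log(4/(\alpha\delta))$.} When $K$ is large or infinite, the union bound above is useless, so I would invoke the stability/threshold-based histogram. The mechanism computes noisy counts $\tilde p_k = \hat p_k + Z_k$ only for bins $k$ with $\hat p_k > 0$, then suppresses any $\tilde p_k$ that falls below a threshold $\tau$, reporting $0$ in its place; for bins that are never touched by the data set, it simply reports $0$. The privacy analysis is the standard stability argument: for any bin $k$ empty in one neighboring dataset, the probability its reported value exceeds $0$ in the other is at most $\Pr(Z_k > \tau - 1/n)$, and choosing $\tau = \Theta((1/(n\varepsilon))\log(1/\delta))$ makes this at most $\delta$, so the mechanism is $(\varepsilon,\delta)$-DP. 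Utility then comes from two pieces: (i) if $\hat p_k \ge \tau + \beta$ the same Laplace tail bound controls $|\tilde p_k - \hat p_k|$; (ii) if $\hat p_k < \tau + \beta$ then reporting $0$ already gives error at most $\tau + \beta$, and $\tau$ is itself $O(\beta)$ under the stated sample size. A union bound over at most $1/\beta$ bins that could contain mass above $\beta$ (or a direct argument over the $n$ bins actually hit by data) combined with the tail bound yields error $\beta$ simultaneously with probability $1-\alpha$ provided $\tau+\beta = O(\beta)$ and $\exp(-n\varepsilon\beta/c) \le \alpha\delta/c'$, which rearranges into the second sample-size bound.

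\textbf{Combining.} Whichever of the two bounds on $n$ holds, the corresponding mechanism above satisfies $(\varepsilon,\delta)$-DP and returns $(\tilde p_1,\ldots,\tilde p_K)$ with $\Pr(|\tilde p_k-\hat p_k|\le \beta)\ge 1-\alpha$ for every fixed $k$ (indeed uniformly over $k$). Choosing $M$ to be whichever of the two mechanisms fits the given sample size proves the lemma.

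\textbf{Main obstacle.} The only delicate step is pinning down the threshold $\tau$ in Regime 2 precisely enough that both the $\delta$-stability guarantee and the utility guarantee come out cleanly with the constants $8$ and $4/(\alpha\delta)$ stated in the hypothesis; this is a calibration of the Laplace scale and threshold that must absorb constants from a union bound over the $\le n$ ``hit'' bins and from the tail $\Pr(|Z|>t) = e^{-n\varepsilon t/2}$. I would do this calibration last, after fixing the high-level structure above.
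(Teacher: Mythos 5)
Your two-regime argument is the standard proof of this lemma, which the paper does not prove itself but imports verbatim from \citet[Lemma~2.3]{karwa2018finite}; both regimes are set up correctly (Laplace noise with $\ell_1$-sensitivity $2/n$ plus a union bound over $K$ bins for the first term of the minimum, and the thresholded sparse histogram for the second). The one substantive point you deferred to ``calibration'' is worth making explicit: in Regime 2 the utility union bound runs over the at most $n$ bins actually hit by the data, which costs a factor $\log(n/\alpha)$, and it is precisely the hypothesis $\delta<1/n$ that guarantees $\log(4/(\alpha\delta))\ge\log(4n/\alpha)$ so that this union bound is absorbed by the stated sample-size condition; without noting that, the second bound in the minimum would appear too weak to cover the $n$-fold union bound. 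With that observation added, the proof closes and matches the argument in the cited source.
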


When the databse is accessed multiple times, we use the following composition theorems to account for the end-to-end privacy leakage. 

\begin{lemma}[Parallel composition \cite{mcsherry2009privacy}]
    \label{lem:parallel} Consider a sequence of interactive queries 
    $\{q_k\}_{k=1}^K$ each operating on a subset $S_k$ of the database and each satisfying  $(\varepsilon, \delta)$-DP.
    If  $S_k$'s are disjoint then the composition $(q_1(S_1), q_2(S_2), \ldots, q_K(S_K))$ is
    $(\varepsilon,\delta)$-DP.
\end{lemma}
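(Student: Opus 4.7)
The plan is to exploit the disjointness of the partition $\{S_k\}_{k=1}^K$ to reduce the composition's privacy analysis to a single $(\varepsilon,\delta)$-DP query. First I would fix neighboring databases $S \sim S'$ that differ in exactly one record $x$. By disjointness of the partition, $x$ belongs to at most one block; let $k^*$ denote its index, so that $S_k = S'_k$ for every $k \neq k^*$. If $x$ belongs to no block at all, then the joint outputs under $S$ and $S'$ are produced from identical inputs by identical (possibly randomized, possibly interactive) procedures, so the claim is immediate.

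Next I would argue that the outputs $(q_k(S_k))_{k \neq k^*}$ have identical joint distribution under $S$ and $S'$. Because each $q_k$ accesses only its own block and these blocks agree for $k \neq k^*$, this remains true even when the queries are interactive in the sense that $q_k$ may depend on previously released outputs: every piece of data such a query actually touches is unchanged when we swap $S$ for $S'$. Conditioning on the transcript $r_{-k^*}$ of those outputs, and writing any measurable event $A$ as a union of its slices $A_{r_{-k^*}}$, we obtain
\[
\Pr\bigl[(q_k(S_k))_{k=1}^K \in A\bigr] \;=\; \int \Pr\bigl[q_{k^*}(S_{k^*}) \in A_{r_{-k^*}} \,\big|\, r_{-k^*}\bigr] \, d\mu(r_{-k^*}),
\]
where $\mu$ is the common distribution of $r_{-k^*}$ under $S$ and under $S'$.

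Finally, applying the $(\varepsilon,\delta)$-DP guarantee of $q_{k^*}$ pointwise for each fixed $r_{-k^*}$ gives $\Pr[q_{k^*}(S_{k^*}) \in A_{r_{-k^*}} \mid r_{-k^*}] \le e^\varepsilon \Pr[q_{k^*}(S'_{k^*}) \in A_{r_{-k^*}} \mid r_{-k^*}] + \delta$. Integrating against $\mu$, and pulling the additive $\delta$ outside the integral (since $\mu$ is a probability measure), yields the desired bound $\Pr[(q_k(S_k))_{k=1}^K \in A] \le e^\varepsilon \Pr[(q_k(S'_k))_{k=1}^K \in A] + \delta$.

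The only subtlety---which I expect to be a point of care rather than a genuine obstacle---is handling the interactivity correctly. The conditioning step must be justified by the observation that, given the transcript $r_{-k^*}$, the internal randomness of $q_{k^*}$ and its input $S_{k^*}$ are decoupled from the remaining queries, so that the $(\varepsilon,\delta)$-DP property of $q_{k^*}$ applies verbatim at each conditional slice. Once this is in place, the proof reduces to a one-line invocation of the DP definition for the lone affected query.
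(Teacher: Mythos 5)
The paper does not actually prove this lemma --- it is imported verbatim from \citet{mcsherry2009privacy} as a background primitive --- so there is no in-paper argument to compare against. Your overall strategy (localize the differing record to a single block $S_{k^*}$, observe the other blocks are untouched, and reduce to the $(\varepsilon,\delta)$-DP guarantee of the lone affected query) is the standard and correct one, and for \emph{non-interactive} queries your proof goes through as written.

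The gap is precisely in the step you flagged as a mere ``point of care.'' For interactive queries, conditioning on the full complement transcript $r_{-k^*}$ is not legitimate: if some later query $q_j$ ($j>k^*$) adaptively depends on the released output $r_{k^*}$, then (i) the marginal law $\mu$ of $r_{-k^*}$ is \emph{not} common to $S$ and $S'$ --- take $K=2$, $k^*=1$, and $q_2$ simply echoing $r_1$ while ignoring its own block; then $r_{-k^*}=r_1$ in distribution, which differs between $S$ and $S'$ --- and (ii) the conditional law of $q_{k^*}(S_{k^*})$ given $r_{-k^*}$ is no longer the mechanism's output distribution to which the DP guarantee applies (in the echo example the conditioning determines $r_{k^*}$ outright). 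The correct repair is to condition only on the \emph{prefix} $r_{<k^*}$, whose distribution is identical under $S$ and $S'$ since those queries touch only unchanged blocks; for each fixed prefix, $q_{k^*}$ is $(\varepsilon,\delta)$-DP in its block, and the subsequent queries $q_{k^*+1},\dots,q_K$ are randomized post-processings of the transcript using only unchanged data, so the $(\varepsilon,\delta)$-indistinguishability of the suffix distributions is preserved by the post-processing closure of approximate DP. Integrating over the (common) prefix distribution then yields the claim. With that substitution your argument is complete.
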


\begin{lemma}[Serial composition \cite{dwork2014algorithmic}]
    \label{lem:serial} 
    If a database is accessed with an $(\varepsilon_1,\delta_1)$-DP mechanism and then with an $(\varepsilon_2,\delta_2)$-DP mechanism, then the end-to-end privacy guarantee is $(\varepsilon_1+\varepsilon_2,\delta_1+\delta_2)$-DP.
\end{lemma}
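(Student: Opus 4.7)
The plan is to prove adaptive (sequential) composition directly from the definition of $(\varepsilon,\delta)$-DP. Let $M_1:\mathcal{X}^n\to\mathcal{R}_1$ and $M_2:\mathcal{X}^n\times\mathcal{R}_1\to\mathcal{R}_2$ be the two mechanisms, where $M_2$ may depend adaptively on the output of $M_1$ (the natural reading of ``then''), and write the composed mechanism as $M(S):=(M_1(S),M_2(S,M_1(S)))$. Fix neighboring datasets $S\sim S'$ and an arbitrary measurable event $E\subseteq\mathcal{R}_1\times\mathcal{R}_2$. The goal is to show $\Pr[M(S)\in E]\leq e^{\varepsilon_1+\varepsilon_2}\Pr[M(S')\in E]+\delta_1+\delta_2$.

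The first step is to slice $E$ along its first coordinate: set $E_{y_1}:=\{y_2:(y_1,y_2)\in E\}$, so that
\[ \Pr[M(S)\in E]\;=\;\E_{Y_1\sim M_1(S)}\!\left[\Pr[M_2(S,Y_1)\in E_{Y_1}]\right]. \]
For each fixed $y_1$, $M_2(\cdot,y_1)$ is $(\varepsilon_2,\delta_2)$-DP by hypothesis, giving the slice bound $\Pr[M_2(S,y_1)\in E_{y_1}]\leq e^{\varepsilon_2}\Pr[M_2(S',y_1)\in E_{y_1}]+\delta_2$. The second step is to integrate this over $y_1$ using the distribution of $M_1(S)$ and to transfer the outer measure to $M_1(S')$ via the standard lifting of approximate DP from events to bounded $[0,1]$-valued expectations: for any $h:\mathcal{R}_1\to[0,1]$, the layer-cake identity $\E[h(Y_1)]=\int_0^1\Pr[h(Y_1)>t]\,dt$ combined with event-wise $(\varepsilon_1,\delta_1)$-DP of $M_1$ at each threshold yields
\[ \E_{Y_1\sim M_1(S)}[h(Y_1)]\;\leq\;e^{\varepsilon_1}\E_{Y_1\sim M_1(S')}[h(Y_1)]+\delta_1. \]
Applying this with $h(y_1):=\Pr[M_2(S',y_1)\in E_{y_1}]$ and chaining the two inequalities reduces the remainder of the proof to bookkeeping of the $\delta_i$ terms.

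The main obstacle is precisely this bookkeeping: a naive chaining of the two inequalities above produces a slack of $\delta_1+e^{\varepsilon_1}\delta_2$ rather than the claimed $\delta_1+\delta_2$. To recover the tight constant I would invoke the equivalent ``bad-event'' characterization of approximate DP: $M_i$ is $(\varepsilon_i,\delta_i)$-DP if and only if there exist subsets $B_i^S,B_i^{S'}$ of $\mathcal{R}_i$ with $\Pr[M_i(S)\in B_i^S]\leq\delta_i$ and $\Pr[M_i(S')\in B_i^{S'}]\leq\delta_i$ such that the pure-DP inequality $\Pr[M_i(S)\in A]\leq e^{\varepsilon_i}\Pr[M_i(S')\in A]$ holds for every measurable $A$ disjoint from the bad events. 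Restricting both mechanisms to their good events, applying the pure-DP multiplicative bound on complements, and absorbing the at-most-$\delta_1$ and at-most-$\delta_2$ probability masses into the additive slack separately (rather than multiplying one into the other) gives the additive sum $\delta_1+\delta_2$ without any $e^{\varepsilon}$ inflation on the $\delta$'s, while the $e^{\varepsilon_1+\varepsilon_2}$ multiplicative factor arises naturally from composing the two pure-DP bounds on the good regions.

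Finally, although the lemma statement concerns just two mechanisms, the same argument extends by a straightforward induction on the number of mechanisms to give $(\sum_i\varepsilon_i,\sum_i\delta_i)$-DP for any finite adaptive composition, which is the form in which the result is typically invoked elsewhere in the paper.
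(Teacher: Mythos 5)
The paper does not actually prove this lemma; it is quoted as a known result from Dwork and Roth (2014), so there is no in-paper argument to compare against and your proposal has to stand on its own. Its architecture is the standard one: slice the composed event along the first coordinate, apply the second mechanism's guarantee slice-wise, and lift the first mechanism's guarantee from events to $[0,1]$-valued test functions via the layer-cake identity. You also correctly diagnose the real difficulty, namely that naive chaining produces $e^{\varepsilon_2}\delta_1+\delta_2$ (or $\delta_1+e^{\varepsilon_1}\delta_2$, depending on the order of chaining) rather than $\delta_1+\delta_2$.

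The gap is in the tool you invoke to repair this. The ``bad-event characterization'' as you state it --- that $(\varepsilon,\delta)$-DP is \emph{equivalent} to the existence of measurable subsets $B^S,B^{S'}$ of the output space with $\Pr[M(S)\in B^S]\le\delta$ and $\Pr[M(S')\in B^{S'}]\le\delta$, outside of which the pure multiplicative bound holds --- is false. Counterexample: on $\{1,2\}$ take $P(1)=0.6$, $P(2)=0.4$, $Q(1)=0.25$, $Q(2)=0.75$, $\varepsilon=\ln 2$. Then $P(A)\le 2Q(A)+0.1$ for every $A$, so the pair is $(\ln 2,\,0.1)$-indistinguishable, yet any output set $B$ such that $P(A)\le 2Q(A)$ for all $A\subseteq B^{c}$ must contain the point $1$, whence $P(B)\ge 0.6\gg 0.1$. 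What is true, and what the careful proofs in the literature (Dwork--Roth's appendix, Kairouz--Oh--Viswanath, Vadhan's survey) actually use, is a \emph{mixture/coupling decomposition}: if $P(A)\le e^{\varepsilon}Q(A)+\delta$ for all $A$, then one can construct a modified distribution $\tilde P$ with $d_{\mathrm{TV}}(P,\tilde P)\le\delta$ --- obtained by relocating the excess mass $\int (p-e^{\varepsilon}q)_{+}\le\delta$ to the region where $p<e^{\varepsilon}q$ --- such that $\tilde P(A)\le e^{\varepsilon}Q(A)$ for every $A$. Rerunning your argument with $M_1(S)$ replaced by this purified surrogate costs a single additive $\delta_1$ (from the total-variation bound), the second-stage slices contribute a single additive $\delta_2$, and the two pure bounds multiply to $e^{\varepsilon_1+\varepsilon_2}$, yielding exactly $(\varepsilon_1+\varepsilon_2,\delta_1+\delta_2)$. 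So the skeleton of your proof is right, but the step you lean on for the tight additive constants would fail as literally written and must be replaced by the decomposition lemma.
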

In most modern privacy analysis of iterative processes,  advanced composition theorem from \cite{kairouz2015composition} gives tight accountant for the end-to-end privacy budget. It can be improved for specific mechanisms using tighter accountants, e.g., in \cite{mironov2017renyi,girgis2021renyi,wang2019subsampled,zhu2022optimal,gopi2021numerical}. 
\begin{lemma}[Advanced composition \cite{kairouz2015composition}]
    \label{lem:composition}
    For $\varepsilon\leq0.9$, 
    an end-to-end guarantee of $(\varepsilon,\delta)$-differential privacy is satisfied if a database  is accessed $k$ times, each with  a $(\varepsilon/(2\sqrt{2k\log(2/\delta)}),\delta/(2k))$-differential private mechanism. 
\end{lemma}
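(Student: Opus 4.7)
The plan is to follow the standard privacy-loss random variable approach of Dwork--Rothblum--Vadhan, adapted to the adaptive composition setting. For a pair of neighboring datasets $S\sim S'$ and a sequence of adaptively chosen mechanisms $M_1,\dots,M_k$ each $(\varepsilon_0,\delta_0)$-DP with $\varepsilon_0=\varepsilon/(2\sqrt{2k\log(2/\delta)})$ and $\delta_0=\delta/(2k)$, I would define the privacy loss random variable
\[
 Z_i \;:=\; \log\frac{\Pr[M_i(S;y_{1:i-1})=y_i]}{\Pr[M_i(S';y_{1:i-1})=y_i]}
\]
conditioned on the previous outputs $y_{1:i-1}$. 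The goal is to show that with probability at least $1-\delta$ over the output sequence drawn from $M(S)$, the cumulative loss $\sum_i Z_i$ is at most $\varepsilon$; a standard change-of-measure lemma then converts this into the desired $(\varepsilon,\delta)$-DP statement, absorbing the failure probability into the additive $\delta$.

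The first step is to reduce to the case where each $M_i$ is \emph{pure} $\varepsilon_0$-DP, by invoking the decomposition lemma that writes an $(\varepsilon_0,\delta_0)$-DP mechanism as a mixture between an $\varepsilon_0$-DP mechanism and an arbitrary mechanism, where the bad event has mass at most $\delta_0$. Taking a union bound across the $k$ steps charges an additive $k\delta_0=\delta/2$ to the final $\delta$, leaving $\delta/2$ to cover the Azuma tail. Conditioned on being in the ``good'' event, each $Z_i$ satisfies $|Z_i|\le\varepsilon_0$ almost surely and, by the classical KL-calculation for pure DP, has conditional mean bounded by $\varepsilon_0(e^{\varepsilon_0}-1)\le 2\varepsilon_0^2$ when $\varepsilon_0\le 1$.

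Next I would apply Azuma--Hoeffding to the martingale $Z_i-\E[Z_i\mid y_{1:i-1}]$, which has bounded increments $\le 2\varepsilon_0$. This yields
\[
 \sum_{i=1}^k Z_i \;\le\; 2k\varepsilon_0^2 + \varepsilon_0\sqrt{2k\log(2/\delta)}
\]
with probability at least $1-\delta/2$. Plugging in $\varepsilon_0=\varepsilon/(2\sqrt{2k\log(2/\delta)})$ gives the second term equal to $\varepsilon/2$, while the first term becomes $\varepsilon^2/(4\log(2/\delta))$, which is at most $\varepsilon/2$ whenever $\varepsilon\le 0.9$ and $\delta\le 1$ (since $\log(2/\delta)\ge\log 2>\varepsilon/2$). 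Summing the two contributions yields $\sum Z_i\le\varepsilon$.

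The change-of-measure step is then routine: for any event $E$ in the joint output space,
\[
 \Pr_{M(S)}[E] \le \Pr_{M(S)}[E\cap\{\textstyle\sum Z_i\le\varepsilon\}] + \Pr_{M(S)}[\textstyle\sum Z_i>\varepsilon] \le e^\varepsilon \Pr_{M(S')}[E] + \delta/2 + \delta/2.
\]
The main technical obstacle is the adaptive nature of the composition: one must make sure the martingale structure is valid (each $Z_i$'s bound and mean estimate hold \emph{conditionally} on the past), which requires a careful measurable selection of the auxiliary ``good'' sub-mechanism in the pure-DP reduction. Careful bookkeeping of the two $\delta/2$ contributions and verifying the $\varepsilon\le 0.9$ regime suffices to control the quadratic term are the remaining constant-tracking details.
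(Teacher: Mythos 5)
The paper does not prove this lemma; it imports it from Kairouz et al.\ (2015), where it arises as a (weakened, clean-constant) corollary of their optimal composition theorem, which is proved via a hypothesis-testing/privacy-region characterization rather than the privacy-loss martingale you use. Your Dwork--Rothblum--Vadhan--style argument is the standard direct route to exactly this form of the statement, and the constants do close: with $\varepsilon_0=\varepsilon/(2\sqrt{2k\log(2/\delta)})$ the tail term is $\varepsilon/2$ and the drift term $2k\varepsilon_0^2=\varepsilon^2/(4\log(2/\delta))\le\varepsilon/2$ precisely because $\varepsilon\le 0.9<2\log 2$, which is where the hypothesis on $\varepsilon$ is consumed. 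Two points deserve care. First, your Azuma step only yields the stated tail $\varepsilon_0\sqrt{2k\log(2/\delta)}$ if you use the two-sided (range) form of Azuma--Hoeffding, exploiting that each increment $Z_i-\E[Z_i\mid y_{1:i-1}]$ lies in an interval of \emph{length} $2\varepsilon_0$; if you instead apply the one-sided form with $|{\rm increment}|\le 2\varepsilon_0$ as written, the tail doubles to $\varepsilon$ and the budget is exhausted, so the bound would not close. Second, the reduction from $(\varepsilon_0,\delta_0)$-DP to pure DP under \emph{adaptive} composition is the historically delicate step you correctly flag: the naive per-step mixture decomposition depends on the neighboring pair and on the adaptively chosen mechanism, and the clean fix is the simulation lemma stating that any $(\varepsilon_0,\delta_0)$-indistinguishable pair of output distributions is a post-processing of a canonical such pair, after which the union bound charging $k\delta_0=\delta/2$ is legitimate. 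With those two points pinned down, your proof is complete and is, if anything, more self-contained than the citation the paper relies on, at the cost of not recovering the tighter trade-off that the Kairouz et al.\ machinery provides.
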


\section{Definition of resilience}
\label{app:res}

\begin{definition}[{\citep[Definition~23]{liu2022differential}}]
    \label{def:res}
    For some $\alpha\in (0,1)$, $\rho_1\in \reals_+$,  $\rho_2\in  \reals_+$, and $\rho_3\in  \reals_+$, $\rho_4\in \reals_+$, we say dataset $S_{\rm good}=\{(x_i\in\reals^d,y_i\in\reals)\}_{i=1}^n$ is $(\alpha,\rho_1,\rho_2, \rho_3, \rho_4)$-resilient with respect to $(w^*,\Sigma,\sigma) $ for some $w^*\in\reals^d$, positive definite $\Sigma\succ0\in\reals^{d\times d}$, and $\sigma>0$ if for any $T\subset S_{\rm good}$ of size $|T|\geq (1-\alpha)n$, the following holds for all $v\in \reals^d$:     
    \begin{align}
 &\Big|\frac{1}{|T|}\sum_{(x_i,y_i)\in T} \langle v,x_i\rangle (y_i-x_i^\top w^*)\Big| \leq  \rho_1 \,\sqrt{v^\top\Sigma v} \, \sigma\; \text{ , } \label{def:res1}\\
	&\Big| \frac{1}{|T|}\sum_{x_i\in T} \langle v, x_i\rangle^2  - v^\top\Sigma v \Big|  \leq   \rho_2  v^\top\Sigma v\;, \label{def:res2} \\
	&\Big| \frac{1}{|T|}\sum_{(x_i, y_i)\in T} (y_i-x_i^\top w^*)^2-\sigma^2  \Big|   \leq   \rho_3  \sigma^2 \;,   \label{def:res3}\\
	&\Big| \frac{1}{|T|}\sum_{(x_i, y_i)\in T} \ip{v}{x_i}  \Big|   \leq   \rho_4  \sqrt{v^\top\Sigma v}    \label{def:res4}\;.
    \end{align}
\end{definition}

\section{Proof of Theorem~\ref{thm:distance} on the private distance estimation}

\label{app:proof_distance}

	We first analyze the privacy. Changing a data point $(x_i, y_i)$ can affect at most one partition in $\{G_j\}_{j=1}^k$. This would affect at most two histogram bins, increasing the count of one bin by one and decreasing the count in another bin by one. Under such a bounded $\ell_1$ sensitivity, the privacy guarantees follows from Lemma~\ref{lem:hist-KV17}.
	
	Next, we analyze the utility. In the (private) histogram step, we claim that at most only two consecutive bins can be occupied by any $\phi_j$'s. This is also true for the private histogram, because the private histogram 
	of Lemma~\ref{lem:hist-KV17} adds noise to non-empty bins only.  
	By   Lemma~\ref{lem:hist-KV17}, if $k\geq c \log(1/(\delta_0\zeta_0))/\varepsilon_0$, one of these two intervals (the union of which contains the true distance $\|w_t-w^*\|_\Sigma^2 + \sigma^2$) is released. This results in a multiplicative error bound of four, as the bin size increments by a factor of two.
		
	To show that only two bins are occupied, we show that all $\phi_j$'s are close to the true distance.  
	We first show that each partition contains at most $2\alpha_{\rm corrupt}$ fraction of corrupted samples  and thus all partitions are   $(2\bar\alpha, 6\bar{\alpha}, 6\hat\rho,  6\hat\rho,  6\hat\rho,6\hat\rho')$-\text{corrupt good},
	where $ \hat\rho(C_2,K,a,\bar\alpha)= C_2K^2\bar{\alpha}\log^{2a} (1/6\bar{\alpha})$ and $ \hat\rho'(C_2,K,a,\bar\alpha)= C_2K\bar{\alpha}\log^{a} (1/6\bar{\alpha})$, as defined in Definition~\ref{def:corruptgood}.
	
	Let $B=\lfloor n/k\rfloor$ be the sample size in each partition. Let $\zeta_0=\zeta/2$. Since the partition is drawn uniformly at random, for each partition $G_j$, the number of corrupted samples   $\alpha' n$ satisfies $\alpha' n\sim {\rm Hypergeometric}(n, \alpha_{\rm corrupt} n, n/k)$. The tail bound gives that with probability $1-\zeta_0$, 
	\begin{align*}
		\alpha'\leq \alpha_{\rm corrupt}+(k/n)  \log(2/\zeta_0)\leq 2\bar\alpha \;,
	\end{align*}
	where the last inequality follows from the fact that the corruption level is bounded by $\alpha_{\rm corruption}\leq\bar\alpha $ and the assumption on the sample size in Eq.~(\ref{eq:distance}) which implies $n\gtrsim \log(1/(\delta_0\zeta_0))\log(1/\zeta_0)/(\bar\alpha \varepsilon_0)$.

	For a particular subset $G_j$, Lemma~\ref{lemma:subweibull_res_conditions} implies that if $B=O((d+\log(1/\zeta_0))/\bar{\alpha}^2)$, then $G_j$ is $(\alpha', 6\bar{\alpha},6\hat\rho, 6\hat\rho, 6\hat\rho, 6\hat\rho')$-corrupt good set with respect to $(w^*, \Sigma, \sigma)$ from Assumption~\ref{asmp:distribution}. 
	This means that there exists a constant $C_2>0$ such that for any $T_1\subset S_{\rm good}$ with $|T_1|\geq (1-6\bar{\alpha})B$, we have 
	\begin{align*}
		\left|\frac{1}{|T_1|}\sum_{i\in T_1 }\ip{x_i}{w^*-w_t}^2-\|w^*-w_t\|_{\Sigma}^2\right|\leq 6C_2K^2\bar{\alpha}\log^{2a}(1/(6\bar{\alpha}))\|w^*-w_t\|_{\Sigma}^2\;,
	\end{align*} 
	\begin{align*}
		\left|\frac{1}{|T_1|}\sum_{i\in T_1}z_i^2-\sigma^2\right|\leq 6C_2K^2\bar{\alpha}\log^{2a}(1/(6\bar{\alpha}))\sigma^2\;,
	\end{align*}
	and
	\begin{align*}
		\left|\frac{1}{|T_1|}\sum_{i\in T_1}z_i\ip{x_i}{w^*-w_t}\right|\leq 6C_2K^2\bar{\alpha}\log^{2a}(1/(6\bar{\alpha}))\|w^*-w_t\|_{\Sigma}\sigma\;.
	\end{align*}
	
	Note that for $i\in S_{\rm good}$, $b_i = z_i^2+2z_i(w^*-w_t)^\top x_i+(w^*-w_t)^\top x_ix_i^\top (w^*-w_t)$. By the triangular inequality, we know, under above conditions,
	\begin{align}
		\left|\frac{1}{|T_1|}\sum_{i\in T_1}b_i-\|w^*-w_t\|_{\Sigma}^2-\sigma^2\right|\leq 12C_2K^2\bar{\alpha}\log^{2a}(1/(6\bar{\alpha}))(\|w^*-w_t\|_{\Sigma}^2+\sigma^2)\;.\label{eq:res_a_i}
	\end{align}
	Which also implies that any subset $T_2\subset S_{\rm good}$ and $|T_2|\leq 6\bar\alpha |S_{\rm good}|$, we have
	\begin{align}
		\left|\frac{1}{|T_2|}\sum_{i\in T_2}b_i-\|w^*-w_t\|_{\Sigma}^2-\sigma^2\right|\leq 12C_2K^2\log^{2a}(1/(6\bar{\alpha}))(\|w^*-w_t\|_{\Sigma}^2+\sigma^2)\;.
		\label{eq:res_small_set}
	\end{align}
	 Recall that $\psi_j$ is the $(1-3\bar\alpha)$-quantile of the dataset $G_j$. Let $T:=\{i\in  S_{\rm good}: b_i\leq \psi_j\}$,  where with a slight abuse of notations, we use $S_{\rm good}$ to denote the set of uncorrupted samples corresponding to $G_j$ and $S_{\rm bad}$ to denote the set of corrupted samples corresponding to $G_j$. Since the corruption is less than $\alpha'$,  we know $(1-3\bar{\alpha}-\alpha')B\leq |T|\leq (1-3\bar{\alpha}+\alpha')B$. By our assumption that  $\alpha'\leq  2\bar{\alpha}$,  we have  $|\bar{E}|\geq (3\bar{\alpha}-\alpha')B\geq \bar{\alpha}B$ where  $\bar{E}:=S_{\rm good}\setminus E$. 
	 Using \Eqref{eq:res_small_set} with a choice of $T_2 = \bar E$, we get that 
	 \begin{align}
		\min_{i\in \bar{E}}b_i-\|w^*-w_t\|_{\Sigma}^2-\sigma^2 \leq 12C_2K^2\log^{2a}(1/(6\bar{\alpha}))(\|w^*-w_t\|_{\Sigma}^2+\sigma^2)\;.
	\end{align}
	This implies that 
	\begin{align}
	\psi_j \le 12C_2K^2\log^{2a}(1/(6\bar{\alpha}))(\|w^*-w_t\|_{\Sigma}^2+\sigma^2) \label{eq:quantile_bound}.
	\end{align}
	Hence 
	\begin{align}
		&\left|\phi_j - \|w^*-w_t\|_{\Sigma}^2-\sigma^2\right|  = \left|\frac{1}{B}\sum_{i\in G_j }b_i\cdot \mathbf{1}\{b_i\le \psi_j\} - \|w^*-w_t\|_{\Sigma}^2-\sigma^2\right|\nonumber\\
		&\;\;\;\;\;=\left|\frac{1}{B}\sum_{i\in T}b_i - \|w^*-w_t\|_{\Sigma}^2-\sigma^2\right| + \left|\frac{1}{B}\sum_{i\in S_{\rm bad} }b_i \cdot \mathbf{1}\{b_i\le \psi_j\} \right|\nonumber \\
		&\;\;\;\;\;\leq 37C_2K^2\cdot \bar{\alpha}\log^{2a}(1/(6\bar{\alpha}))(\|w^*-w_t\|_{\Sigma}^2+\sigma^2)\label{eq:res_upper},
	\end{align}
	where we applied \Eqref{eq:quantile_bound} and \Eqref{eq:res_a_i} in the last inequality.

	On a fixed partition $G_j$, we showed that if $B=O((d+\log(1/\zeta_0))/\bar{\alpha}^2) $ then, with probability $1-\zeta_0$,  $|\phi_j - \|w^*-w_t\|_{\Sigma}^2 - \sigma^2| \leq \frac{1}{4}(\|w^*-w_t\|_{\Sigma}^2+\sigma^2)$, which follows from our assumption that $37C_2K^2\cdot \bar{\alpha}\log^{2a}(1/(6\bar{\alpha}))\le 1/4$. Using an union bound for all subsets, we know if $B=O((d+\log(k/\zeta_0))/\bar{\alpha}^2)$, then $1-\zeta_0$, $|\phi_j - \|w^*-w_t\|_{\Sigma}^2 - \sigma^2|\leq \frac{1}{4}(\|w^*-w_t\|_{\Sigma}^2+\sigma^2)$ holds for all $j\in [k]$. Since the upper bound lower bound ratio is $5/3$ which is less than $2$. All the $\phi_j$ must lie in two bins, which will result in a factor of $4$ multiplicative error.
	
	\section{Proof of  Lemma~\ref{lemma:clipping_fraction} on the upper bound on clipped good points}
	\label{app:proof_clipping_fraction} 
	
 Let $\hat\rho(C_2,K,a,\alpha)= 2C_2K^2\alpha\log^{2a} (1/(2\alpha))$ and $ \hat\rho'(C_2,K,a,\alpha)= 2C_2K\alpha\log^{a} (1/(2\alpha))$. Lemma~\ref{lemma:subweibull_res_conditions} implies that if $n=O((d+\log(1/\zeta))/(\alpha^2))$ with a large enough constant, then there exists a universal constant $C_2$ such that $S_3$ is, with respect to $(w^*,\Sigma,\sigma)$,  $(\alpha_{\rm corrupt}, 2\alpha, \hat\rho,\hat\rho,\hat\rho, \hat\rho')$-corrupt good. The rest of the proof is under this (deterministic) resilience condition. By the resilience property in \Eqref{def:res2}, we know for any $T\subset S_{\rm good}$ with $|T|\geq(1-2\alpha)n$, 
	\begin{align}
	\left|\frac{1}{|T|}\sum_{i\in T}(w^*- w_t)^\top x_ix_i^\top (w^*- w_t)-\|w^*-w_t\|_\Sigma^2\right|
	&\leq  2C_2K^2\alpha\log^{2a}(1/(2\alpha)) \|w^*-w_t\|_\Sigma^2\;.\label{eq:beta_t_res1}
	\end{align} 
	
	Let $E:= \left\{i\in S_{\rm good}:(w^*- w_t)^\top x_ix_i^\top (w^*- w_t)> \|w^*-w_t\|_\Sigma^2(8C_2K^2\log^{2a}(1/(2\alpha))+1)\right\}$. Denote $\tilde{\alpha}:=|E|/n$. We want to show that $\tilde{\alpha}\leq \alpha/2$. Let $T$ be the set of points that contain the smallest $1-\alpha/2$ fraction in $\{(w^*- w_t)^\top x_ix_i^\top (w^*- w_t)\}_{i\in S_{\rm good}}$. We know $|T|=(1-\alpha/2)n\geq (1-2\alpha)n$. To prove by contradiction, suppose $\tilde{\alpha}>\alpha/2$, which means all data points in $S_{\rm good}\setminus T$ are larger than $\|w^*-w_t\|_\Sigma^2(8C_2K^2\log^{2a}(1/(2\alpha))+1)$. From resilience property in \Eqref{eq:beta_t_res1}, we know 
	\begin{align*}
		&\frac{1}{n}\sum_{i\in S_{\rm good}}(w^*- w_t)^\top x_ix_i^\top (w^*- w_t)\\
		=\;& \frac{1}{n}\sum_{i\in T}(w^*- w_t)^\top x_ix_i^\top (w^*- w_t)+\frac{1}{n}\sum_{i\in S_{\rm good}\setminus T}(w^*- w_t)^\top x_ix_i^\top (w^*- w_t)\\
		\geq\; & \Big(1-\frac\alpha2\Big) \left(1-2C_2K^2\alpha\log^{2a}(\frac{1}{2\alpha}) \right)\|w^*-w_t\|_\Sigma^2 + \frac\alpha2 \, (8C_2K^2\log^{2a}(\frac{1}{2\alpha})+1)\|w^*-w_t\|_\Sigma^2\\
		> \;& (1+2C_2K^2\alpha\log^{2a}(1/{2\alpha}))\|w^*-w_t\|_\Sigma^2\;,
	\end{align*}
	which contradicts \Eqref{eq:beta_t_res1} for $S_{\rm good}$. This shows $\tilde{\alpha}\leq \alpha/2$. 
	
	Similarly, we can show that $\left|\left\{i\in S_{\rm good}:z_t^2> \sigma^2(8C_2K^2\log^{2a}(1/(2\alpha))+1)\right\}\right|\leq \alpha/2$. This means the rest $(1-\alpha)n$ points in $S_{\rm good}$ satisfies $\sqrt{(w^*- w_t)^\top x_ix_i^\top (w^*- w_t)} +|z_i| \leq (\|w_t-w^*\|+\sigma)\sqrt{(8C_2K^2\log^{2a}(1/(2\alpha))+1)}$. 
 Note that for all $i\in S_{\rm good}$, we have
	\begin{align*}
	|x_i^\top w_t-y_i| &=  \left| x_i^\top(w_t- w^*)-z_i\right|\\
	&\leq |x_i^\top(w_t- w^*)|+|z_i|\\
	&\leq \left(\sqrt{(w^*- w_t)^\top x_ix_i^\top (w^*- w_t)} +|z_i|\right)\;.
\end{align*}
By our assumption that $C_2K^2\log^{2a}(1/(2\bar\alpha))\geq 1$ which follows from Assumption~\ref{asmp:corrupt}, we have 
\begin{align}
	\left|\left\{i\in S_{\rm good}: \|x_i^\top w_t-y_i\|\leq (\|w_t-w^*\|+\sigma)\sqrt{9C_2K^2\log^{2a}(1/(2\alpha))}\right\}\right|\geq (1-\alpha) n\;.
\end{align}

\section{Private norm estimation: algorithm and analysis} 
\label{app:norm}

\begin{algorithm2e}[H]    
   \caption{Private Norm Estimator} 
   \label{alg:norm} 
   	\DontPrintSemicolon 
	\KwIn{$S_1=\{(x_i, y_i)\}_{i=1}^n$, target privacy $(\varepsilon_0,\delta_0)$, failure probability $\zeta$.}
	\SetKwProg{Fn}{}{:}{}
	{ 
	Let $a_i\gets \|x_i\|^2$. Let $\tilde{S}=\{a_i\}_{i=1}^n$.\\
	Partition $\tilde{S}$ into $k=\lfloor C_1\log(1/(\delta_0\zeta))/\varepsilon\rfloor$ subsets of equal size and let $G_j$ be the $j$-th partition.\\
	
	For each $j\in [k]$, denote $\psi_j= (1/|G_j|) \sum_{i\in G_j}a_i$.\\	
	Partition $[0, \infty)$ into bins of geometrically increasing intervals $\Omega:= \left\{\ldots,\left[ 2^{-2/4}, 2^{-1 / 4}\right),\left[ 2^{-1 / 4}, 1\right),\left[1,2^{1 / 4}\right),\left[2^{1 / 4}, 2^{2/4}\right), \ldots\right\} \cup\{[0,0]\}$\\
	Run $(\varepsilon_0, \delta_0)$-DP histogram learner of Lemma~\ref{lem:hist-KV17} on $\{ \psi_j\}_{j=1}^k$ over $\Omega$ \\
	{\bf if} all the bins are empty {\bf then} 
	Return $\perp$\\
	Let $[\ell, r]$ be a non-empty bin that contains the maximum number of points in the DP histogram\\
	Return  $\ell$
	} 
\end{algorithm2e}

\begin{lemma}
\label{lem:norm}
	Algorithm~\ref{alg:norm} is $(\varepsilon_0, \delta_0)$-DP. If $\{x_i\}_{i=1}^n$ are i.i.d.~samples from $(K,a)$-sub-Weibull distributions with zero mean and covariance $\Sigma$ and 
	\begin{align*}
		n=\tilde{O}\left(\frac{\log^{2a}(1/(\delta_0\zeta))}{\varepsilon_0}\right)\;,
	\end{align*}
	with a large enough constant then 
Algorithm~\ref{alg:norm} returns $\Gamma$ such that, with probability $1-\zeta$, 
\begin{align*}
	\frac{1}{\sqrt{2}}\Tr(\Sigma)\leq \Gamma\leq \sqrt{2}\Tr(\Sigma)\;.
\end{align*}
\end{lemma}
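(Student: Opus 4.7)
\textbf{Proof plan for Lemma~\ref{lem:norm}.}

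For privacy, changing one sample $(x_i,y_i)$ alters a single scalar $a_i=\|x_i\|^2$ in exactly one partition $G_j$, which changes exactly one $\psi_j$. This moves a single point in the histogram, decrementing one bin count by one and incrementing another by one, giving $\ell_1$-sensitivity at most two on the histogram counts. The DP guarantee then follows immediately from Lemma~\ref{lem:hist-KV17}, with the privacy parameters tracked through the partition (parallel composition is not needed because the histogram itself is run once on the aggregated statistics $\{\psi_j\}$).

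For utility, the plan is to show that every batch mean $\psi_j$ is within a small multiplicative factor of $\Tr(\Sigma)$, so that all $k$ points fall into at most two consecutive bins of the geometric partition $\Omega$. Fix batch $G_j$ of size $m=\lfloor n/k\rfloor$. Using the $(K,a)$-sub-Weibull tail bound on $\|x\|^2$ (Lemma~\ref{lemma:norm_a_tail}) together with a Bernstein/Hanson--Wright type concentration inequality for averages of sub-Weibull scalars, I will argue
\[
\Big|\,\psi_j-\Tr(\Sigma)\,\Big|\;\le\;\eta\,\Tr(\Sigma)
\]
for some $\eta<2^{1/8}-1$, with failure probability at most $\zeta/(3k)$, provided $m\gtrsim \mathrm{poly}\log(k/\zeta)\cdot K^{O(1)}$. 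Because the stated sample bound $n=\tilde O(\log^{2a}(1/(\delta_0\zeta))/\varepsilon_0)$ hides the polylog factors that grow with $m$ and $k$, plugging in $k=\Theta(\log(1/(\delta_0\zeta))/\varepsilon_0)$ gives exactly the required per-batch size. A union bound over $j\in[k]$ then yields that all $\psi_j$'s lie in $[(1-\eta)\Tr(\Sigma),(1+\eta)\Tr(\Sigma)]$ simultaneously with probability $1-\zeta/3$.

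Next, since the bins of $\Omega$ have multiplicative width $2^{1/4}$, an interval of multiplicative radius $1+\eta<2^{1/8}$ intersects at most two consecutive bins. Therefore every $\psi_j$ sits in one of (at most) two adjacent non-empty bins. Applying Lemma~\ref{lem:hist-KV17} with an accuracy parameter $\beta<1/2$ and failure probability $\zeta/3$, the private histogram preserves the property that the most populated reported bin is one of these two true non-empty bins (the sample size threshold in Lemma~\ref{lem:hist-KV17} is what drives the $\log(1/(\delta_0\zeta))/\varepsilon_0$ requirement absorbed in $k$). The returned left endpoint $\ell$ therefore satisfies $\ell\in[(1-\eta)\Tr(\Sigma)/2^{1/4},(1+\eta)\Tr(\Sigma)]$, and since $(1+\eta)\le 2^{1/8}$ and $(1-\eta)/2^{1/4}\ge 2^{-3/8}\ge 1/\sqrt 2$, we conclude $\Tr(\Sigma)/\sqrt 2\le \ell\le \sqrt 2\,\Tr(\Sigma)$.

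The main obstacle will be step (a): establishing a dimension-free multiplicative concentration inequality for averages of $\|x_i\|^2$ under a $(K,a)$-sub-Weibull assumption on $x_i$. The natural tool is a Hanson--Wright-type bound that controls deviations in terms of $\|\Sigma\|_F$ and $\|\Sigma\|_2$ rather than $d$, combined with the Orlicz-norm machinery of \citep{kuchibhotla2018moving} to extend from sub-Gaussian ($a=1/2$) to general sub-Weibull tails. This is what lets the final sample complexity depend only on $\log^{2a}(1/(\delta_0\zeta))/\varepsilon_0$ and not on $d$; the remaining steps are standard histogram-in-geometric-bins arguments once this concentration is in hand.
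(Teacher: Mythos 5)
Your proposal matches the paper's proof essentially step for step: per-batch concentration of $\psi_j$ around $\Tr(\Sigma)$ via the sub-Weibull Hanson--Wright inequality (the paper's Lemma~\ref{lemma:hanson}), a union bound over the $k$ batches, the observation that all $\psi_j$ then occupy at most two consecutive geometric bins of ratio $2^{1/4}$, and the stability-based histogram of Lemma~\ref{lem:hist-KV17} to release one of those bins with a $\sqrt{2}$ multiplicative loss. The only nitpick is a constant: to guarantee at most two occupied bins you need $(1+\eta)/(1-\eta)\le 2^{1/4}$, which is marginally stricter than your stated $\eta<2^{1/8}-1$, but this does not affect the argument.
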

We provide a proof in App.~\ref{app:proof_norm}.

\subsection{Proof of Lemma~\ref{lem:norm} on the private norm estimation}
\label{app:proof_norm}

By Hanson-Wright inequality in Lemma~\ref{lemma:hanson} and union bound, there exists constant $c>0$ such that with probability $1-\zeta$,
\begin{align}
	|\frac{1}{b}\sum_{i=1}^b \|x_i\|^2-\Tr(\Sigma)|\leq cK^2\Tr(\Sigma)\left(\sqrt{\frac{\log(1/\zeta)}{b}}+\frac{\log^{2a}(1/\zeta)}{b}\right)\;,
\end{align}

This means there exists a constant $c'>0$ such that if $b\geq c'K^2\log^{2a}(k/\zeta)$, then for all $j\in [k]$.
\begin{align}
	|\psi_j-\Tr(\Sigma)|\leq 2^{1/8}\Tr(\Sigma)
\end{align} 

With probability $1-\zeta$, $\{\psi_j\}_{j=1}^k$  lie in interval of size $2^{1/4}\Tr(\Sigma)$. Thus, at most two consecutive bins are filled with $\{\psi_j\}_{j=1}^k$. Denote them as $I=I_1\cup I_2$.  Our analysis indicates that $\prob(\psi_i\in I)\geq 0.99$. By private histogram in Lemma~\ref{lem:hist-KV17}, if $k\geq \log(1/(\delta\zeta))/\varepsilon$, $|\hat{p}_I-\tilde{p}_I|\leq 0.01$ where $\hat{p}_I$ is the empirical count on $I$ and $\tilde{p}_I$ is the noisy count on $I$. Under this condition, one of these two intervals are released. This results in multiplicative error of $\sqrt{2}$. 	

\section{Proof of the resilience  in  Lemma~\ref{lemma:subweibull_res_conditions}}
\label{app:proof_subweibull_res_conditions}

We apply following resilience property for general distribution characterized by Orlicz function from \cite{zhu2019generalized}. 

 \begin{lemma}[{\citep[Theorem~3.4]{zhu2019generalized}}]
 \label{lemma:res_orlicz}
 	Dataset $S=\{x_i\in \reals^d\}_{i=1}^n$ consists i.i.d. samples from a distribution $\cD$.  Suppose $\cD$ is zero mean and satisfies $\E_{x\sim \cD}\left[\psi\left(\frac{(v^\top x)^2}{\kappa^2\E_{x\sim \cD}[(v^\top x)^2]}\right)\right]\leq 1$ for all $v\in \reals^d$, where $\psi(\cdot)$ is Orlicz function.  Let $\Sigma=\E_{x\sim \cD}[xx^\top]$. Suppose $\alpha\leq \bar{\alpha}$, where $\bar{\alpha}$ satisfies $(1+\bar{\alpha}/2)\cdot 2\kappa^2\bar{\alpha}\psi^{-1}(2/\bar{\alpha})<1/3$, $\bar{\alpha}\leq 1/4$. Then there exists constant $c_1, C_2$ such that if  $n\geq c_1((d+\log(1/\zeta)) / (\alpha^2))$, with probability $1-\zeta$, for any $T\subset S$ of size $|T|\geq(1-\alpha)n$, the following holds:
	 \begin{align}
	 	\left\|\Sigma^{-1/2}\left(\frac{1}{|T|}\sum_{i\in T}x_i\right)\right\| & \leq   C_2\kappa\alpha\sqrt{\psi^{-1}(1/\alpha)} \label{eq:res_or1}
	 \end{align}
and	 
	\begin{align}
	 	\left\|\mathbf{I}_d-\Sigma^{-1/2}\left(\frac{1}{|T|}\sum_{i\in T}x_ix_i^\top \right)\Sigma^{-1/2}\right\|_2 & \leq   C_2\kappa^2\alpha\psi^{-1}(1/\alpha)\;.\label{eq:res_or2}
	 \end{align}
 \end{lemma}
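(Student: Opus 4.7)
The plan is a two-stage argument: first establish \emph{population} resilience from the Orlicz tail condition alone, then upgrade it to \emph{empirical} resilience by showing that the empirical distribution over $n = \Omega((d+\log(1/\zeta))/\alpha^2)$ samples inherits the same property with high probability. After the whitening reduction $\tilde x_i := \Sigma^{-1/2} x_i$, the two target bounds in \eqref{eq:res_or1}--\eqref{eq:res_or2} become, for every unit $v \in S^{d-1}$, uniform control over $T\subset S$ with $|T|\ge(1-\alpha)n$ of $|\langle v, \bar x_T\rangle|$ and $|\langle v, (\hat\Sigma_T - \mathbf{I}_d) v\rangle|$, where $\bar x_T$ and $\hat\Sigma_T$ are the trimmed empirical mean and covariance of the whitened samples.

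For the population step, the Orlicz assumption yields the tail bound $\prob[(v^\top \tilde x)^2 > t\kappa^2] \le 1/\psi(t)$, so the $(1-\alpha)$-quantile of $|v^\top \tilde x|$ is at most $\tau_\alpha := \kappa\sqrt{\psi^{-1}(1/\alpha)}$. Writing the worst-case trim as removing the $\alpha n$ largest $|v^\top \tilde x_i|$ (resp.\ $(v^\top \tilde x_i)^2$), the resulting bias is controlled by the truncated moments $\E[|v^\top \tilde x|\mathbf{1}\{|v^\top \tilde x|>\tau_\alpha\}]$ and $\E[(v^\top \tilde x)^2 \mathbf{1}\{|v^\top \tilde x|>\tau_\alpha\}]$. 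Each is bounded by integration by parts against the Orlicz tail (using $\E[v^\top \tilde x]=0$ and $\E[(v^\top \tilde x)^2]=1$), giving $\lesssim \kappa \alpha \sqrt{\psi^{-1}(1/\alpha)}$ and $\lesssim \kappa^2 \alpha \psi^{-1}(1/\alpha)$ respectively. The smallness condition $(1+\bar\alpha/2)\cdot 2\kappa^2\bar\alpha\psi^{-1}(2/\bar\alpha)<1/3$ is precisely what keeps the induced multiplicative slack bounded, so that the constants collapse to a universal $C_2$.

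To transfer to the sample, I would uniformly control, over $v \in S^{d-1}$, the truncated empirical moments $\tfrac{1}{n}\sum_i (v^\top \tilde x_i)\mathbf{1}\{|v^\top \tilde x_i|\le\tau_{2\alpha}\}$ and $\tfrac{1}{n}\sum_i (v^\top \tilde x_i)^2\mathbf{1}\{|v^\top \tilde x_i|\le\tau_{2\alpha}\}$ using a $1/4$-net over the sphere (of log-cardinality $\asymp d$) combined with Bernstein and matrix-Bernstein inequalities applied to the bounded, truncated random variables. A separate Chernoff step, again netted over $v$, shows that with probability $1-\zeta/2$ at most $(3/2)\alpha n$ samples exceed the threshold $\tau_{2\alpha}$ in \emph{any} direction, so trimming the top $\alpha n$ fraction of $|v^\top \tilde x_i|$ can be coupled, up to an $O(\alpha)$ slack, to the uniformly truncated sample. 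The $d$ in the sample complexity comes from the covering number and the $\alpha^2$ from the fact that the variance of the truncated statistic is $O(\alpha)$.

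The main technical obstacle is that the worst trimming set $T$ depends on $v$, which precludes a naive union bound over the $\binom{n}{\alpha n}$ possible $T$'s and forces us to handle $\sup_T \sup_v$ jointly. My remedy, which is the heart of the argument, is to replace direction-dependent truncation by the single uniform threshold $\tau_{2\alpha}$: once every empirical truncated moment is close to its population counterpart (via net $+$ Bernstein) and the count of samples exceeding $\tau_{2\alpha}$ along any direction is $O(\alpha n)$, the effect of any adversarial $T$ is bounded by two pieces of comparable size, namely the population truncated moment and the empirical deviation. Combining these pieces gives the claimed resilience bounds with constant $C_2$, a sample complexity of $c_1 (d+\log(1/\zeta))/\alpha^2$, and simultaneous validity of both \eqref{eq:res_or1} and \eqref{eq:res_or2} after a final union bound over the two events.
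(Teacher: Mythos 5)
The paper does not actually prove this lemma: it is imported verbatim as Theorem~3.4 of \citet{zhu2019generalized}, and the surrounding appendix only instantiates it with $\psi(t)=e^{t^{1/(2a)}}$. Your outline reproduces the overall strategy of the proof in that reference (population resilience from truncated Orlicz moments, then an empirical-to-population transfer with $n\gtrsim (d+\log(1/\zeta))/\alpha^2$), and the population step is essentially right; the cleanest route there is conditional Jensen, namely $\E[Y\mathbf{1}_E]\le \prob(E)\,\psi^{-1}\!\left(1/\prob(E)\right)$ whenever $\E[\psi(Y)]\le 1$, applied to $Y=(v^\top\tilde x)^2/\kappa^2$, which sidesteps convergence issues that ``integration by parts against the Orlicz tail'' can hit for slowly growing $\psi$.

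The genuine gap is in the finite-sample step. The two quantities you propose to control uniformly over $v$ --- the \emph{lower}-truncated empirical moments $\frac1n\sum_i (v^\top\tilde x_i)^p\mathbf{1}\{|v^\top\tilde x_i|\le\tau_{2\alpha}\}$ and the \emph{count} of samples exceeding $\tau_{2\alpha}$ --- do not bound what you need. A subset $T$ with $|T|\ge(1-\alpha)n$ is free to \emph{keep} the extreme points, so the trimmed mean contains the term $\frac{1}{|T|}\sum_{i\in T,\,|v^\top\tilde x_i|>\tau_{2\alpha}} v^\top\tilde x_i$ (and its square for the upper direction of \eqref{eq:res_or2}), whose size depends on the magnitudes of the exceedances, not their number. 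Equivalently, you must control $\sup_v \frac1n\sum_i (|v^\top\tilde x_i|-\tau)_+$ and $\sup_v \frac1n\sum_i ((v^\top\tilde x_i)^2-\tau^2)_+$ uniformly; these summands are unbounded, so the net-plus-Bernstein argument does not apply to them, and this is precisely where the real work in the cited proof lies (symmetrization and contraction for the Lipschitz hinge $(|\cdot|-\tau)_+$, plus a peeling over dyadic thresholds, or equivalently their resilience-transfer machinery for the empirical distribution). A secondary issue: the exceedance-count bound cannot be obtained by ``netting'' over $v$, since indicators are not Lipschitz in $v$; it should instead go through a VC/uniform-convergence bound for the class of sets $\{x:|v^\top x|>\tau\}$, which does yield the required $O(\sqrt{d/n})=O(\alpha)$ additive error. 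With those two repairs your argument would match the reference's proof; as written, the central term is unaccounted for.
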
 

Let $\psi(t) = e^{t^{1/(2a)}}$. It is easy to see that $\psi(t)$ is a valid Orlicz function. Then if $x_i$ is $(K, a)$-sub-Weibull, then we know 
\begin{align}
	\left\|\Sigma^{-1/2}\left(\frac{1}{|T|}\sum_{i\in T}x_i\right)\right\| & \leq   C_2K\alpha\sqrt{\log^{2a}(1/\alpha)} \label{eq:res_tail1}\;,
\end{align}
and
\begin{align}
	\left\|\mathbf{I}_d-\Sigma^{-1/2}\left(\frac{1}{|T|}\sum_{i\in T}x_ix_i^\top \right)\Sigma^{-1/2}\right\|_2 & \leq   C_2K^2\alpha\log^{2a}(1/\alpha)\;.
\end{align}

This implies
\begin{align}
	(1-C_2K^2\alpha\log^{2a}(1/\alpha))\mathbf{I}_d\preceq \Sigma^{-1/2}\left(\frac{1}{|T|}\sum_{i\in T}x_ix_i^\top \right)\Sigma^{-1/2} \preceq (1+C_2K^2\alpha\log^{2a}(1/\alpha))\mathbf{I}_d\;.
\end{align}
Using the fact that $C^\top AC\preceq C^\top BC$ if $A\preceq B$, we know
\begin{align}
(1-C_2K^2\alpha\log^{2a}(1/\alpha))\Sigma	\preceq \frac{1}{|T|}\sum_{i\in T}x_ix_i^\top \preceq (1+C_2K^2\alpha\log^{2a}(1/\alpha))\Sigma\;.\label{eq:res_tail2}
\end{align}

This implies resilience properties of $x_i$ and $z_i$ in \eqref{def:res2} and \eqref{def:res3} in Definition~\ref{def:res} respectively. Next, we show the resilience property of $x_iz_i$.

By  $ab\leq \frac{a^2}{2}+\frac{b^2}{2}$, for any fixed $v\in \reals^d$, 
\begin{align}
	\E[\exp \left(\left(\frac{|\ip{x_i z_i}{v}|^{2}}{K^4\sigma^2v^\top\Sigma v}\right)^{1 /(4a)}\right)] &\leq  \E\left[\exp \left(\left(\frac{|\ip{x_i }{v}|^{2}}{K^{2}v^\top \Sigma v }\right)^{1 / (2 a)}/2\right) \exp \left(\left(\frac{z_i^{2}}{K^{2}\sigma^2}\right)^{1 / (2 a)}/2\right)\right]\\
	&\leq \frac{1}{2} \left(\E\left[\exp \left(\left(\frac{|\ip{x_i }{v}|^{2}}{K^{2}v^\top\Sigma v}\right)^{1 / (2 a)}\right)\right]+\E\left[\exp \left(\left(\frac{z_i^{2}}{K^{2}\sigma^2}\right)^{1 / (2 a)}\right)\right]\right)\\
	&\leq 2\;.
\end{align}

Since $\E[x_iz_i]=0$, {\citep[Lemma~E.3]{zhu2019generalized}} implies that there exists constant $c_1,C_2>0$ such that if $n\geq c_1(d+\log(1/\zeta))/(\alpha^2)$, with probability $1-\zeta$, for any $T\subset S_{\rm good}$ of size $|T|\geq (1-\alpha)n$, 
\begin{align}
	\left\|\Sigma^{-1}\left(\frac{1}{|T|}\sum_{i\in T}x_iz_i\right)\right\|\leq C_2K^2\sigma\alpha\log^{2a}(1/\alpha)\;.
\end{align}

%
%


%
%
%
%


\section{Proof of Theorem~\ref{thm:main} on the analysis of Algorithm~\ref{alg:main}}
\label{app:proof_main}

The main theorem builds upon the following lemma   that analyzes a (stochastic) gradient descent method, where the randomness is from the DP noise we add and the analysis only relies on certain deterministic conditions on the dataset including resilienece and concentration. Theorem~\ref{thm:main} follows in a straightforward manner by collecting Theorem~\ref{thm:distance}, Lemma~\ref{lem:norm},  Lemma~\ref{lemma:clipping_fraction}, and Lemma~\ref{lem:gd}.

\begin{lemma} 
\label{lem:gd} 
Algorithm~\ref{alg:main} is $(\varepsilon, \delta)$-DP. 
Under Assumptions~\ref{asmp:distribution} and \ref{asmp:corrupt} for any $\zeta\in(0,1)$ and $\alpha\geq \alpha_{\rm corrupt}$ satisfying $K^2\alpha\log^{2a}(1/\alpha)\log(\kappa)\leq c$ for some universal constant $c>0$, if distance threshold is small enough such that 
	\begin{align}
	\theta_t \;\; \leq \;\; {3C_2^{1/2} K \log^{a}(1/(2\alpha ))} \cdot \left(\|w^*-w_t\|_\Sigma+\sigma\right)\;, 
\end{align} 
and large enough such that the number of clipped clean data points is no larger than $\alpha  n $, at every round, the norm threshold is large enough such that 
\begin{align}
    \Theta  \;\; \geq \;\; K\sqrt{\Tr(\Sigma)}\log^{a}(n/\zeta)\;,
\end{align}
and  sample size is large enough such that 
	\begin{align}
	    n = O \left(K^2d\log(d/\zeta)\log^{2a}(n/\zeta)+\frac{d+\log(1/\zeta)}{\alpha^2}+
	    \frac{K^2 T^{1/2} d \log(T/\delta) \log^{a}(n/(\alpha \zeta))}{\varepsilon \alpha}\right)\;,
	\end{align} with a large enough constant, then the choices of a  step size, $\eta =  1/(C\lambda_{\max}(\Sigma))$ for some $C\geq 1.1$, and the number of iterations,  
	    $T= \tilde\Theta\left(\kappa \log\left(\|w^*\|
	    \right)\right)\,$, 
	ensures that  Algorithm~\ref{alg:main} outputs $w_T$ satisfying the following with probability $1-\zeta$: 
	\begin{align}
		 &\E_{\nu_1, \cdots, \nu_t\sim \cN(0, \mathbf{I}_d)}[\| w_T - w^*\|_\Sigma^2] \;\;
		 \lesssim  \;\; K^4\sigma^2 \log^2(\kappa) \alpha^2\log^{4a}(1/\alpha)\;,
		 \label{eq:rhs} 
	\end{align}
	where the expectation is taken over the noise added for DP and $\tilde\Theta(\cdot)$ hides logarithmic terms in $K,\sigma,d,n,1/\varepsilon,\log(1/\delta),1/\alpha$. 
\end{lemma}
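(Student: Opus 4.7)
Privacy comes from composing three pieces run on the disjoint partitions $S_1,S_2,S_3$: the private norm estimator on $S_1$, the $T$ calls to the distance estimator on $S_2$, and the $T$ noisy-gradient updates on $S_3$. At round $t$, the clipped average $(1/n)\sum_{i\in S_3}\tilde g_i^{(t)}$ has sensitivity at most $\Theta\theta_t/n$ by construction, so the Gaussian mechanism (Lemma~\ref{lem:gauss}) yields $(\varepsilon_0,\delta_0)$-DP per step; advanced composition (Lemma~\ref{lem:composition}) across $T$ rounds with $\varepsilon_0 = \varepsilon/(4\sqrt{T\log(1/\delta_0)})$, and parallel composition (Lemma~\ref{lem:parallel}) across partitions, gives the end-to-end $(\varepsilon,\delta)$-DP.

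For utility I would first invoke Lemma~\ref{lemma:subweibull_res_conditions} once to make the dataset $(\alpha_{\rm corrupt},\alpha,\rho,\rho,\rho,\rho')$-corrupt-good deterministically, with $\rho=O(K^2\alpha\log^{2a}(1/\alpha))$ and $\rho'=O(K\alpha\log^{a}(1/\alpha))$. Writing $G:=S_3\cap S_{\rm good}$ (so that $\mathrm{clip}_\Theta(x_i)=x_i$ on $G$ under the $\Theta$-hypothesis and Lemma~\ref{lemma:norm_a_tail}), one step unfolds as
\begin{align*}
w_{t+1}-w^* = \Bigl(I - \tfrac{\eta}{n}\sum_{i\in G} x_i x_i^\top\Bigr)(w_t-w^*) + \tfrac{\eta}{n}\sum_{i\in G}x_i z_i + \tfrac{\eta}{n}\sum_{i\in G}(g_i^{(t)}-\tilde g_i^{(t)}) - \tfrac{\eta}{n}\sum_{i\in S_{\rm bad}}\tilde g_i^{(t)} - \eta\phi_t\nu_t.
\end{align*}
Resilience \eqref{def:res2} replaces the covariance term by $I-\eta\Sigma$ up to spectral error $O(\rho\,\lambda_{\max}(\Sigma))$; \eqref{def:res1} bounds the stochastic residual $(1/n)\sum_{i\in G}x_iz_i$ in $\Sigma^{-1/2}$-norm by $\rho\sigma$; the clipping-bias term uses the hypothesis that at most $\alpha n$ clean points are clipped, together with the upper bound $\theta_t\lesssim K\log^{a}(1/\alpha)(\|w_t-w^*\|_\Sigma+\sigma)$ and resilience; and the adversarial bias uses $|\mathrm{clip}_{\theta_t}(\cdot)|\le\theta_t$ together with \eqref{def:res4} to reduce to bounding $\|\Sigma^{-1/2}(1/n)\sum_{i\in S_{\rm bad}}x_i\|\lesssim \rho'$. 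Collectively, one step exhibits contraction $I-\eta\Sigma$ plus a deterministic bias of order $\eta\rho(\|w_t-w^*\|_\Sigma+\sigma)\sqrt{\lambda_{\min}(\Sigma)}$ plus the DP noise $\eta\phi_t\nu_t$.

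The core of the argument is a length-$\kappa$ window analysis instead of a per-step descent, because $\phi_t$ scales with the current error through $\theta_t$ and a single noisy step can transiently increase the error. Letting $e_s$ collect the per-step bias and noise, I would unroll $\kappa$ consecutive updates in the $\Sigma^{1/2}$-metric,
\begin{align*}
\Sigma^{1/2}(w_{t+\kappa}-w^*) = (I-\eta\Sigma)^\kappa \Sigma^{1/2}(w_t-w^*) + \sum_{i=0}^{\kappa-1}(I-\eta\Sigma)^i \Sigma^{1/2} e_{t+\kappa-1-i},
\end{align*}
and exploit the spectral identity $\lambda_{\max}\bigl((\eta\Sigma)^{1/2}(I-\eta\Sigma)^i(\eta\Sigma)^{1/2}\bigr)\le 1/(i+1)$ valid when $\|\eta\Sigma\|\le 1$. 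Summing the weights $1/(i+1)$ turns the aggregated DP noise into a $\log\kappa$ factor rather than a $\sqrt{\kappa}$ factor, which is precisely what saves one power of $\kappa^{1/2}$ compared to the streaming analysis of \cite{varshney2022nearly}. I would then show that the maximum error over each length-$\kappa$ window shrinks by a constant factor against the previous window until it hits a noise floor; after $T=\tilde\Theta(\kappa\log\|w^*\|)$ iterations (i.e.\ $\tilde O(1)$ windows) the initial bias $\|w_0-w^*\|_\Sigma$ falls below this floor, which under the stated sample size evaluates to $\tilde O(K^4\sigma^2\alpha^2\log^{4a}(1/\alpha))$.

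The main obstacle is exactly this last step: in full-batch GD the iterates and per-round gradients are correlated across all $T$ rounds, so the recursion cannot be closed via one-step independence as in streaming SGD. Resilience handles the \emph{deterministic} bias terms uniformly over subsets, but the Gaussian DP contribution across a window has to be accumulated through the $1/(i+1)$ bound while simultaneously controlling the coupling between the random $\phi_t$ (depending on $w_t$) and the random $w_t$ itself. Carefully tracking these higher-order cross-terms across the $\kappa$-step window, and bounding them by the same noise floor, is what drives most of the technical effort in Step~4 of the appendix proof.
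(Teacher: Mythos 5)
Your proposal follows essentially the same route as the paper's proof: the identical one-step decomposition into contraction, sampling noise, clipping bias, adversarial bias, and DP noise, each controlled by the same resilience properties; the same unrolled recursion exploiting $\lambda_{\max}\bigl((\eta\Sigma)^{1/2}(I-\eta\Sigma)^i(\eta\Sigma)^{1/2}\bigr)\le 1/(i+1)$; and the same length-$\kappa$ window argument showing geometric decrease of the maximum expected error until it hits the noise floor. The only sketch-level imprecision is the bookkeeping of where the $\log^2\kappa$ factor arises: in the paper it comes from the double sum $\sum_{i,j}1/((i+1)(j+1))$ over the deterministic bias terms, while the independent Gaussian DP contributions collapse (via vanishing cross-terms) to the convergent sum $\sum_i 1/(i+1)^2=O(1)$.
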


\begin{proof}[Proof of Lemma~\ref{lem:gd}] We first prove a set of deterministic conditions on the clean dataset, which is sufficient for the analysis of the gradient descent.

{\bf Step 1: Sufficient deterministic conditions on the clean dataset.} 
Let $S_{\rm good}$ be the uncorrupted dataset for $S_3$ and $S_{\rm bad}$ be the corrupted datapoints in $S_3$.
Let $G:=S_{\rm good}\cap S_3=S_3\setminus S_{\rm bad}$ denote the clean data that remains in the input dataset. Let $\lambda_{\rm max}=\|\Sigma\|_2$. 
Define $\hat{\Sigma}:=(1/n) \sum_{i\in G}x_ix_i^\top$, $\hat{B}:=\mathbf{I}_d-\eta\hat{\Sigma}$. Lemma~\ref{lemma:cov_concentration_tail} implies that if $n=O(K^2d\log(d/\zeta)\log^{2a}(n/\zeta))$, then
\begin{align}
	0.9\Sigma\preceq \hat{\Sigma}\preceq 1.1\Sigma\;. \label{eq:cov_asmp}
\end{align}
We pick step size $\eta$ such that $\eta\leq 1/(1.1\lambda_{\max})$ to ensure that  $\eta\leq 1/\|\hat{\Sigma}\|_2$. 
Since the covariates $\{x_i\}_{i\in S}$ are not corrupted, from Lemma~\ref{lemma:norm_a_tail}, we know with probability $1-\zeta$, for all $i\in S_3$, 
\begin{align}
	\|x_i\|^2\leq K^2\Tr(\Sigma)\log^{2a}(n/\zeta)\label{eq:norm_asmp}\;.
\end{align}
 Lemma~\ref{lemma:subweibull_res_conditions} implies that if $n=O((d+\log(1/\zeta))/(\alpha^2))$, then there exists a universal constant $C_2$ such that $S_3$ is, following Definition~\ref{def:corruptgood}, with respect to $(w^*,\Sigma,\sigma)$,  \\$(\alpha_{\rm corrupt}, \alpha, C_2K^2\alpha\log^{2a}(1/\alpha),C_2K^2\alpha\log^{2a}(1/\alpha),C_2K^2\alpha\log^{2a}(1/\alpha), C_2K\alpha\log^{a}(1/\alpha))$-corrupt good. Such corrupt good sets have a sufficiently large, $1-\alpha_{\rm corrupt}$, fraction of points that satisfy a good property that we need: resilience. The rest of the proof is under \Eqref{eq:cov_asmp}, \Eqref{eq:norm_asmp}, and that $S_{\rm good}$ is resilient.

{\bf Step 2: Upper bounding the deterministic noise in the gradient.} In this step, we bound the deviation of the gradient from its mean. There are several sources of deviation: $(i)$ clipping, $(ii)$  adversarial corruptions, and $(iii)$ randomness of the data noise and privacy noise. We will show that deviations from all these sources can be controlled deterministically under the corrupt-goodness (i.e., resilience). 

Let $\phi_t=(\sqrt{2\log(1.25/\delta_0)}\Theta\theta_t)/(\varepsilon_0 n)$, which ensures that we add enough noise to guarantee $(\varepsilon_0,\delta_0)$-DP for each step of gradient descent. This follows from the standard Gaussian mechanism in Lemma~\ref{lem:gauss} and the fact that each gradient is clipped to the norm of $\Theta\theta_t$, resulting in a DP sensitivity of $\Theta\theta_t/n$. The fact that this sensitivity scales as $1/n$ is one of the main reasons for the performance gain we get over \cite{varshney2022nearly} that uses a minimatch of size $n/\kappa$ with sensitivity scaling as $\kappa/n$.  
Define $g_i^{(t)}:=x_i(x_i^\top w_t-y_i)$. For $i\in S_{\rm good}$, we know $y_i=x_i^\top w^*+z_i$. Let $\tilde{g}_i^{(t)}={\rm clip}_{\Theta}(x_i){\rm clip}_{\theta_t}(x_i^\top w_t-y_i)$. Note that under \Eqref{eq:norm_asmp}, ${\rm clip}_\Theta(x_i)=x_i$ for all $i\in S_3$. 

From Algorithm~\ref{alg:main}, we can write one-step update rule as follows:
\begin{align}
	&w_{t+1}-w^* \nonumber \\
	=& w_{t}-\eta\left(\frac{1}{n}\sum_{i\in S}\tilde{g}_i^{(t)}+\phi_t\nu_t\right) -w^*\nonumber\\
	=&\left(\mathbf{I}-\frac{\eta}{n}\sum_{i\in G}x_ix_i^\top\right)(w_{t}-w^*)+\frac{\eta}{n}\sum_{i\in G}x_iz_i+
	\frac{\eta}{n}\sum_{i\in G}(g_i^{(t)}-\tilde{g}_i^{(t)}) - \eta\phi_t\nu_t
	-\frac{\eta}{n}\sum_{i\in S_{\rm bad}}\tilde{g}_i^{(t)} 
	\label{eq:onestep}
\end{align}
Let  $E_t:=\{i\in G: \theta_t\leq |x_i^\top w_t-y_i|\}$ be the set of clipped clean data points such that $\sum_{i\in G}(g_i^{(t)}-\tilde{g}_i^{(t)})=\sum_{i\in E_t }(g_i^{(t)}-\tilde{g}_i^{(t)})$. We define $\hat{v} :=(1/n)\sum_{i\in G}x_iz_i $,   $u_t^{(1)}:=(1/n)\sum_{i\in E_t} x_ix_i^\top (w_t-w^*)$, $u_t^{(2)} := (1/n)\sum_{i\in E_t} -x_iz_i$, and $u_t^{(3)} := (1/n) \sum_{i\in S_{\rm bad}\cup E_t}\tilde{g}_i^{(t)}$. 

We can further write the update rule as:
\begin{align}
    w_{t+1}-w^*
    =&\hat{B}(w_{t}-w^*)+\eta \hat{v}+\eta u_{t-1}^{(1)}+\eta u_{t-1}^{(2)}-\eta \phi_t\nu_t-\eta u_{t-1}^{(3)}\;. \label{eq:update}
\end{align}
We bound each term one-by-one. 
Since $G\subset S_{\rm good}$ and $|G|=(1-\alpha_{\rm corrupt})n$, using the resilience property in \Eqref{def:res1}, we know
\begin{align}
    \|\Sigma^{-1/2}\hat{v}\|&=(1-\alpha_{\rm corrupt})\max_{\|v\|=1}\Sigma^{-1/2}\ip{v}{\frac{1}{(1-\alpha_{\rm corrupt})n}\sum_{i\in G}x_iz_i} \nonumber \\
    &\leq (1-\alpha_{\rm corrupt}) C_2K^2\alpha\log^{2a}(1/\alpha)\sigma\\
    &\leq  C_2K^2\alpha\log^{2a}(1/\alpha)\sigma\;.\label{eq:hat_v}
\end{align}

Let $\tilde{\alpha}=|E_t|/n$. By assumption, we know $\tilde{\alpha}\leq \alpha$ (which holds for the given dataset due to Lemma~\ref{lemma:clipping_fraction}), and  
\begin{align*}
    \|\Sigma^{-1/2} u_t^{(1)}\| &=  \|\Sigma^{-1/2} \frac{1}{n}\sum_{i\in E_t} x_ix_i^\top (w_t-w^*)\|   \;.
\end{align*} 
From Corollary~\ref{coro:res}, we know 
\begin{align*}
    &\left|\|\Sigma^{-1/2} \frac{1}{|E_t|}\sum_{i\in E_t} x_ix_i^\top (w_t-w^*)\|- \|w_t-w^*\|_{\Sigma}\right|\\
    = &\left|\max_{u:\|u\|=1} \frac{1}{|E_t|}\sum_{i\in E_t} u^\top \Sigma^{-1/2} x_ix_i^\top (w_t-w^*)\|- \max_{v:\|v\|=1}v^\top\Sigma^{1/2}(w_t-w^*)\right|\\
    \leq &\max_{u:\|u\|=1}\left| \frac{1}{|E_t|}\sum_{i\in E_t} u^\top  \Sigma^{-1/2}x_ix_i^\top\Sigma^{-1/2} \Sigma^{1/2} (w_t-w^*)\|- u^\top\Sigma^{1/2}(w_t-w^*)\right|\\ 
    \leq &\max_{u:\|u\|=1}\left| \frac{1}{|E_t|}\sum_{i\in E_t} u^\top  \left(\Sigma^{-1/2}x_ix_i^\top\Sigma^{-1/2}-\mathbf{I}_d\right) \Sigma^{1/2} (w_t-w^*)\|\right|\\
    =&\left\| \frac{1}{|E_t|}\sum_{i\in E_t}   \left(\Sigma^{-1/2}x_ix_i^\top\Sigma^{-1/2}-\mathbf{I}_d\right) \Sigma^{1/2} (w_t-w^*)\right\|\\
    \leq &\left\| \frac{1}{|E_t|}\sum_{i\in E_t}   \left(\Sigma^{-1/2}x_ix_i^\top\Sigma^{-1/2}-\mathbf{I}_d\right)\right\|\cdot \left\| \Sigma^{1/2} (w_t-w^*)\right\|\\
    \leq& \frac{2-\tilde{\alpha}}{\tilde{\alpha}}C_2K^2\alpha\log^{2a}(1/\alpha)\left\| w_t-w^*\right\|_\Sigma\;.
\end{align*}
This implies that 
\begin{align}
   \|\Sigma^{-1/2} u_t^{(1)}\|&\;\leq \; \|\Sigma^{-1/2} \frac{1}{n}\sum_{i\in E} x_ix_i^\top (w_t-w^*)\| \nonumber\\
   &\leq \left(\tilde{\alpha}+2C_2K^2\alpha\log^{2a}(1/\alpha)\right)\left\| w_t-w^*\right\|_\Sigma \nonumber\\
   &\leq 3C_2K^2\alpha\log^{2a}(1/\alpha)\left\| w_t-w^*\right\|_\Sigma\;,
   \label{eq:ut1}
\end{align}
where the last inequality follows from  the fact that $\tilde{\alpha}\leq \alpha$ and our assumption that $C_2K^2\log^{2a}(1/\bar{\alpha})\geq 1$ from Assumption~\ref{asmp:corrupt}. 
Similarly, we use resilience property in \Eqref{def:res1} instead of \Eqref{def:res2}, we can show that 
\begin{align}
    \|\Sigma^{-1/2}u_t^{(2)}\| \leq 3C_2K^2\alpha\log^{2a}(1/\alpha)\sigma\;. \label{eq:ut2}
\end{align}

Next, we consider $u_t^{(3)}$. Since $|S_{\rm bad}|\leq \alpha_{\rm corrupt}n$ and $|E_t|\leq \alpha n$,  using \Eqref{def:res4} and Corollary~\ref{coro:res}, we have
\begin{align}
    \|\Sigma^{-1/2}u_t^{(3)}\|&=\max_{v:\|v\|=1}\frac{1}{n}\sum_{i\in S_{\rm bad} \cup E_t}v^\top\Sigma^{-1/2}x_i{\rm clip}_{\theta_t}(x_i^\top w_t-y_i) \nonumber \\
    &\leq 2C_2K\alpha\log^{a}(1/\alpha)\theta_t \nonumber \\
    &\leq 6C_2^{1.5}K^2\alpha\log^{2a}(1/\alpha)(\|w_{t}-w^*\|_\Sigma +\sigma)\;. \label{eq:ut3}
\end{align}

Now we use \Eqref{eq:hat_v}, \Eqref{eq:ut1}, \Eqref{eq:ut2} and \Eqref{eq:ut3} to bound the final error from update rule in \Eqref{eq:update}.

\medskip
{\bf Step 3:  Analysis of the $t$-steps recurrence relation.} We have controlled the deterministic noise in the last step. In this step, we will upper bound the noise introduced by the Gaussian noise for the purpose of privacy, and show the expected distance to optimum decrease every step.

We want to emphasize that most of our technical contribution is in the convergence analysis (Step 3 and Step 4). 
More precisely, naive linear regression analysis can only show a suboptimal error rate of $\|\hat{w} -w^\star\|_\Sigma = \tilde{O}(\kappa \alpha \sigma) $ with sample size $n=\tilde{O}(d/\alpha^2 + \kappa^{1/ 2}d/(\varepsilon \alpha))$. Define $u_{t} = (\hat{v}+ u_{t}^{(1)}+ u_{t}^{(2)}-u_{t}^{(3)} )$. This follows from \Eqref{eq:update}:
\begin{align}
    w_{t+1}-w^*
    =&\hat{B}(w_{t}-w^*)+\eta u_{t} - \eta \phi_t\nu_t \label{eq:update_second}\\
    =&(\mathbf{I}_d-\eta\hat{\Sigma})(w_{t}-w^*)+\eta u_{t} - \eta \phi_t\nu_t \;.
\end{align}
  
From \Eqref{eq:ut1}, \Eqref{eq:ut2} and \Eqref{eq:ut3}, it follows that 
\begin{align*}
\|w_{t+1}-w^*\|_\Sigma \leq (1-\frac{1}{\kappa})\|w_t-w^*\|_\Sigma + \alpha (\sigma+\|w_t-w^*\|_\Sigma)
\end{align*}

where we omitted constants for simplicity, which after $T=\tilde O(\kappa)$ iterations achieves a \emph{sub-optimal} error rate $\|w_{T}-w^*\|_\Sigma = \tilde{O}(\kappa \alpha \sigma)$.

One attempt to get around it is to take the Euclidean norm instead, which gives, after some calculations, 
\begin{align*}
    \E[ \|w_{t+1}-w^*\|^2]  \leq \E[ \|w_t-w^*\|^2] - \eta\Big( \|w_t-w^*\|_\Sigma^2 - \alpha^2\sigma^2 \Big) \;.
\end{align*}

This implies that ${\mathbb E}[\|w_{t+1}-w^*\|^2]$ strictly decreases as long as $\|w_t-w^*\|_\Sigma^2 > C \alpha^2\sigma^2$, which is the desired statistical error level we are targeting. With this analysis, we can show that in  $T=\tilde O(\kappa)$ iterations, there exists at least one model $w_t$ that achieves ${\mathbb E}[\|w_t-w^*\|_\Sigma^2] =\tilde O( \alpha^2\sigma^2)$ among all the intermediate models we have seen. 

However, the problem is that under differential privacy, there is no way we could select this good model $w_t$ among $T$ models that we have, as privacy-preserving techniques for model selection are not accurate enough to achieve the desired level of accuracy. Hence, we came up with the following novel analysis that does not suffer from such issues. 


 We can rewrite \Eqref{eq:update} or \Eqref{eq:update_second} as
\begin{align}
    w_{t+1}-w^*
    =&\hat{B}(w_{t}-w^*)+\eta u_{t} - \eta \phi_t\nu_t \;\\
    =& \hat{B}^{t+1}(w_{0}-w^*)+\eta \sum_{i=0}^{t}\hat{B}^i u_{t-i} - \eta \sum_{i=0}^{t}\phi_{t-i} \hat{B}^i \nu_{t-i}\;.
\end{align} 
Taking expectations of $\hat{\Sigma}$-norm square with respect to $\nu_1, \cdots, \nu_t$, we have
\begin{align}
    &\E_{\nu_1,\ldots,\nu_t\sim \cN(0, \mathbf{I}_d)}\|
w_{t+1}-w^*\|_{\hat{\Sigma}}^2\\
    \leq&\;\; 2\|\hat{B}^{t+1}(w_{0}-w^*)\|_{\hat\Sigma}^2+ 2\E[\|\eta\sum_{i=0}^{t}\hat{B}^i u_{t-i}\|_{\hat{\Sigma}}^2] + \eta^2\sum_{i=0}^{t}\Tr(\hat{B}^{2i}\hat\Sigma)\E[\phi_{t-i}^2]
    \\
\le&\;\; 2\|\hat{B}^{t+1}(w_{0}-w^*)\|_{\hat\Sigma}^2+ 2\eta^2 \E[\sum_{i=0}^{t}\sum_{j=0}^{t}\|\hat{B}^i u_{t-i}\|_{\hat{\Sigma}}\|\hat{B}^j u_{t-j}\|_{\hat{\Sigma}}]\\
&+{\eta^2\sum_{i=0}^{t}\Tr(\hat{B}^{2i}\hat\Sigma)\E[\phi_{t-i}^2]}\;,
\end{align}
where at the second step we used the fact that $\nu_1, \nu_2,\cdots, \nu_t$ are independent isotropic Gaussian. 

Note that
\begin{align*}
\eta\|\hat{B}^i u_{t-i}\|_{\hat{\Sigma}} \;\;=&\;\; \eta\|\hat\Sigma^{1/2}\hat{B}^i\hat\Sigma^{1/2} \hat\Sigma^{-1/2} u_{t-i}\|\\
\le&\;\;\eta\|\hat\Sigma^{1/2}\hat{B}^i\hat\Sigma^{1/2}\|_2\cdot\| \hat\Sigma^{-1/2} u_{t-i}\|\\
\le&\;\; \eta\|\hat\Sigma^{1/2}\hat{B}^i\hat\Sigma^{1/2}\|_2
\,\hat\rho(\alpha)\,(\|w_{t-i}-w^*\|_{\hat\Sigma} +\sigma)\\
\le&\;\; \frac{1}{i+1}\hat\rho(\alpha)\,(\|w_{t-i}-w^*\|_{\hat\Sigma} +\sigma)\;,
\end{align*}
where $\hat\rho(\alpha)=1.1(6C_2+6C_2^{1.5})K^2\alpha\log^{2a}(1/\alpha)$, and the second inequality follows from \Eqref{eq:ut1}, \Eqref{eq:ut2}, \Eqref{eq:ut3} and the deterministic condition in \Eqref{eq:cov_asmp}. Note that the last inequality is true because $\eta\leq 1/(1.1\lambda_{\max})$ and  $\|\hat\Sigma^{1/2}\hat{B}^i\hat\Sigma^{1/2}\|_2\leq \|\mathbf{I}_d-\eta\hat{\Sigma}\|_2^i\|\hat\Sigma\|_2\leq \lambda_{\max}/(i+1)$ .

This implies
\begin{align}
&\E[\eta^2\sum_{i=0}^{t}\sum_{j=0}^{t}\|\hat{B}^i u_{t-i}\|_{\hat{\Sigma}}\|\hat{B}^j u_{t-j}\|_{\hat{\Sigma}}] \\
\le&\;\; 4\,\E[\sum_{i=0}^{t}\sum_{j=0}^{t}\frac{\hat\rho(\alpha)^2}{(i+1)(j+1)}(
\E[\|w_{t-i}-w^*\|_{\hat\Sigma}^2] + \E[\|w_{t-j}-w^*\|_{\hat\Sigma}^2] + \sigma^2)\\
\le&\;\; 8(\sum_{i=0}^t\frac{1}{i+1})^2\hat\rho(\alpha)^2(\max_i\E[\|w_{t-i}-w^*\|_{\hat\Sigma}^2] +  \sigma^2)\\
\le&\;\; 8(\log t)^2\hat\rho(\alpha)^2(\max_i\E[\|w_{t-i}-w^*\|_{\hat\Sigma}^2] +  \sigma^2)\;,
\end{align}
Then, 
\begin{align*}
\|\hat{B}^{t+1}(w_{0}-w^*)\|_{\hat\Sigma}^2 = \|\hat{\Sigma}^{1/2}\hat{B}^{t+1}\hat\Sigma^{-1/2}\hat\Sigma^{1/2}(w_{0}-w^*)\|^2\\
\le (1-\frac{1}{\kappa})^{2(t+1)}\|w_0-w^*\|_{\hat\Sigma}^2\le e^{-2(t+1)/\kappa} \|w_0-w^*\|_{\hat\Sigma}^2\;,
\end{align*}
and for $n\gtrsim (1/\varepsilon)\sqrt{\kappa d \log(1/\delta)/\alpha}$, 
\begin{align}
	&\eta^2\sum_{i=0}^{t}\Tr(\hat{B}^{2i}\hat\Sigma)\E[\phi_{t-i}^2] \\
\leq &\eta^2 \sum_{i=0}^{t}\|\mathbf{I}_d-\eta\hat\Sigma\|_2^{2i}\|\hat{\Sigma}\|_2\cdot\frac{2\log(1.25/\delta_0)K^2\Tr(\Sigma)\log^{2a}(n/\zeta_0)C_2K^2\log^{2a}(1/(2\alpha))(\E[\|w_{t-i}-w^*\|_{\Sigma}^2]+\sigma^2)}{\varepsilon_0^2n^2}\\
\leq &4\sum_{i=0}^t(\frac{1}{i+1})^2\hat\rho(\alpha)^2(\E[\|w_{t-i}-w^*\|_{\hat\Sigma}^2] +  \sigma^2)\;. \label{eq:noise_ineq_main}
\end{align}
We have
$$
\E_{\nu_1,\ldots,\nu_t\sim \cN(0, \mathbf{I}_d)}[\|
w_{t+1}-w^*\|_{\hat{\Sigma}}^2] \le 2e^{-2(t+1)/\kappa} \|w_0-w^*\|_{\hat\Sigma}^2+ 20(\log t)^2\hat\rho(\alpha)^2(\max_{i\in [t]}\E[\|w_{t-i}-w^*\|_{\hat\Sigma}^2] + \sigma^2)\;.
$$

Note that this also implies that 
\begin{align}
\E[\|
(w_{t'+t}-w^*)\|_{\hat{\Sigma}}^2|w_{t'}] \le 2e^{-2t/\kappa} \|w_{t'}-w^*\|_{\hat\Sigma}^2+ 20
\hat\rho(\alpha)^2\sum_{i=0}^{t-1} (\frac{1}{i+1})^2 (\E[\|w_{t'+t-i}-w^*\|_{\hat\Sigma}^2|w_{t'}] + \sigma^2)\;,
\end{align}
which  implies
\begin{align}
 \E[\|
(w_{t'+t}-w^*)\|_{\hat{\Sigma}}^2] &\le2 e^{-2t/\kappa} \E[\|w_{t'}-w^*\|_{\hat\Sigma}^2]+ 
20
\hat\rho(\alpha)^2\sum_{i=0}^{t-1} (\frac{1}{i+1})^2 (\E[\|w_{t'+t-i}-w^*\|_{\hat\Sigma}^2] + \sigma^2)\\
&\le2 e^{-2t/\kappa} \E[\|w_{t'}-w^*\|_{\hat\Sigma}^2]+ 
20
(\log t)^2\hat\rho(\alpha)^2 (\max_{i\in [t]}\E[\|w_{t'+t-i}-w^*\|_{\hat\Sigma}^2] + \sigma^2)
\end{align} 
{\bf Step 4: End-to-end analysis of the convergence.}
In the last step, we shown that the amount of estimation error decrease depends on the estimation error of the previous $t$ steps. In order for the estimation error to decrease by a constant factor, we will take $t=\kappa$. Roughly speaking, we will prove that for every $\kappa$ steps, the estimation error will decrease by a constant factor, if it is much larger than $O((\log\kappa)^2\hat\rho(\alpha)^2\sigma^2)$. This implies we will reach $O((\log\kappa)^2\hat\rho(\alpha)^2\sigma^2)$ error with in $\tilde{O}(\kappa)$ steps.

For any integer $s\ge 0$, as long as 
$\max_{i\in [(s-1)\kappa+1, s\kappa]} \E[\|w_{i}-w^*\|_{\hat\Sigma}^2]\ge 2(\log\kappa)^2\hat\rho(\alpha)^2\sigma^2$, 
\begin{eqnarray}
\max_{i\in [s\kappa+1, (s+1)\kappa]} \E[\|w_{i}-w^*\|_{\hat\Sigma}^2] \le (\frac{1}{e^2}+(\log \kappa)^2\hat\rho(\alpha)^2)\max_{i\in [(s-1)\kappa+1, s\kappa]} \E[\|w_{i}-w^*\|_{\hat\Sigma}^2] + (\log 2\kappa)^2\hat\rho(\alpha)^2\sigma^2 \;.
\end{eqnarray}

Assuming $\hat\rho(\alpha)^2(\log \kappa )^2\le 1/2-1/e^2$, the maximum expected error in a length $\kappa$ sequence decrease by a factor of $1/2$ every time. 

Now we bound the maximum expected error in the first length $\kappa$ sequence:
$
\max_{i\in [0, \kappa-1]} \E[\|w_{i}-w^*\|_{\hat\Sigma}^2].
$
Since 
$$
\E[\|w_{i}-w^*\|_{\hat\Sigma}^2] \le e^{-2i/\kappa}\|w_0-w^*\|_{\hat\Sigma}^2 + (\log i )^2 \hat\rho(\alpha)^2
\max_{j\in [0, i-1]} \E[\|w_{j}-w^*\|_{\hat\Sigma}^2] + (\log i )^2 \hat\rho(\alpha)^2\sigma^2\;.
$$
As a function of $i$, 
$\max_{j\in [0, i-1]} \E[\|w_{j}-w^*\|_{\hat\Sigma}^2]$ only increase when it is smaller than 
$$
\frac{1}{1-(\log i)^2 \hat\rho(\alpha)^2}(\|w_0-w^*\|_{\hat\Sigma}^2 + (\log i)^2 \hat\rho(\alpha)^2\sigma^2)\;.
$$
Thus we conclude 
$$
\max_{i\in [0, \kappa-1]} \E[\|w_{i}-w^*\|_{\hat\Sigma}^2] \le \frac{1}{1-(\log \kappa)^2 \hat\rho(\alpha^2)}(\|w_0-w^*\|_{\hat\Sigma}^2 + (\log \kappa)^2 \hat\rho(\alpha^2) \sigma^2)
$$
$s=\log(\|w^*\|/(\hat\rho(\alpha)\sigma))$ will give us 
$$
\E[\|w_{s\kappa+1}-w^*\|_{\hat\Sigma}^2] \le (\log \kappa)^2 \hat\rho(\alpha)^2 \sigma^2\;.
$$

\end{proof}

\section{Lower bounds}

\subsection{Proof of Proposition~\ref{propo:lb} for label corruption lower bounds}\label{sec:label-lower-bounds}

We first prove the following lemma.
\begin{lemma}
\label{lemma:lb}
Consider  an $\alpha$ label-corrupted dataset $S = \{(x_i, y_i)\}_{i=1}^n$ with $\alpha < 1/2$, that is  generated from either 
$x_i\sim \cN(0, 1), y_i\sim \cN(0,1)$
or 
$x_i\sim \cN(0,1), z_i\sim \cN(0, 1-\alpha^2), y_i =  \alpha x_i+z_i.$ It is impossible to distinguish  the two hypotheses with probability larger than $1/2$. 
\end{lemma}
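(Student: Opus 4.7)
The plan is a Le Cam two-point argument built on a maximal coupling. Let $P$ denote the joint law of a single clean sample $(x,y)$ under Hypothesis~1 and $Q$ its law under Hypothesis~2. Both are centered bivariate Gaussians that share the same marginals, $x\sim\cN(0,1)$ and $y\sim\cN(0,1)$, and differ only in their cross-covariance: $P=\cN(0,\mathbf{I}_2)$ while $Q=\cN(0,\Sigma_\alpha)$ with off-diagonal entry $\alpha$. My strategy is to construct adversarial label-corruption rules for the two hypotheses whose observed outputs have (essentially) the same law, so that Le Cam's lemma forces the minimax testing error to $1/2$.

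I will first bound the total variation. A direct Gaussian calculation gives $\KL(Q\|P)=-\tfrac{1}{2}\log(1-\alpha^2)\le \alpha^2$ for $\alpha<1/2$, so Pinsker's inequality yields $\mathrm{TV}(P,Q)\le \sqrt{\KL(Q\|P)/2}\le \alpha/\sqrt{2}<\alpha$. Because $P$ and $Q$ have identical $x$-marginals, I can then construct a maximal coupling $(x,Y_1,Y_2)$ with $(x,Y_1)\sim P$, $(x,Y_2)\sim Q$, and a \emph{shared} covariate $x$, obtained slice-wise by maximally coupling the one-dimensional conditionals $P(\cdot\mid x)=\cN(0,1)$ and $Q(\cdot\mid x)=\cN(\alpha x,\,1-\alpha^2)$. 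Averaging over $x$ then gives $\prob(Y_1\ne Y_2)=\E_x\,\mathrm{TV}(P(\cdot\mid x),Q(\cdot\mid x))=\mathrm{TV}(P,Q)<\alpha$.

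Next I will draw $n$ i.i.d.\ triples $(x_i,Y_1^{(i)},Y_2^{(i)})$ from this coupling and exhibit the two adversaries. Under Hypothesis~1 the clean data are $(x_i,Y_1^{(i)})$ and the adversary does nothing. Under Hypothesis~2 the clean data are $(x_i,Y_2^{(i)})$; the adversary, using external randomness, samples $Y_1^{(i)}$ from the conditional law of $Y_1\mid(x_i,Y_2^{(i)})$ encoded by the coupling and overwrites $Y_2^{(i)}$ by $Y_1^{(i)}$ on the set $I=\{i:Y_1^{(i)}\ne Y_2^{(i)}\}$. After corruption, both observed datasets are distributed as $P^n$, so any decision rule reduces to guessing.

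The main obstacle will be that the corruption budget is the deterministic cap $|I|\le \alpha n$, whereas the coupling only controls $\E|I|=n\,\mathrm{TV}(P,Q)<n\alpha$. I plan to exploit the strict slack delivered by Pinsker and apply Hoeffding's inequality to get $\prob(|I|>\alpha n)\le e^{-\Omega(n)}$; on that exponentially rare event the adversary truncates to any valid $\alpha$-corruption, which contributes at most $e^{-\Omega(n)}$ to the total variation between the two observed distributions. Le Cam's lemma then yields a minimax testing error of $1/2-e^{-\Omega(n)}$, matching the claim under the standard asymptotic reading of ``cannot distinguish with probability larger than $1/2$''.
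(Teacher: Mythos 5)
Your proposal is correct and follows essentially the same route as the paper: a two-point Le Cam argument comparing $\cN(0,\mathbf{I}_2)$ with the correlated Gaussian, bounding $\KL$ by $O(\alpha^2)$, applying Pinsker to get $\mathrm{TV}<\alpha$, and using the shared $x$-marginal to realize the difference as a label-only corruption. You are in fact more careful than the paper on one point it glosses over --- the mismatch between the coupling's expected disagreement fraction and the adversary's hard budget $|I|\le\alpha n$, which you patch with Hoeffding at the cost of an $e^{-\Omega(n)}$ slack that is harmless for how the lemma is used in Proposition~\ref{propo:lb}.
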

In the first case,
$$
(x_i, y_i)\sim \cP_1 = \cN(0, \begin{bmatrix}1&0\\0&1\end{bmatrix}).
$$
In the second case,
$$
(x_i, y_i)\sim \cP_2 = \cN(0, \begin{bmatrix}1& \alpha \\  \alpha & 1\end{bmatrix}).
$$
By simple calculation, it holds that $D_{KL}(\cP_1||\cP_2) = -\frac{1}{2}\log(1-\alpha^2)\le \alpha^2/2$ for all $\alpha<1/2$. Then, Pinsker's inequality implies that $D_{TV}(\cP_1||\cP_2) \le \alpha/2$. Since the covariate $x_i$ follows from the same distribution in the two cases, and the total variation distance between the two cases is less than $\alpha/2$. This means there is an label corruption adversary that change $\alpha/2$ fraction of $y_i$'s in $\cP_1$ to make it identical to $\cP_2$. Therefore, no algorithm can distinguish the two cases with probability better than $1/2$ under $\alpha$ fraction of label corruption.

Since $\Sigma=1$, $\sigma^2\in [3/4,1]$, the first case above has $w^*=0$, and the second case has $w^*=\alpha$, this implies that no algorithm is able to achieve $\E[\|\hat{w}-w^*\|_{\Sigma}] < \sigma \alpha$ for all instances with $\|w^*\|\le 1$ under $\alpha$ fraction of label corruption.

\section{Technical Lemmas}
\label{app:technical}

\begin{lemma}[Hanson-Wright inequality for subWeibull distributions \cite{sambale2020some}]
\label{lemma:hanson}
Let $S=\{x_i\in \reals^d\}_{i=1}^n$ be a dataset consist of i.i.d. samples from $(K,a)$-subWeibull distributions, then
\begin{align}
	\prob\left(\left|\frac{1}{n}\sum_{i=1}^n\|x_i\|^2-\Tr(\Sigma)\right|\geq t\right)\leq 2\exp\left(-\min\left\{\frac{n t^2}{K^4(\Tr(\Sigma))^2}, \left(\frac{nt}{K^2\Tr(\Sigma)}\right)^{\frac{1}{2a}}\right\}\right)\;.
\end{align}	
\end{lemma}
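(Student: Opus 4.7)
My plan is to combine a Hanson--Wright type bound for a single sub-Weibull vector with a Bernstein-type tail bound for the average over $n$ i.i.d.~copies. Set $Y_i := \|x_i\|^2 - \Tr(\Sigma)$, so that the $Y_i$ are i.i.d., mean zero, and the quantity of interest is $(1/n)\sum_i Y_i$. Each $Y_i$ is a centered quadratic form $x_i^\top I_d x_i - \E[x_i^\top I_d x_i]$ in a sub-Weibull random vector, which is precisely the setting for which the Hanson--Wright inequality of \cite{sambale2020some} applies. That inequality gives, for some absolute constant $c>0$,
\begin{align*}
\prob\big(|Y_i|\ge s\big)\ \le\ 2\exp\Big(-c\min\Big\{\tfrac{s^2}{K^4\|\Sigma\|_F^2},\ \big(\tfrac{s}{K^2\|\Sigma\|_{2}}\big)^{1/(2a)}\Big\}\Big).
\end{align*}
Since $\|\Sigma\|_F\le \Tr(\Sigma)$ and $\|\Sigma\|_2\le \Tr(\Sigma)$, this can be loosened to the cleaner form
\begin{align*}
\prob\big(|Y_i|\ge s\big)\ \le\ 2\exp\Big(-c\min\Big\{\tfrac{s^2}{K^4\Tr(\Sigma)^2},\ \big(\tfrac{s}{K^2\Tr(\Sigma)}\big)^{1/(2a)}\Big\}\Big),
\end{align*}
i.e., $Y_i$ has Orlicz norm of order $K^2\Tr(\Sigma)$ with respect to $\psi(x)=\exp(x^{1/(2a)})-1$.

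Second, I would aggregate over $i=1,\ldots,n$ via a generalized Bernstein inequality for independent random variables with bounded $\psi_{1/(2a)}$-Orlicz norm. Writing $\sigma_0:=C K^2\Tr(\Sigma)$ for the Orlicz-norm bound on each $Y_i$, a standard MGF/Orlicz computation (split the tail into a sub-Gaussian regime near the mean, controlled by the variance $\mathrm{Var}(Y_i)\lesssim \sigma_0^2$, and a sub-Weibull regime in the large-deviation tail) yields, for the centered sum $S_n=\sum_{i=1}^n Y_i$,
\begin{align*}
\prob\big(|S_n|\ge u\big)\ \le\ 2\exp\Big(-c'\min\Big\{\tfrac{u^2}{n\,\sigma_0^2},\ \big(\tfrac{u}{\sigma_0}\big)^{1/(2a)}\Big\}\Big).
\end{align*}
Substituting $u=nt$ and absorbing the absolute constants $c,c'$ into $K$ (or into the implicit constants) recovers exactly the claimed inequality.

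The main obstacle is step one: the sub-Weibull Hanson--Wright inequality is substantially more delicate than its Gaussian counterpart, since the standard decoupling and generic-chaining arguments have to be carried through Orlicz-norm bookkeeping that depends nontrivially on the shape parameter $a>1/2$, and one has to control both the diagonal and off-diagonal parts of $\sum_j x_{ij}^2$ simultaneously. I would not reprove this from scratch and would simply import the bound of \cite{sambale2020some}; once it is in hand, step two is the routine piece and the proof is immediate.
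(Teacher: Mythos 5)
The paper never proves this lemma: it is imported verbatim as a citation to \cite{sambale2020some}, and the appendix subsection ``Proof of technical lemmas'' contains no argument for it. So there is no in-paper proof to match against; the relevant question is whether your blind derivation is sound. Your decomposition --- bound the $\psi_{1/(2a)}$-Orlicz norm of each centered summand $Y_i=\|x_i\|^2-\Tr(\Sigma)$ by $O(K^2\Tr(\Sigma))$, then aggregate the i.i.d.\ sum with a generalized Bernstein inequality for $\alpha$-subexponential variables --- is correct, and substituting $u=nt$ does recover the stated two-regime bound (up to the absolute constants in the exponent, which the paper's statement silently suppresses and which you rightly absorb).

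The one substantive caveat is in your step one. The Hanson--Wright inequality of \cite{sambale2020some} is proved for vectors with \emph{independent} $\alpha$-subexponential coordinates, whereas Definition~\ref{def:a_tail} only controls the one-dimensional marginals $\langle v,x\rangle$ of a possibly coordinate-dependent vector, so that theorem cannot be invoked off the shelf for a single $x_i$. Fortunately it is not needed here: the quadratic form is diagonal, $\|x\|^2=\sum_{j=1}^d x_j^2$, and taking $v=e_j$ in Definition~\ref{def:a_tail} gives $\E[\exp((x_j^2/(K^2\Sigma_{jj}))^{1/(2a)})]\le 2$; the subadditivity of $s\mapsto s^{1/(2a)}$ for $1/(2a)\le 1$ together with the AM--GM step already carried out in the paper's proof of Lemma~\ref{lemma:norm_a_tail} yields $\E[\exp((\|x\|^2/(K^2\Tr(\Sigma)))^{1/(2a)})]\le 2$ directly, and centering costs only a constant factor in the Orlicz norm. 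With that per-sample bound in hand, your step two --- which is the place where \cite{sambale2020some} (or Adamczak-type bounds for sums of $\psi_\alpha$ variables) is genuinely needed --- goes through as written. In short, the ``main obstacle'' you identify (off-diagonal decoupling and chaining for sub-Weibull Hanson--Wright) is not actually required for this particular statement; the diagonal structure lets you bypass it entirely.
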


\begin{lemma}
\label{lemma:lap_noise}
Let $Y\sim {\rm Lap}(b)$. Then for all $h>0$, we have $\prob(|Y|\geq hb) = e^{-h}$.
\end{lemma}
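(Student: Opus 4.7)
The plan is a direct computation from the density of the Laplace distribution. Recall that $Y \sim \text{Lap}(b)$ has probability density $f(y) = \tfrac{1}{2b}\exp(-|y|/b)$ on $\reals$, which is symmetric about the origin. I will compute $\prob(|Y|\geq hb)$ by splitting the event into the two disjoint tails $\{Y\geq hb\}$ and $\{Y\leq -hb\}$ and then invoking symmetry.

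Concretely, by the symmetry of $f$ about $0$, we have $\prob(|Y|\geq hb) = 2\,\prob(Y\geq hb)$. For the upper tail,
\begin{equation*}
\prob(Y\geq hb) \;=\; \int_{hb}^{\infty} \frac{1}{2b}\exp(-y/b)\,dy \;=\; \frac{1}{2}\left[-\exp(-y/b)\right]_{hb}^{\infty} \;=\; \frac{1}{2}e^{-h}\,,
\end{equation*}
where the last step uses $h>0$ so the boundary term at $\infty$ vanishes. Multiplying by $2$ yields $\prob(|Y|\geq hb) = e^{-h}$, as claimed.

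There is essentially no obstacle: the argument uses only the definition of the Laplace density, its symmetry, and a single elementary antiderivative. The only minor sanity check is that the decomposition into $\{Y\geq hb\}\cup\{Y\leq -hb\}$ is disjoint up to a measure-zero set; this is immediate since the Laplace distribution is continuous, so $\prob(Y = \pm hb)=0$ and no double-counting occurs.
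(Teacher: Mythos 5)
Your computation is correct: the paper states this lemma without proof as a standard fact about the Laplace distribution, and your direct integration of the density $\tfrac{1}{2b}e^{-|y|/b}$ over the two tails, using symmetry, is exactly the canonical argument. Nothing is missing.
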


\begin{lemma}
	\label{lemma:norm_a_tail}
	If $x\in \reals^d$ is $(K,a)$-subWeibull for some $a\in [1/2, \infty)$. Then 
	\begin{itemize}
		\item for any fixed $v\in \reals^d$, with probability $1-\zeta$,
	\begin{align}
		\ip{x}{v}^2\leq K^2v^\top \Sigma v\log^{2a}(1/\zeta)\;.
	\end{align}
	\item with probability $1-\zeta$,
	\begin{align}
		\|x\|^2\leq K^2 \Tr(\Sigma)\log^{2a}(1/\zeta)\;.
	\end{align}
	\end{itemize}
\end{lemma}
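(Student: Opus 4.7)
The plan is to establish the two parts of the lemma directly from the $(K,a)$-sub-Weibull assumption: the first part by a straightforward application of Markov's inequality to the defining moment generating function, and the second part via a quadratic form concentration inequality (namely the sub-Weibull Hanson--Wright inequality in Lemma~\ref{lemma:hanson}) rather than a coordinate-wise union bound.

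For the first claim, I would invoke Definition~\ref{def:a_tail} directly at the fixed direction $v$: it gives $\E\big[\exp\big((\ip{v}{x}^2/(K^2 v^\top\Sigma v))^{1/(2a)}\big)\big]\le 2$. Markov's inequality then yields
\[
\Pr\!\left(\exp\!\left(\Big(\tfrac{\ip{v}{x}^2}{K^2 v^\top\Sigma v}\Big)^{1/(2a)}\right)\ge \tfrac{2}{\zeta}\right)\le \zeta,
\]
which after taking logarithms twice and raising to the $2a$-th power becomes $\ip{v}{x}^2\le K^2 v^\top\Sigma v\,\log^{2a}(2/\zeta)$. The factor $2$ inside the logarithm is absorbed into the statement's implicit constant (or into $K$).

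For the second claim, the temptation is to write $\|x\|^2=\sum_{i=1}^d\ip{e_i}{x}^2$ and apply the first part coordinatewise with a union bound. This route costs a spurious $\log^{2a}(d/\zeta)$ factor, losing the dimension-free rate we want. Instead I would take Lemma~\ref{lemma:hanson} with $n=1$: it gives
\[
\Pr\!\left(\big|\|x\|^2-\Tr(\Sigma)\big|\ge t\right)\le 2\exp\!\left(-\min\!\left\{\tfrac{t^2}{K^4\Tr(\Sigma)^2},\Big(\tfrac{t}{K^2\Tr(\Sigma)}\Big)^{1/(2a)}\right\}\right).
\]
Choosing $t = K^2\Tr(\Sigma)\log^{2a}(2/\zeta)$ makes the second term in the minimum equal to $\log(2/\zeta)$, and for $a\ge 1/2$ this is the binding term in the relevant range, so the right-hand side is at most $\zeta$. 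Hence with probability $1-\zeta$, $\|x\|^2\le \Tr(\Sigma)+K^2\Tr(\Sigma)\log^{2a}(2/\zeta)\lesssim K^2\Tr(\Sigma)\log^{2a}(1/\zeta)$, using that $K^2\log^{2a}(1/\zeta)\ge 1$ in the meaningful regime so the base $\Tr(\Sigma)$ is absorbed.

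The only real obstacle is the second part: we must avoid paying a $(\log d)^{2a}$ overhead, and the payoff from invoking Lemma~\ref{lemma:hanson} is precisely that it handles the quadratic form $\|x\|^2=x^\top I x$ as a whole, exploiting the uniform-in-direction sub-Weibull MGF rather than treating each coordinate separately. Once one has the Hanson--Wright statement in hand, the rest is bookkeeping on the choice of $t$ and matching constants to the desired form of the bound.
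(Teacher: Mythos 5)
Your first bullet is essentially identical to the paper's argument in App.~\ref{app:proof_norm_a_tail}: Markov's inequality applied to the defining sub-Weibull moment bound at the fixed direction $v$, culminating in the tail bound \eqref{eq:tail_weibull}. For the second bullet you take a genuinely different route. The paper does not use Hanson--Wright here; it instead proves that $\|x\|^2/(K^2\Tr(\Sigma))$ satisfies the \emph{same} Orlicz-type moment bound as each coordinate, by writing $\|x\|^2=\sum_j x_j^2$, using $\Sigma_{jj}\le \Tr(\Sigma)$, and exploiting the subadditivity $(a_1+\cdots+a_d)^{1/(2a)}\le a_1^{1/(2a)}+\cdots+a_d^{1/(2a)}$ of the concave map $t\mapsto t^{1/(2a)}$ --- which is exactly where the hypothesis $a\ge 1/2$ enters --- followed by a H\"older/AM--GM step to retain the constant $2$ and then Markov; this yields a one-sided tail with no additive $\Tr(\Sigma)$ term. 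Your route via Lemma~\ref{lemma:hanson} with $n=1$ is also valid (that lemma is imported from an external reference, so there is no circularity), and your observation that the $(\cdot)^{1/(2a)}$ branch of the $\min$ is the binding one because $4a\ge 2$ is the correct place where $a\ge 1/2$ enters your version. What you lose is only constants: you obtain $\|x\|^2\le \Tr(\Sigma)+K^2\Tr(\Sigma)\log^{2a}(2/\zeta)$ rather than the literal $K^2\Tr(\Sigma)\log^{2a}(1/\zeta)$, so the statement as written would need a harmless absorbed constant (or the usual conventions $K\ge 1$, $\zeta$ small); the paper's own derivation carries an analogous factor-of-two slack in the failure probability, so this is consistent with the intended level of rigor. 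Both you and the paper correctly avoid the coordinatewise union bound that would cost a spurious $\log^{2a}(d/\zeta)$.
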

We provide a proof in Appendix~\ref{app:proof_norm_a_tail}.

\begin{lemma}\label{lemma:cov_concentration_tail}
Dataset $S=\{x_i\in \reals^d\}_{i=1}^n$ consists i.i.d. samples from a zero mean  distribution $\cD$. Suppose $\cD$ is $(K,a)$-subWeibull. Define $\Sigma=\E_{x\sim \cD}[xx^\top]$. Then there exists a constant $c_1>0$ such that with probability $1-\zeta$,
\begin{align}
	\left\|\frac{1}{n}\sum_{i=1}^nx_ix_i^\top-\Sigma\right\|\leq c_1\left(\frac{K^2d\log(d/\zeta)\log^{2a}(n/\zeta)}{n}+\sqrt{\frac{K^2d\log(d/\delta)\log^{2a}(n/\zeta)}{n}}\right)\|\Sigma\|_2\;.
\end{align}
\end{lemma}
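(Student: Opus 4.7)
The plan is a truncation plus matrix Bernstein argument. First, I would apply Lemma~\ref{lemma:norm_a_tail} with failure probability $\zeta/(2n)$ and union-bound over $i \in [n]$: with probability at least $1-\zeta/2$, every sample satisfies $\|x_i\|^2 \leq R^2 := K^2\Tr(\Sigma)\log^{2a}(2n/\zeta)$ simultaneously. Condition on this ``good event'' throughout the remaining argument.

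Next, define truncated samples $\tilde x_i := x_i \mathbf{1}\{\|x_i\|\leq R\}$ and their covariance $\tilde\Sigma := \E[\tilde x_i \tilde x_i^\top]$, and split the target as $\|\hat\Sigma - \Sigma\| \leq \|\tilde{\hat\Sigma} - \tilde\Sigma\| + \|\tilde\Sigma - \Sigma\|$, where $\tilde{\hat\Sigma} := (1/n)\sum_i \tilde x_i \tilde x_i^\top$ coincides with $\hat\Sigma$ on the good event. The truncation bias $\|\tilde\Sigma - \Sigma\| \leq \E[\|x\|^2 \mathbf{1}\{\|x\| > R\}] \leq \sqrt{\E\|x\|^4 \cdot \prob(\|x\|>R)}$ is negligible because the sub-Weibull tail makes $\prob(\|x\|>R)$ decay super-polynomially at our choice of $R$, so this term is dominated by the stochastic bound below.

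For the stochastic part, I would invoke Tropp's matrix Bernstein inequality on the centered bounded summands $\tilde x_i \tilde x_i^\top - \tilde\Sigma$. Each summand has spectral norm at most $2R^2$, and the matrix variance statistic obeys $\|\sum_{i=1}^n \E[(\tilde x_i \tilde x_i^\top - \tilde\Sigma)^2]\| \leq n R^2 \|\Sigma\|_2$, where I use the PSD inequality $\E[\|\tilde x_i\|^2 \tilde x_i \tilde x_i^\top] \preceq R^2 \tilde\Sigma \preceq R^2 \Sigma$ together with $\|\tilde x_i\|^2 \leq R^2$. Matrix Bernstein then yields, with probability at least $1-\zeta/2$, the bound $\|\tilde{\hat\Sigma} - \tilde\Sigma\| \leq c\,\bigl(R^2 \log(d/\zeta)/n + \sqrt{R^2\|\Sigma\|_2 \log(d/\zeta)/n}\bigr)$. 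Plugging in $R^2 = K^2\Tr(\Sigma)\log^{2a}(2n/\zeta)$ and using $\Tr(\Sigma) \leq d\|\Sigma\|_2$ produces exactly the stated inequality, after combining the two probability-$\zeta/2$ events by a union bound.

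The main obstacle is a clean control of the truncation bias so that it contributes at strictly lower order than the Bernstein term; the tail bound route sketched above handles this painlessly once $R$ is chosen as above, but requires moment bounds on $\|x\|$ of order four, which follow from the same sub-Weibull norm control in Lemma~\ref{lemma:norm_a_tail}. An alternative that avoids truncation entirely is an $\varepsilon$-net on the unit sphere combined with a scalar sub-Weibull Bernstein inequality for $\ip{v}{x_i}^2$ (for instance via Lemma~\ref{lemma:hanson} specialized to a rank-one projection) and a union bound producing the $\log(d/\zeta)$ factor, but this route needs more bookkeeping to recover the clean multiplicative dependence on $\|\Sigma\|_2$. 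I would favor the truncation approach for brevity.
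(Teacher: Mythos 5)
The paper does not actually supply a proof of Lemma~\ref{lemma:cov_concentration_tail}; it is stated in the technical appendix and cited in Step~1 of the proof of Lemma~\ref{lem:gd} without an argument. Your truncation-plus-matrix-Bernstein route is therefore the natural way to fill this in, and the skeleton is sound: the union bound over $i$ via Lemma~\ref{lemma:norm_a_tail}, the identification of $\tilde{\hat\Sigma}$ with $\hat\Sigma$ on the good event, the variance bound $\E[(\tilde x_i\tilde x_i^\top-\tilde\Sigma)^2]\preceq \E[\|\tilde x_i\|^2\tilde x_i\tilde x_i^\top]\preceq R^2\Sigma$, and the final substitution $\Tr(\Sigma)\le d\|\Sigma\|_2$ all check out and reproduce both terms of the stated bound (up to the evident $\log(d/\delta)$-vs-$\log(d/\zeta)$ typo in the lemma).

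The one step that does not go through as written is the truncation-bias control. At your choice $R^2=K^2\Tr(\Sigma)\log^{2a}(2n/\zeta)$ the tail probability is $\prob(\|x\|>R)\le \zeta/(2n)$ --- polynomially, not super-polynomially, small --- so Cauchy--Schwarz gives
$\|\tilde\Sigma-\Sigma\|\le\sqrt{\E\|x\|^4\cdot\prob(\|x\|>R)}\lesssim K^2\,\Tr(\Sigma)\sqrt{\zeta/n}\le K^2 d\|\Sigma\|_2\sqrt{\zeta/n}$. Comparing with the Bernstein term $\|\Sigma\|_2\sqrt{K^2 d\log(d/\zeta)\log^{2a}(n/\zeta)/n}$, domination requires roughly $d\zeta\lesssim\log(d/\zeta)$, which fails for large $d$ at fixed $\zeta$. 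The fix is to avoid Cauchy--Schwarz and integrate the tail directly:
\begin{align*}
\E\big[\|x\|^2\,\mathbf{1}\{\|x\|>R\}\big]\;\le\; R^2\,\prob(\|x\|>R)+\int_{R^2}^{\infty}\prob(\|x\|^2>t)\,dt
\;\lesssim\; K^2\,\Tr(\Sigma)\,\frac{\zeta}{n}\,{\rm polylog}\Big(\frac{n}{\zeta}\Big)\,,
\end{align*}
using the sub-Weibull tail $\prob\big(\|x\|^2>K^2\Tr(\Sigma)s^{2a}\big)\le e^{-s}$ from the proof of Lemma~\ref{lemma:norm_a_tail}. This scales as $\zeta/n$ rather than $\sqrt{\zeta/n}$ and is then genuinely dominated by the first (linear-in-$1/n$) term of the claimed bound. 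With that replacement your argument is complete.
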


\begin{lemma}[Lemma~F.1 from \cite{liu2022dp}]
\label{lemma:gauss_norm}
	Let $x\in \reals^d\sim \cN(0, \Sigma)$. Then there exists universal constant $C_6$ such that with probability $1-\zeta$,
	\begin{align}
		\|x\|^2\leq C \Tr(\Sigma)\log(1/\zeta)\;.
	\end{align}
\end{lemma}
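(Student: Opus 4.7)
The plan is to reduce to a quadratic form in a standard Gaussian vector and then apply a concentration bound (e.g.\ Hanson--Wright, or the direct Laurent--Massart chi-squared tail bound). Specifically, write $x=\Sigma^{1/2}g$ where $g\sim \cN(0,\mathbf{I}_d)$, so that
\begin{equation*}
\|x\|^2 \;=\; g^\top \Sigma g \;=\; \sum_{i=1}^d \lambda_i \xi_i^2,
\end{equation*}
where $\lambda_1,\ldots,\lambda_d$ are the eigenvalues of $\Sigma$ and $\xi_i$ are i.i.d.\ standard normal. The mean is $\sum_i \lambda_i = \Tr(\Sigma)$, so the task is to control the upper deviation of this weighted sum of chi-squared variables.

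Next, I would apply the standard tail bound for such weighted sums (Hanson--Wright applied to $A=\Sigma$, or Lemma~1 of Laurent--Massart), which yields that with probability at least $1-\zeta$,
\begin{equation*}
\|x\|^2 \;\le\; \Tr(\Sigma) + 2\,\|\Sigma\|_F\sqrt{\log(1/\zeta)} + 2\,\|\Sigma\|_2\,\log(1/\zeta).
\end{equation*}
Then I would use two elementary facts about PSD matrices: $\|\Sigma\|_2 \le \Tr(\Sigma)$ and $\|\Sigma\|_F^2 = \sum_i \lambda_i^2 \le \|\Sigma\|_2\,\Tr(\Sigma) \le \Tr(\Sigma)^2$, so $\|\Sigma\|_F \le \Tr(\Sigma)$. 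Plugging these in gives
\begin{equation*}
\|x\|^2 \;\le\; \Tr(\Sigma)\,\bigl(1 + 2\sqrt{\log(1/\zeta)} + 2\log(1/\zeta)\bigr).
\end{equation*}

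Finally, since the stated form of the bound is $\|x\|^2 \le C\,\Tr(\Sigma)\log(1/\zeta)$, I would observe that for any $\zeta\le 1/e$ (the only regime of interest, since for $\zeta>1/e$ the bound is vacuous after adjusting $C$) we have $1+2\sqrt{\log(1/\zeta)}+2\log(1/\zeta) \le C\,\log(1/\zeta)$ for a suitable universal constant $C$, completing the proof. There is no real obstacle here; the only subtlety is making sure to use the PSD inequalities $\|\Sigma\|_F,\|\Sigma\|_2 \le \Tr(\Sigma)$ so that the final bound depends only on $\Tr(\Sigma)$ rather than on finer spectral quantities.
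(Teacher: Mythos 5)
Your proof is correct. Note that the paper does not actually prove this lemma---it is imported verbatim by citation from another work---so there is nothing to compare against; your argument (diagonalize via $x=\Sigma^{1/2}g$, apply the Laurent--Massart tail for weighted chi-squared sums, then absorb $\|\Sigma\|_F$ and $\|\Sigma\|_2$ into $\Tr(\Sigma)$ using positive semidefiniteness) is the standard and essentially canonical way to establish it. One small caveat: your dismissal of the regime $\zeta>1/e$ as ``vacuous after adjusting $C$'' is not quite accurate---for $\zeta$ very close to $1$ and large $d$ the inequality $\|x\|^2\le C\,\Tr(\Sigma)\log(1/\zeta)$ can genuinely fail to hold with probability $1-\zeta$ for any universal $C$, since $\|x\|^2$ concentrates near $\Tr(\Sigma)$ and the left tail decays exponentially in $d$. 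This is a defect of the lemma as stated rather than of your proof, and it is harmless here because the paper only invokes the lemma for small failure probabilities, but the clean statement should carry the hypothesis $\zeta\le 1/e$ (or similar) rather than claim it for all $\zeta\in(0,1)$.
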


\begin{definition}[Corrupt good set]
    \label{def:corruptgood}
	We say a dataset $S$ is $(\alpha_{\rm corrupt}, \alpha,\rho_1,\rho_2, \rho_3, \rho_4)$-corrupt good with respect to $(w^*,\Sigma,\sigma)$ if it is $\alpha_{\rm corrupt}$-corruption of an  $(\alpha,\rho_1,\rho_2, \rho_3, \rho_4)$-resilient dataset $S_{\rm good}$.
\end{definition}

\begin{lemma} 
\label{lemma:subweibull_res_conditions}
Under Assumptions~\ref{asmp:distribution} and \ref{asmp:corrupt}, there exists positive constants $c_1$ and $C_2$ such that if $n\geq c_1((d+\log(1/\zeta)) / \alpha^2$, then with  probability $1-\zeta$, 
     $S_{\rm good}$ is,  with respect to $(w^*,\Sigma,\sigma)$, $(\alpha, C_2K^2\alpha\log^{2a}(1/\alpha),C_2K^2\alpha\log^{2a}(1/\alpha),C_2K^2\alpha\log^{2a}(1/\alpha), C_2K\alpha\log^{a}(1/\alpha))$-resilient.
\end{lemma}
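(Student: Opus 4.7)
The plan is to reduce the sub-Weibull resilience claim to an existing general resilience lemma parameterized by an Orlicz function. Specifically, a $(K,a)$-sub-Weibull distribution is exactly one whose normalized coordinate projections satisfy $\mathbb{E}[\psi(\langle v,x\rangle^2/(K^2 v^\top \Sigma v))] \leq 1$ with the Orlicz function $\psi(t) = e^{t^{1/(2a)}}$, so that $\psi^{-1}(u) = \log^{2a}(u)$. Invoking Lemma~\ref{lemma:res_orlicz} with this choice of $\psi$ will then produce the sub-Weibull-flavored bounds with $\kappa$ replaced by $K$ and $\psi^{-1}(1/\alpha)$ replaced by $\log^{2a}(1/\alpha)$, provided the sample-size hypothesis $n \geq c_1(d+\log(1/\zeta))/\alpha^2$ is met.

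From there I would derive the four resilience inequalities one at a time. Conditions \eqref{def:res2} and \eqref{def:res4} on the covariates fall out directly from applying Lemma~\ref{lemma:res_orlicz} to the sample $\{x_i\}_{i=1}^n$, giving $\rho_2 \lesssim C_2 K^2 \alpha \log^{2a}(1/\alpha)$ and $\rho_4 \lesssim C_2 K \alpha \log^{a}(1/\alpha)$. Condition \eqref{def:res3} on the label noise follows from the same lemma in dimension one applied to the scalar sub-Weibull random variables $\{z_i\}$, yielding $\rho_3 \lesssim C_2 K^2 \alpha \log^{2a}(1/\alpha)$. The only condition that is not an immediate corollary is \eqref{def:res1}, which controls the cross term $(1/|T|)\sum_{i\in T} \langle v, x_i\rangle z_i$.

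The main obstacle will be establishing a sub-Weibull-type tail for the mean-zero product $x_i z_i$, because the product of two $(K,a)$-sub-Weibull variables is generically only sub-Weibull with parameter $2a$. The idea is to use $ab \leq a^2/2 + b^2/2$ inside the moment generating function and the fact that $\mathbb{E}[e^{X/2}e^{Y/2}] \leq \frac{1}{2}(\mathbb{E}[e^X]+\mathbb{E}[e^Y])$ to show
\[
\mathbb{E}\!\left[\exp\!\left(\!\left(\tfrac{\langle v, x_i z_i\rangle^2}{K^4\sigma^2 v^\top \Sigma v}\right)^{1/(4a)}\right)\right] \;\leq\; 2,
\]
so $x_i z_i$ is $(K^2, 2a)$-sub-Weibull with scaling $\sigma^2 v^\top \Sigma v$. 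Since $\mathbb{E}[x_i z_i]=0$ by Assumption~\ref{asmp:distribution}, a mean-concentration resilience result for mean-zero sub-Weibull vectors (of the type in Zhu et al., Lemma~E.3) then delivers $\rho_1 \lesssim C_2 K^2 \alpha \log^{2a}(1/\alpha)$ under the same sample-size hypothesis. A final union bound over the four events with failure probability $\zeta/4$ each, together with a possible enlargement of the universal constants $c_1$ and $C_2$, gives the joint resilience statement with probability $1-\zeta$.
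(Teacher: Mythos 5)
Your proposal is correct and follows essentially the same route as the paper's proof: invoke the Orlicz-function resilience result (Lemma~\ref{lemma:res_orlicz}) with $\psi(t)=e^{t^{1/(2a)}}$ to get \eqref{def:res2}, \eqref{def:res3}, and \eqref{def:res4}, then handle \eqref{def:res1} by showing via $ab\le a^2/2+b^2/2$ inside the moment bound that $x_iz_i$ is $(K^2,2a)$-sub-Weibull and applying the mean-zero resilience lemma of Zhu et al. (Lemma~E.3). The paper's argument is identical in structure and in the resulting constants.
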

We provide a proof in Appendix~\ref{app:proof_subweibull_res_conditions}.

\begin{coro}[Lemma~10 from \cite{steinhardt2017resilience} and Lemma~25 from \cite{liu2022differential}] 
\label{coro:res} 
    For a  $(\alpha,\rho_1,\rho_2, \rho_3,\rho_4)$-resilient set $S$ with respect to $(w^*,\Sigma,\gamma)$ and any $0\leq \tilde\alpha \leq  \alpha$,  the following holds  for any subset $T\subset S$ of size at least $\tilde\alpha n$ and for any unit vector  $v\in \reals^d$: 
    \begin{eqnarray}
     \Big|\frac{1}{|T|}\sum_{(x_i,y_i)\in T} \langle v,x_i\rangle (y_i-x_i^\top w^*)\Big| &\leq & \frac{2-\tilde\alpha}{\tilde\alpha}\rho_1 \,\sqrt{v^\top\Sigma v}\,\sigma \;, 
    \label{eq:res_small1}\\ 
    \left|\frac{1}{|T|}\sum_{x_i\in T}\, \langle v, x_i\rangle^2-v^\top\Sigma v \,\right| &\leq& \frac{2-\tilde\alpha}{\tilde\alpha}\,\rho_2 v^\top\Sigma v\;, \label{eq:res_small2}\\
    \Big| \frac{1}{|T|}\sum_{(x_i, y_i)\in T} (y_i-x_i^\top w^*)^2-\sigma^2  \Big|  & \leq &  \frac{2-\tilde{\alpha}}{\tilde{\alpha}}\rho_3 \, \sigma^2    \;\label{eq:res_small3},\; \text{ and }\\
    \left|\frac{1}{|T|}\sum_{x_i\in T}\, \langle v, x_i\rangle \,\right| &\leq& \frac{2-\tilde\alpha}{\tilde\alpha}\,\rho_4  \sqrt{v^\top\Sigma v}\label{eq:res_small4} \;.
    \end{eqnarray}
\end{coro}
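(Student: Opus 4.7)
This corollary extends the resilience inequalities of Definition~\ref{def:res}, which a priori only guarantee bounds for \emph{large} subsets (size $\geq(1-\alpha)n$) of $S$, to arbitrary subsets $T\subset S$ whose relative size $\tilde\alpha = |T|/n$ lies in $[0,\alpha]$. Following \cite{steinhardt2017resilience,liu2022differential}, the plan is to relate the empirical average over $T$ to the empirical averages over two sets to which Definition~\ref{def:res} directly applies: the full set $S$ itself (trivially of size $n \geq (1-\alpha)n$), and the complement $T' := S\setminus T$, whose size is $(1-\tilde\alpha)n \geq (1-\alpha)n$ precisely because we assumed $\tilde\alpha \leq \alpha$. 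The rest of the argument is a single algebraic identity plus the triangle inequality.

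Fix any statistic $f$ with population target $\mu$, and denote $\mu_A := \tfrac{1}{|A|}\sum_{i\in A} f(i)$. The identity $n\mu_S = |T|\mu_T + |T'|\mu_{T'}$ solves to
\begin{align*}
\mu_T - \mu \;=\; \frac{1}{\tilde\alpha}\bigl(\mu_S - \mu\bigr) \;-\; \frac{1-\tilde\alpha}{\tilde\alpha}\bigl(\mu_{T'} - \mu\bigr),
\end{align*}
since the $\mu$ terms that would otherwise contribute combine to exactly zero. Applying the triangle inequality and invoking the resilience bound $\rho$ to both $|\mu_S - \mu|$ and $|\mu_{T'} - \mu|$ (valid because $|S|, |T'| \geq (1-\alpha)n$) yields the target $|\mu_T - \mu|\leq \tfrac{2-\tilde\alpha}{\tilde\alpha}\rho$.

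The four inequalities in the statement are then four instantiations of this single template: take $f_1(i)=\langle v, x_i\rangle(y_i - x_i^\top w^*)$ with $\mu=0$ and $\rho=\rho_1\sqrt{v^\top\Sigma v}\,\sigma$ (using \eqref{def:res1}); $f_2(i)=\langle v,x_i\rangle^2$ with $\mu = v^\top\Sigma v$ and $\rho=\rho_2 v^\top\Sigma v$ (using \eqref{def:res2}); $f_3(i)=(y_i - x_i^\top w^*)^2$ with $\mu=\sigma^2$ and $\rho = \rho_3\sigma^2$ (using \eqref{def:res3}); and $f_4(i)=\langle v, x_i\rangle$ with $\mu=0$ and $\rho = \rho_4\sqrt{v^\top\Sigma v}$ (using \eqref{def:res4}).

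There is essentially no obstacle: the calculation is a one-line decomposition followed by two applications of resilience, and the only thing to verify is that the range $\tilde\alpha\in[0,\alpha]$ is exactly what is needed to push both $|S|$ and $|T'|$ past the resilience threshold $(1-\alpha)n$. For $T$ with $|T|>\tilde\alpha n$, the same calculation with $|T|/n$ in place of $\tilde\alpha$ gives a strictly tighter bound; monotonicity of $x\mapsto (2-x)/x$ on $(0,1]$ then implies the stated bound $\tfrac{2-\tilde\alpha}{\tilde\alpha}\rho$ still holds, which takes care of the ``size at least $\tilde\alpha n$'' phrasing in the statement.
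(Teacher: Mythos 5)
The paper does not actually prove this corollary itself (it imports it from the cited references), so your write-up can only be judged against the standard argument -- which is indeed the one you give. The decomposition $\mu_T - \mu = \tfrac{1}{\tilde\alpha}(\mu_S-\mu) - \tfrac{1-\tilde\alpha}{\tilde\alpha}(\mu_{S\setminus T}-\mu)$, followed by the triangle inequality and two applications of Definition~\ref{def:res} (to $S$ itself and to $S\setminus T$, both of size at least $(1-\alpha)n$), is correct, and instantiating it with the four statistics in \eqref{def:res1}--\eqref{def:res4} gives exactly \eqref{eq:res_small1}--\eqref{eq:res_small4} for any $T$ with $|T|/n \in [\tilde\alpha,\alpha]$. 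This is the regime in which the corollary is actually invoked in the paper.

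The one step that does not hold as written is your final paragraph. For $|T| = \beta n$ with $\beta > \tilde\alpha$ you claim the "same calculation with $\beta$ in place of $\tilde\alpha$" gives a tighter bound and then appeal to monotonicity of $x\mapsto (2-x)/x$. But the same calculation is only available when $\beta \le \alpha$: once $\beta > \alpha$, the complement $S\setminus T$ has size $(1-\beta)n < (1-\alpha)n$ and Definition~\ref{def:res} no longer applies to it, so the intermediate bound $\tfrac{2-\beta}{\beta}\rho$ is never established and there is nothing for monotonicity to act on. Since the statement quantifies over all $T$ of size \emph{at least} $\tilde\alpha n$, the range $\alpha n < |T| < (1-\alpha)n$ is genuinely uncovered by your argument (the range $|T|\ge(1-\alpha)n$ is trivially covered by resilience directly). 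The standard fix is an averaging step: every statistic here is a sample mean, so for $|T| > \alpha n$ one writes $\mu_T$ as the average of $\mu_U$ over uniformly random subsets $U\subset T$ of size $\lfloor\alpha n\rfloor$, applies your complement argument to each such $U$ (whose complement in $S$ does clear the $(1-\alpha)n$ threshold), and concludes $|\mu_T-\mu|\le \tfrac{2-\tilde\alpha}{\tilde\alpha}\rho$ by convexity. With that patch the proof is complete; without it, the corollary as stated is only proved for $|T|\le\alpha n$.
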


\subsection{Proof of technical lemmas}
\subsubsection{Proof of Lemma~\ref{lemma:norm_a_tail}}
\label{app:proof_norm_a_tail} 

Using Markov inequality, 
	\begin{align}
		\prob\left(\ip{v}{x}^2\geq t^2\right) 
		&= \prob\left(e^{\ip{v}{x}^{1/a}}\geq e^{t^{1/a}}\right)\\
		&\leq e^{-t^{1/a}}\E[e^{\ip{v}{x}^{1/a}}]\\
		&\leq e^{-t^{1/a}}e^{K (\E[\ip{v}{x}^2])^{1/(2a)}}\\
		&=2\exp\Big(-\Big(\frac{t^2}{K^2\E[\ip{v}{x}^2]}\Big)^{1/(2a)}\Big)\;. \label{eq:tail_weibull}
	\end{align}
	
	This implies for any fixed $v$, with probability $1-\zeta$,
	\begin{align}
		\ip{x}{v}^2\leq K^2v^\top \E[xx^\top]v\log^{2a}(1/\zeta)\;.
	\end{align}
	
	For $j$-th coordinate, let $v=e_j$ where $j\in [d]$.  Definition~\ref{def:a_tail} implies
	\begin{align}
		\E\left[\exp\left(\left(\frac{x_j^2}{K^2\Tr(\Sigma)}\right)^{1/(2a)}\right)\right]\leq \E\left[\exp\left(\left(\frac{x_j^2}{K^2\Sigma_{jj}}\right)^{1/(2a)}\right)\right]\leq 2\;.
	\end{align}	
	
	Note that $f(x)=x^{\alpha}$ is concave function for $\alpha\leq 1$ and $x>0$. Then $(a_1+\cdots a_k)^\alpha\leq a_1^\alpha+ \cdots a_k^\alpha$ holds for any positive numbers $a_1, \cdots, a_k >0$. By our assumption that $1/(2a)\leq 1$. , we have
	\begin{align}
		\E[\exp\left({\left(\frac{\|x\|^2}{K^2\Tr(\Sigma)}\right)}^{1/(2a)}\right)]&=\E[\exp\left({\left(\frac{x_1^2+x_2^2+\cdots+x_d^2}{K^2\Tr(\Sigma)}\right)}^{1/(2a)}\right)]\\
		&\leq \E[\prod_{j=1}^d\exp\left(\left(\frac{x_j^2}{K^2\Tr(\Sigma)}\right)^{1/(2a)}\right)]\\
		&\leq \left(\frac{\sum_{j=1}^d\E[\exp\left(\left(\frac{x_j^2}{K^2\Tr(\Sigma)}\right)^{1/(2a)}\right)]}{d}\right)^d\\
		&\leq 2\;.
	\end{align}
	
	By Markov inequality,
	\begin{align}
		\prob\left(\|x\|\geq t\right) 
		&= \prob\left(e^{\|x\|^{1/a}}\geq e^{t^{1/a}}\right)\\
		&\leq e^{-t^{1/a}}\E[e^{\|x\|^{1/a }}]\\
		&\leq \exp\left(-\left(\frac{t^2}{K^2\Tr(\Sigma)}\right)^{1/(2a)}\right)\;.
	\end{align}
	This implies with probability $1-\zeta$,
	\begin{align}
		\|x\|^2\leq K^2\Tr(\Sigma)\log^{2a}(1/\zeta)\;.
	\end{align}
	
	


\section{Experiments}
\label{appendix:experiments}
\subsection{DP Linear Regression}
Experimental results for $\epsilon=0.1$ can be found in Figure~\ref{fig:dp_regression_more}. The observations are similar to the $\epsilon=1$ case. In particular, $\ssp$ has poor performance when $\sigma$ is small. In other settings, $\ssp$ has better performance than $\dprobgd$. 
\begin{figure}
    \centering
    \includegraphics[scale=0.25]{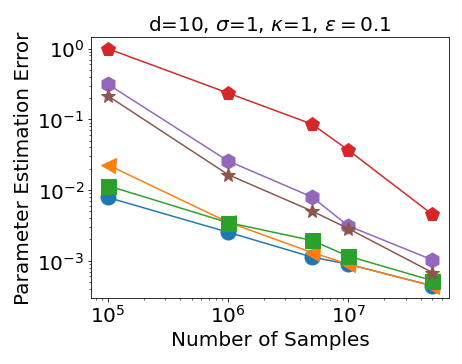}
    \includegraphics[scale=0.25]{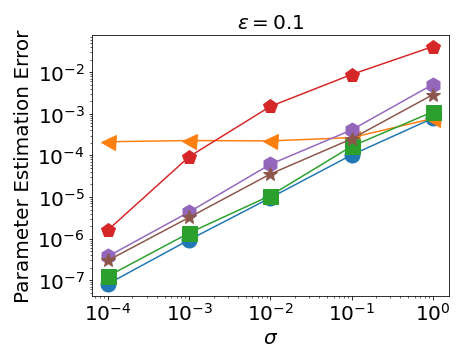}
    \includegraphics[scale=0.25]{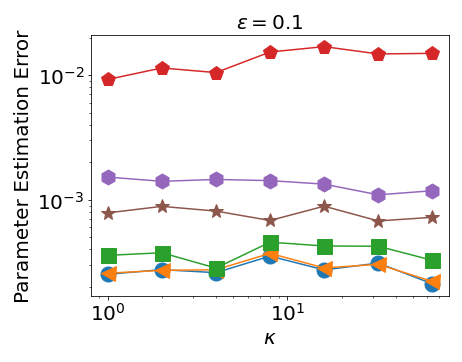}
    \includegraphics[scale=0.3]{plots/legend_trimmed_new.png}
    \vspace{-0.1in}\caption{Performance of various techniques on DP linear regression. $d=10$ in all the experiments. $n=10^7, \kappa=1$ in the $2^{nd}$ experiment. $n=10^7, \sigma=1$ in the $3^{rd}$ experiment.} 
    \label{fig:dp_regression_more}
    \vspace{-0.1in}
\end{figure}

\subsection{DP Robust Linear Regression}

We now illustrate the robustness of our algorithm. We consider the same experimental setup as in Section~\ref{sec:exp} and randomly corrupt $\alpha$ fraction of the response variables by setting them to $1000$. 
 Figure~\ref{fig:dp_robust_regression} presents the results from this experiment. It can be seen that none of the baselines are robust to adversarial corruptions. They can be made arbitrarily bad by increasing the magnitude of corruptions. In contrast, $\dprobgd$ is able to handle the corruptions well. 
\begin{figure}
    \centering
\includegraphics[scale=0.25]{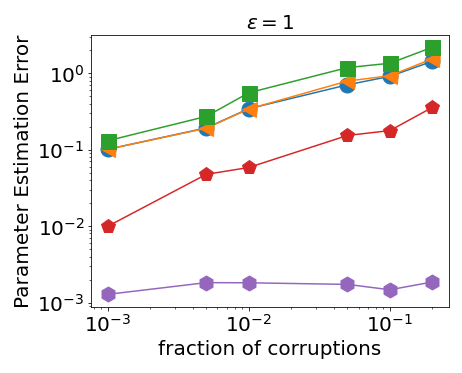}
    \includegraphics[scale=0.25]{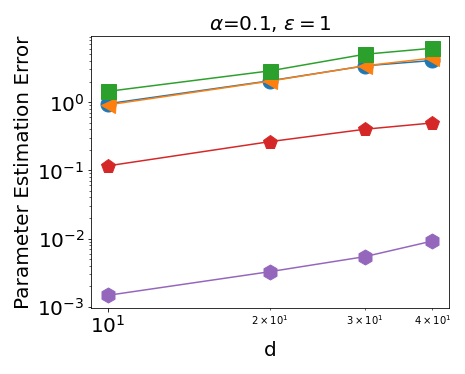}
    \includegraphics[scale=0.3]{plots/legend_trimmed_new.png}
    \caption{Non-robustness of existing techniques to adversarial corruptions. $n=10^7$, $\sigma=1$ in both experiments.}
    \label{fig:dp_robust_regression}
\end{figure}

\subsection{Stronger adversary for DP Robust Linear Regression}
In this section, we consider a stronger adversary for $\dprobgd$ than the one considered in Section~\ref{sec:exp}. Recall, for the adversary model considered in Section~\ref{sec:exp}, $\dprobgd$ was able to consistently estimate the parameter $w^*$ (\emph{i.e.,} the parameter recovery error goes down to $0$ as $n\to\infty$). This is because the algorithm was able to easily identify the corruptions and ignore the corresponding points while performing gradient descent. We now construct a different instance where the corruptions are hard to identify. Consequently, $\dprobgd$ can no longer be consistent against the adversary. This hard instance is inspired by the lower bound in \citet{bakshi2021robust} (see Theorem 6.1 of \citet{bakshi2021robust}). This is a 2 dimensional problem where the first covariate is sampled uniformly from $[-1,1].$ The second covariate, which is uncorrelated from the first, is sampled from a distribution with the following pdf
\[
p(x^{(2)}) = \begin{cases}
\frac{\alpha}{2} \quad &\text{if } x^{(2)} \in \{-1,1\}\\
\frac{1-\alpha}{2\alpha\sigma} \quad & \text{if } x^{(2)} \in [-\sigma, \sigma]\\
0 \quad & \text{otherwise}
\end{cases}.
\]
We set $\sigma=0.1$ in our experiments. The noise $z_i$ is sampled uniformly from $[-\sigma, \sigma].$ We consider two possible parameter vectors $w^* = (1,1)$ and $w^*=(1,-1)$. It can be shown that the total variation (TV) distance between these problem instances (each parameter vector corresponds to one problem instance) is $\Theta(\alpha)$~\citep{bakshi2021robust}. What this implies is that, one can corrupt at most $\alpha$ fraction of the response variables and convert one problem instance into another. Since the distance (in $\Sigma$ norm) between the two parameter vectors is $\Omega(\alpha \sigma),$ any algorithm will suffer an error of $\Omega(\alpha \sigma)$. 

We generate $10^7$ samples from this problem instance and add corruptions that convert one problem instance to the other. Figure~\ref{fig:dp_robust_regression_more} presents the results from this experiment. It can be seen that our algorithm works as expected. In particular, it is not consistent in this setting. Moreover, the parameter recovery error increases with the fraction of corruptions.
\begin{figure}
    \centering
    \includegraphics[scale=0.25]{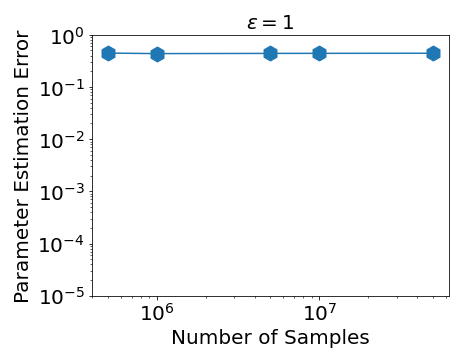}
    \includegraphics[scale=0.25]{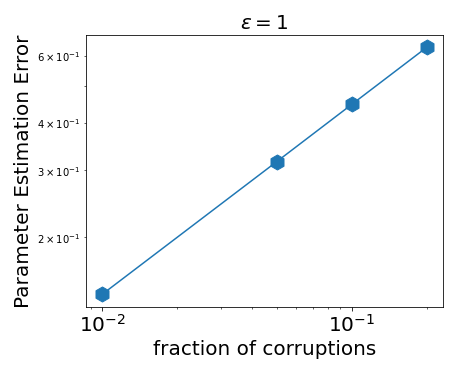}
    \caption{Performance against the stronger adversary}
    \label{fig:dp_robust_regression_more}
\end{figure}

\paragraph{}

\section{Heavy-tailed noise}
\label{app:heavy_tailed}
We study the heavy-tailed regression settings where the label noise $z_i$ is hypercontractive, which is common in robust linear regression literature \citep{klivans2018efficient,liu2022differential}. We define $(\kappa_2,k)$-hypercontractivity as follows. This is a heavy-tailed distribution we have bound only up to the $k$-th moment. 

\begin{definition}
     \label{def:hyper}
     For integer $k\geq 4$, a distribution $P_{\mu,\Sigma}$ is 
     $(\kappa_2,k)$-hypercontractive if for all $v\in\reals^d$, 
     $\E_{x\sim P_X}[ |\langle v , (x-\mu)\rangle |^k] \leq \kappa_2^k (v^\top\Sigma v)^{k/2}$, where $\Sigma$ is the covariance.
 \end{definition} 

We give a formal description of our setting in Assumption~\ref{asmp:distribution_ht}. Note that we consider the input vector $x_i$ to be sub-Weibull and label noise $z_i$ to be hypercontractive. If both $x_i$ and $z_i$ are hypercontractive, the uncorrupted set $S_{\rm good}$ is known to be not resilient \citep{zhu2019generalized,liu2022differential}. However, by \citep[Lemma~G.10]{zhu2019generalized}, we can clip $x_i$ by $O(\sqrt{d}\|\Sigma\|_2)$, and obtain a $(\alpha, O(\kappa\alpha^{1-1/k}), O(\kappa\alpha^{1-2/k}), O(\kappa\alpha^{1-2/k}), O(\kappa\alpha^{1-1/k}))$-resilient set \citep[Lemma~4.19]{liu2022differential}. This would result in sub-optimal error rate $\tilde{O}(\kappa\alpha^{1-2/k})$, which depends on condition number $\kappa$. For convenience, in this section, we further assume that  $x_i$ and $z_i$ are independent. In the dependent case, the only thing we need to change is the $\rho_1$ resilience from $O(\alpha^{1-1/k})$ to $O(\alpha^{1-2/k})$ in Lemma~\ref{lemma:ht_conditions}. This would result in $O(\alpha^{1-3/k})$ error rate if we plug  this new resilience in Theorem~\ref{thm:main_ht}.

\begin{asmp}[$(\Sigma,\sigma^2,w^*,K,a, \kappa_2, k)$-model]
    \label{asmp:distribution_ht}
	A multiset $S_{\rm good}= \{(x_i\in \reals^d, y_i\in \reals)\}_{i=1}^n$ of $n$ i.i.d.~samples  is from a linear model $y_i=\ip{x_i}{w^*}+z_i$, where the input vector $x_i$ is zero mean, $\E[x_i]=0$, with a positive definite covariance  $\Sigma:=\E[x_ix_i^\top]\succ 0$, and the independent label noise $z_i$ is zero mean, $\E[z_i]=0$, with variance $\sigma^2 := \E[z_i^2]$.
	We assume that the marginal distribution of $x_i$ is $(K,a)$-sub-Weibull and that of $z_i$ is  $(\kappa_2,k)$-hypercontractive, as defined above.  
	\end{asmp}
	This is similar to the light-tailed case in Assumption~\ref{def:a_tail}. The main difference is that the noise $z_i$ is heavy-tailed and independent of the input $x_i$. 
	\begin{asmp}[$\alpha_{\rm corrupt}$-corruption]  \label{asmp:corrupt_ht} 
Given a dataset $S_{\rm good}=\{(x_i, y_i)\}_{i=1}^n$, an adversary inspects all the data points, selects $\alpha_{\rm corrupt} n$ data points denoted as $S_r$, and replaces the labels with arbitrary labels while keeping the covariates unchanged. 
We let $S_{\rm bad}$ denote this set of $\alpha_{\rm corrupt} n$ newly labelled examples by the adversary. Let the resulting set be $S:=S_{\rm good}\cup S_{\rm bad}\setminus S_r$.  We further assume that the corruption rate is bounded by $\alpha_{\rm corrupt} \leq \bar \alpha$, where $\bar\alpha$ is a positive constant that depends on $\kappa_2$, $k$, $K$, $\log(\kappa)$, $a$ and $\zeta$.
\end{asmp}

Compared to Assumption~\ref{asmp:corrupt}, this only difference is in the conditions on $\bar\alpha$.  Similar as Lemma~\ref{lemma:subweibull_res_conditions}, we have the following lemma showing that under Assumption~\ref{asmp:distribution_ht}, the uncorrupted dataset can $S_{\rm good}$ is corrupt-good, which means that it can be seen as being corrupted from a resilient set. We provide the proof in App.~\ref{app:proof_ht_conditions}.

 \begin{lemma}
	\label{lemma:ht_conditions}
	A multiset of i.i.d.~labeled samples $S_{\rm good}=\{(x_i, y_i )\}_{i=1}^n$ is generated from a linear model: $y_i=\ip{x_i}{w^*}+z_i$,  where feature vector $x_i$ has zero mean and covariance $\E[x_ix_i^\top]=\Sigma \succ 0$, independent label noise $z_i$ has zero mean and covariance $\E[z_i^2]=\sigma^2>0$.
		Suppose $x_i$ is $(K,a)$-sub-Weibull, $z_i$ is  $(\kappa_2, k)$-hypercontractive, then there exist constants $c_1, C_2>0$ such that, for any  $0<\alpha\leq \tilde \alpha\leq c$ where $c\in (0,1/2)$ is some absolute constant if 
	\begin{align}
	n \geq c_{1}\left(\frac{d}{\zeta^{2(1-1 / k)} \alpha^{2(1-1 / k)}}+\frac{k^2 \alpha^{2-2 / k} d \log d}{\zeta^{2-4 / k} \kappa_2^2}+\frac{\kappa_2^2 d \log d}{\alpha^{2 / k}}+\frac{d+\log(1/\zeta)}{\tilde{\alpha}^2}\right)\;,
	\end{align}		
	then with probability $1-\zeta$, 
	$S_{\rm good}$ is \\
		 $(0.2\alpha, \alpha, C_2k(ka)^aK\kappa_2\alpha^{1-1/k}\zeta^{-1/k},C_2K^2\tilde{\alpha}\log^{2a}(1/\tilde{\alpha}),C_2k^2\kappa_2^2\alpha^{1-2/k}\zeta^{-2/k}, C_2K\tilde\alpha \log^{a}(1/\tilde{\alpha}))$-corrupt good with respect to $(w^*,\Sigma,\sigma)$.
	\end{lemma}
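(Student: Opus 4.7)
The plan is to decompose the proof into four resilience-type conditions and handle the covariate and noise parts separately, leveraging the independence of $x_i$ and $z_i$ together with trimming of extreme noise values.

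First, I would apply Lemma~\ref{lemma:subweibull_res_conditions} directly to the covariate sequence $\{x_i\}$ at level $\tilde\alpha$. Since the sub-Weibull assumption on $x_i$ alone suffices for the covariate-only resilience conditions \eqref{def:res2} and \eqref{def:res4}, this immediately produces the bounds with $\rho_2 = C_2 K^2\tilde\alpha\log^{2a}(1/\tilde\alpha)$ and $\rho_4 = C_2 K\tilde\alpha\log^a(1/\tilde\alpha)$, consuming the $(d+\log(1/\zeta))/\tilde\alpha^2$ portion of the sample-complexity budget. These bounds hold on the full set $S_{\rm good}$, independent of the noise trimming below.

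Next, to convert the heavy-tailed noise into a resilient-like structure, I would trim the extreme noise samples. By Markov's inequality applied to $|z_i|^k$ and the $(\kappa_2,k)$-hypercontractivity of $z_i$, choosing a threshold of order $\kappa_2\sigma(n/m)^{1/k}$ ensures that at most $m = 0.1\alpha n$ samples exceed it with high probability. Removing these gives a subset $S' \subset S_{\rm good}$ of size $(1-0.1\alpha)n$, exhibiting $S_{\rm good}$ as a $0.2\alpha$-corruption of $S'$, which matches the first parameter in the ``corrupt good'' claim. It remains to verify that $S'$ is $(\alpha,\rho_1,\rho_2,\rho_3,\rho_4)$-resilient.

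For the noise-variance condition \eqref{def:res3}, I would use the standard Hölder-plus-moment trick: for any deletion $T\subset S'$ of relative size $\alpha$, the discrepancy $|(1/|T|)\sum_{i\in T}z_i^2 - \sigma^2|$ is controlled by (a) concentration of the empirical second moment of $z_i$ to $\sigma^2$, obtained from a moment bound on $\sum z_i^k$ together with Markov-on-the-$k/2$-th-moment, which produces the $\zeta^{-2/k}$ factor, and (b) a Hölder bound $|\sum_{S'\setminus T}z_i^2|\le |S'\setminus T|^{1-2/k}(\sum z_i^k)^{2/k}$, which produces the $\alpha^{1-2/k}$ factor. Combining yields $\rho_3 = C_2 k^2\kappa_2^2\alpha^{1-2/k}\zeta^{-2/k}$. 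For the cross-term condition \eqref{def:res1}, conditioning on $\{x_i\}$ and exploiting independence, the random variable $\langle v,x_i\rangle z_i$ has $k$-th moment at most $K^k(ka)^{ak}\kappa_2^k\sigma^k(v^\top\Sigma v)^{k/2}$ since the sub-Weibull $k$-th moment of $\langle v,x_i\rangle$ is $\sqrt{v^\top\Sigma v}\cdot K(ka)^a$ and $z_i$ is independent with $k$-th moment $\kappa_2^k\sigma^k$. A parallel Hölder-plus-Markov argument, combined with an $\epsilon$-net over unit vectors $v\in\mathbb{R}^d$ (contributing the $d\log d$ in the sample budget), gives $\rho_1 = C_2 k(ka)^a K\kappa_2\alpha^{1-1/k}\zeta^{-1/k}$.

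The main obstacle I anticipate is obtaining the right $\zeta^{-1/k}$ and $\zeta^{-2/k}$ dependence uniformly over all $\alpha$-deletions $T$ and all directions $v$. A naive application of Markov on each direction and each deletion would blow up the failure-probability budget; instead the argument must combine Markov on a \emph{single} moment-sum ($\sum z_i^k$ or $\sum |\langle v,x_i\rangle z_i|^k$), then use Hölder within each deletion to localize the failure. The net/union-bound over $v$ is subsumed into the $d$ and $k\alpha^{2-2/k}d\log d/\zeta^{2-4/k}\kappa_2^2$ terms in the sample bound. Finally, combining all the failure events (covariate resilience from Lemma~\ref{lemma:subweibull_res_conditions}, noise trimming, and the two Markov applications) via a union bound at level $\zeta$ completes the proof, and the stated sample complexity is obtained by taking the worst of the four separate requirements.
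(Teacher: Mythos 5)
Your proposal is correct and follows essentially the same route as the paper: covariate resilience (conditions \eqref{def:res2} and \eqref{def:res4}) comes from Lemma~\ref{lemma:subweibull_res_conditions} at level $\tilde\alpha$, the $k$-th moment of $\langle v,x_i\rangle z_i$ is bounded via independence and the sub-Weibull moment bound (showing $\sigma^{-1}\Sigma^{-1/2}x_iz_i$ is $((ka)^aK\kappa_2,k)$-hypercontractive), and the heavy-tailed conditions \eqref{def:res1} and \eqref{def:res3} hold only after deleting an $O(\alpha)$ fraction of points, which is exactly why the conclusion is ``corrupt good'' rather than ``resilient.'' The one difference is that the paper black-boxes the last step by citing Lemma~G.10 of \citet{zhu2019generalized} (applied separately to $\sigma^{-1}\Sigma^{-1/2}x_iz_i$ and to $\sigma^{-1}z_i$, then intersecting the two surviving subsets to get the $0.2\alpha$ removal), whereas you re-derive it from scratch via trimming plus the single-Markov/H\"older/net argument --- which is precisely the standard proof of that cited lemma, so nothing is lost; your version just needs the same care you already flag about making the removed set direction-independent.
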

 	
In the rest of this section, we assume we have a $(O(\alpha),\alpha,\rho_1,\rho_2, \rho_3, \rho_4)$-corrupt good set under Assumption~\ref{asmp:distribution_ht} and present following algorithm and our main theorem under this setting in Theorem~\ref{thm:main_ht}. We also provide the proof in App.~\ref{app:proof_main_ht}.

 \begin{algorithm2e}[H]    
   \caption{Robust and Private Linear Regression for heavy-tailed noise}
   \label{alg:main_ht}
   	\DontPrintSemicolon 
	\KwIn{dataset $S=\{(x_i, y_i)\}_{i=1}^{3n}$, $(\varepsilon ,\delta )$,  $T$, learning rate $\eta$, failure probability $\zeta$,  target error rate $\alpha$, distribution parameter $(K,a)$}
	\SetKwProg{Fn}{}{:}{}
	{ 
	Partition dataset $S$ into three equal sized disjoint subsets $S =  S_1\cup S_2\cup S_3  $. \\
	$\delta_0\gets\delta/(2T)$, $\varepsilon_0\gets\varepsilon/(4\sqrt{T\log(1/\delta_0)})$, $\zeta_0\gets \zeta/3$, 	$w_0\gets 0$\\
	$\Gamma\gets {\rm Private Norm Estimator}(S_1, \varepsilon_0, \delta_0, \zeta_0)$, 
    $\Theta\gets K\sqrt{2\Gamma}\log^{a}(n/{ \zeta_0})$\label{line:clip0_ht}\\
	\For{$t=1, 2, \ldots,  T-1$}{ 
	$ \gamma_t \gets {\rm Robust Private Distance Estimator}(S_2, w_t, \varepsilon_0, \delta_0, \alpha, \zeta_0)$ \\
	$\theta_t \gets 2 \sqrt{2\gamma_t}  \cdot \sqrt{\max\{8\rho_2/\alpha, 8\rho_3/\alpha\}+1}\label{line:clip1_ht}$.\\
	Sample $\nu_t \sim \cN\left(0, \mathbf{I}_d\right)$\\
	$w_{t+1}\gets   w_{\color{blue} t}-\eta\left(\frac{1}{n}\sum_{i\in S_3}\left({\rm clip}_{\Theta}(x_{i}){\rm clip}_{\theta_t}\left(w_t^\top x_i-y_i\right)\right)+\frac{\sqrt{2\log(1.25/\delta_0)}\Theta\theta_t}{\varepsilon_0 n}\cdot\nu_t\right)	$ \\
	}
	Return $w_T$
	}
\end{algorithm2e}

\begin{thm}
    \label{thm:main_ht}
    Algorithm~\ref{alg:main_ht} is $(\varepsilon, \delta)$-DP. 
Under $(\Sigma,\sigma^2,w^*,K,a, \kappa_2, k)$-model of Assumption~\ref{asmp:distribution_ht} and $\alpha_{\rm corrupt}$-corruption of Assumption \ref{asmp:corrupt_ht} and for any failure probability $\zeta\in(0,1)$ and target error rate $\alpha \geq 1.2\alpha_{\rm corrupt}$, if the dataset $S$ is  $(0.2\alpha,\alpha,\rho_1,\rho_2, \rho_3, \rho_4)$-corrupt good set $S$ with respect to $(w^*,\Sigma,\sigma)$ and sample size is large enough such that 
	\begin{align}
	    n = 
	    & O  \left(K^2d\log(d/\zeta)\log^{2a}(n/\zeta)
	    +
	    \frac{K^2 d T^{1/2}\log(T/\delta)\log^a(n/(\alpha\zeta))\sqrt{8\max\{\rho_2/\alpha,\rho_3/\alpha\}+1}}{\varepsilon \hat\rho(\alpha) }\right) ,
	    \label{eq:main_n_hyper} 
	\end{align} 
	where  $\hat{\rho}(\alpha)=\max\{\rho_1, 3\rho_2, 2\rho_4\sqrt{8\max\{\rho_2/\alpha,\rho_3/\alpha\}+1}\}$, then the choices of a small enough step size, $\eta\leq 1/(1.1\lambda_{\max}(\Sigma))$, and the number of iterations,  
	    $T= \tilde\Theta\left(\kappa \log\left(\|w^*\|
	    \right)\right)\,$ for a condition number of the covariance $\kappa:=\lambda_{\rm max}(\Sigma)/\lambda_{\rm min}(\Sigma)$, 
ensures that, with probability $1-\zeta$, Algorithm~\ref{alg:main} achieves  
	\begin{align}
		 &\E_{\nu_1, \cdots, \nu_t\sim \cN(0, \mathbf{I}_d)}\big[\,\|  w_{T}-w^*\|_\Sigma^2\,\big] \;\;=\;\; 
		  \tilde{O}\Big(\,  \hat{\rho}^2(\alpha)\sigma^2\,\Big)\;,
		 \label{eq:main_hyper} 
	\end{align}
	where the expectation is taken over the noise added for DP, and $\tilde \Theta(\cdot)$ hides logarithmic terms in $K,\kappa_2, \sigma,d,n,1/\varepsilon,\log(1/\delta),1/\alpha$, and $\kappa$. 
\end{thm}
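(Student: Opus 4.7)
The plan is to mirror the three-part structure of the proof of Theorem~\ref{thm:main} (privacy analysis, deterministic conditions on the dataset, and the two-stage convergence analysis of Lemma~\ref{lem:gd}), replacing every quantitative use of the sub-Weibull resilience constants $(C_2K^2\alpha\log^{2a}(1/\alpha), C_2K\alpha\log^{a}(1/\alpha))$ by the more general $(\rho_1,\rho_2,\rho_3,\rho_4)$ supplied by Lemma~\ref{lemma:ht_conditions}. Privacy is immediate: each iteration applies the Gaussian mechanism to a clipped-gradient query whose sensitivity is $\Theta\theta_t/n$, the Private Norm Estimator is $(\varepsilon_0,\delta_0)$-DP by Lemma~\ref{lem:norm}, the Robust Private Distance Estimator is $(\varepsilon_0,\delta_0)$-DP by Theorem~\ref{thm:distance}, and advanced composition (Lemma~\ref{lem:composition}) over $T$ rounds plus parallel composition (Lemma~\ref{lem:parallel}) on the disjoint partition $S=S_1\cup S_2\cup S_3$ yields the stated end-to-end $(\varepsilon,\delta)$-DP.

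For utility, I first condition on three deterministic events holding on the clean portion $G=S_3\cap S_{\rm good}$: the empirical covariance satisfies $0.9\Sigma\preceq\hat\Sigma\preceq 1.1\Sigma$ (Lemma~\ref{lemma:cov_concentration_tail}, uses the sub-Weibull assumption on $x_i$), all clean covariates obey $\|x_i\|^2\le K^2\Tr(\Sigma)\log^{2a}(n/\zeta)$ so that ${\rm clip}_\Theta(x_i)=x_i$ (Lemma~\ref{lemma:norm_a_tail}), and $G$ is $(0.2\alpha,\alpha,\rho_1,\rho_2,\rho_3,\rho_4)$-resilient (Lemma~\ref{lemma:ht_conditions}). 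The step that needs new checking is the two clipping calibrations: (i) the Distance Estimator still returns a constant-factor estimate $\gamma_t\asymp \|w_t-w^*\|_\Sigma^2+\sigma^2$ because its correctness in App.~\ref{app:proof_distance} only invokes the resilience bounds in $\rho_2,\rho_3$ (which I substitute directly), and (ii) the residual threshold $\theta_t$ of line~\ref{line:clip1_ht} is designed exactly so that Lemma~\ref{lemma:clipping_fraction}'s argument yields at most $\alpha n$ clipped clean points, using $|z_i|^2$ concentration via $\rho_3$ and $|\langle w^*-w_t,x_i\rangle|^2$ concentration via $\rho_2$ in place of the sub-Weibull tail used in the original proof.

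Next, I reproduce the one-step decomposition \eqref{eq:onestep} and bound the four error terms $\hat v,u_t^{(1)},u_t^{(2)},u_t^{(3)}$ using the corresponding resilience inequalities of Corollary~\ref{coro:res}. The clean gradient noise satisfies $\|\Sigma^{-1/2}\hat v\|\le \rho_1\sigma$; the clipped clean bias satisfies $\|\Sigma^{-1/2}u_t^{(1)}\|\le 3\rho_2\|w_t-w^*\|_\Sigma$ and $\|\Sigma^{-1/2}u_t^{(2)}\|\le 3\rho_1\sigma$; and the adversary plus clipped-clean term is controlled via $|S_{\rm bad}\cup E_t|\le 1.2\alpha n$ and the bound $|{\rm clip}_{\theta_t}(\cdot)|\le \theta_t$ together with \eqref{def:res4}, giving $\|\Sigma^{-1/2}u_t^{(3)}\|\le 2\rho_4\theta_t\le 2\rho_4\sqrt{8\max\{\rho_2/\alpha,\rho_3/\alpha\}+1}\cdot(\|w_t-w^*\|_\Sigma+\sigma)$. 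Collecting these into the unified factor $\hat\rho(\alpha)=\max\{\rho_1,3\rho_2,2\rho_4\sqrt{8\max\{\rho_2/\alpha,\rho_3/\alpha\}+1}\}$ gives the same clean recursion $\eta\|\hat B^i u_{t-i}\|_{\hat\Sigma}\le \hat\rho(\alpha)(\|w_{t-i}-w^*\|_{\hat\Sigma}+\sigma)/(i+1)$ that drives Step~3 of the proof of Lemma~\ref{lem:gd}. The Gaussian term contributes $\eta^2\sum_i \Tr(\hat B^{2i}\hat\Sigma)\E[\phi_{t-i}^2]$ which, under the sample-complexity assumption \eqref{eq:main_n_hyper} with $\Theta\theta_t$ replacing $K\sqrt{\Tr(\Sigma)}\cdot K\log^a(1/\alpha)$, is absorbed into the same $\hat\rho(\alpha)^2(\E\|w_{t-i}-w^*\|_{\hat\Sigma}^2+\sigma^2)/(i+1)^2$ envelope as in \eqref{eq:noise_ineq_main}.

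The end-to-end convergence (Step~4 of Lemma~\ref{lem:gd}) then goes through verbatim: using a chunk length of $\kappa$ iterations, the maximum expected $\hat\Sigma$-error over a chunk contracts by a factor of $1/e^2+(\log\kappa)^2\hat\rho(\alpha)^2\le 1/2$ whenever it exceeds $O((\log\kappa)^2\hat\rho(\alpha)^2\sigma^2)$, provided $\hat\rho(\alpha)^2\log^2(\kappa)\le 1/2-1/e^2$---this is exactly what the bound on $\bar\alpha$ in Assumption~\ref{asmp:corrupt_ht} is engineered to guarantee once the expressions for $\rho_1,\ldots,\rho_4$ are substituted. After $T=\tilde\Theta(\kappa\log\|w^*\|)$ iterations the error reaches $\tilde O(\hat\rho(\alpha)^2\sigma^2)$, which is \eqref{eq:main_hyper}. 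The main obstacle I anticipate is the verification that the $\hat\rho(\alpha)^2\log^2\kappa\le c$ condition and the threshold choice $\theta_t$ remain compatible: because $\rho_1$ and $\rho_3$ carry $\zeta^{-1/k}$ and $\zeta^{-2/k}$ factors, some care is needed when tracking the dependence on the failure probability $\zeta$ through both the resilience step and the chunked-contraction step, and this is what forces $\bar\alpha$ in Assumption~\ref{asmp:corrupt_ht} to depend on $\zeta$. Plugging in the explicit values of $\rho_i$ then yields $\hat\rho(\alpha)=\tilde O(\alpha^{(k-1)/k})$ (the $\rho_4\sqrt{\rho_3/\alpha}$ term dominates and gives the $\alpha^{1-1/k}$ rate up to logs and $\zeta^{-1/k}$ factors), matching Corollary~\ref{coro:ht_informal_robust} after substituting \eqref{eq:main_n_hyper}.
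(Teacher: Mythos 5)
Your proposal is correct and follows essentially the same route as the paper's proof: it reuses the one-step decomposition and the chunked $\kappa$-step convergence analysis of Lemma~\ref{lem:gd}, substitutes the generic resilience parameters $(\rho_1,\rho_2,\rho_3,\rho_4)$ into the bounds on $\hat v, u_t^{(1)}, u_t^{(2)}, u_t^{(3)}$, and recalibrates the clipping threshold and distance estimator exactly as the paper does. The only (immaterial) divergence is your bound $\|\Sigma^{-1/2}u_t^{(2)}\|\le 3\rho_1\sigma$ versus the paper's $3\rho_3\sigma$; since both quantities are dominated by $\hat\rho(\alpha)$ in the paper's intended accounting, this does not affect the conclusion.
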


By Lemma~\ref*{lemma:ht_conditions}, if we set $\tilde{\alpha}=\alpha^{1-1/k}$, $\rho_1=C_2k(ka)^aK\kappa_2\alpha^{1-1/k}\zeta^{-1/k}$, $\rho_2=C_2K^2{\alpha^{1-1/k}}\log^{2a}(1/{\alpha^{1-1/k}})$,$\rho_3= C_2k^2\kappa_2^2\alpha^{1-2/k}\zeta^{-2/k}$, and $\rho_4=C_2K\alpha^{1-1/k} \log^{a}(1/{\alpha^{1-1/k}})$, we have following corollary.

\begin{coro}
\label{coro:ht_robust}
	 Under the same hypotheses of Theorem~\ref{thm:main_ht} and under $\alpha_{\rm corrupt}$-corruption model of Assumption~\ref{asmp:corrupt_ht}, if $1.2\alpha_{\rm corrupt}\leq \alpha$ and $K,a,\kappa_2,k=O(1)$, then 
	 $n=\tilde O(d/(\zeta^{2-2/k}\alpha^{2-2/k}) + \kappa^{1/2} d/ (\varepsilon \alpha^{1-1/k}))$ samples are sufficient for Algorithm~\ref{alg:main_ht} to achieve an error rate of $(1/\sigma^2)\|\hat w - w^*\|_\Sigma^2 = \tilde O(\zeta^{-2/k}\alpha^{2-4/k})$ with probability $1-\zeta$, where $\kappa:=\lambda_{\rm max}(\Sigma)/\lambda_{\rm min}(\Sigma)$, $\tilde O(\cdot)$ hides logarithmic terms in $\sigma, d, n, 1/\varepsilon, \log(1/\delta), \log(1/\zeta)$ and $\kappa$.
\end{coro}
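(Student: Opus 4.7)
The plan is a direct chaining of Lemma~\ref{lemma:ht_conditions} into Theorem~\ref{thm:main_ht}, followed by algebraic simplification in the regime $K,a,\kappa_2,k=O(1)$. The conclusions of these two results are already stated in the form needed; the only work is choosing the right ``inner'' resilience parameter $\tilde\alpha$ and then computing the composite quantity $\hat\rho(\alpha)$ that appears in Theorem~\ref{thm:main_ht}.

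First I would invoke Lemma~\ref{lemma:ht_conditions} with the choice $\tilde\alpha = \alpha^{1-1/k}$, the balance point hinted at in the paragraph preceding the corollary. Treating the sub-Weibull and hyper-contractive constants as $O(1)$ and absorbing $\mathrm{poly}\log(1/\alpha)$ factors into $\tilde O(\cdot)$, this furnishes a $(0.2\alpha,\alpha,\rho_1,\rho_2,\rho_3,\rho_4)$-corrupt good set with $\rho_1 = \tilde O(\alpha^{1-1/k}\zeta^{-1/k})$, $\rho_2 = \tilde O(\alpha^{1-1/k})$, $\rho_3 = \tilde O(\alpha^{1-2/k}\zeta^{-2/k})$, and $\rho_4 = \tilde O(\alpha^{1-1/k})$. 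Of the four terms in the lemma's sample-size requirement, the first and last both reduce to $\tilde O(d/(\zeta^{2-2/k}\alpha^{2-2/k}))$ and strictly dominate the remaining two under the constant assumptions; so step one demands exactly this many samples.

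Next I would evaluate the error-driving parameter $\hat\rho(\alpha) = \max\{\rho_1,\, 3\rho_2,\, 2\rho_4\sqrt{8\max\{\rho_2/\alpha,\rho_3/\alpha\}+1}\}$ appearing in Theorem~\ref{thm:main_ht}. Because $\alpha\in(0,1)$ and $k\ge 4$, the ratio $\rho_3/\alpha = \tilde O(\alpha^{-2/k}\zeta^{-2/k})$ strictly dominates $\rho_2/\alpha$, so $\sqrt{8\max\{\rho_2/\alpha,\rho_3/\alpha\}+1} = \tilde O(\alpha^{-1/k}\zeta^{-1/k})$. Multiplying by $\rho_4$ shows the third candidate equals $\tilde O(\alpha^{1-2/k}\zeta^{-1/k})$, which for $\alpha<1$ is strictly larger than both $\rho_1$ and $\rho_2$. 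Hence $\hat\rho(\alpha) = \tilde O(\alpha^{1-2/k}\zeta^{-1/k})$, and the identification of this dominant term is essentially the only non-trivial piece of the proof.

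Finally I would plug $\hat\rho(\alpha)$ into the conclusions of Theorem~\ref{thm:main_ht}. The error bound $\tilde O(\hat\rho(\alpha)^2\sigma^2) = \tilde O(\alpha^{2-4/k}\zeta^{-2/k}\sigma^2)$ matches the corollary. For the sample complexity, substituting $T=\tilde\Theta(\kappa)$ together with the previously computed $\sqrt{8\max\{\cdot\}+1}$ and $\hat\rho(\alpha)$ into the privacy term of \eqref{eq:main_n_hyper} collapses it to $\tilde O(\sqrt{\kappa}\,d/(\varepsilon\alpha^{1-1/k}))$; adding the step-one requirement $\tilde O(d/(\zeta^{2-2/k}\alpha^{2-2/k}))$ yields the claimed total. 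The corruption constraint $1.2\alpha_{\rm corrupt}\le\alpha$ stated in the corollary is exactly the hypothesis $\alpha\ge 1.2\alpha_{\rm corrupt}$ required by Theorem~\ref{thm:main_ht}, so no extra verification is needed.
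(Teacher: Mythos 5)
Your proposal is correct and follows essentially the same route as the paper: the paper obtains Corollary~\ref{coro:ht_robust} by instantiating Lemma~\ref{lemma:ht_conditions} with $\tilde\alpha=\alpha^{1-1/k}$ and substituting the resulting $(\rho_1,\rho_2,\rho_3,\rho_4)$ into Theorem~\ref{thm:main_ht}, exactly as you do. Your identification of the dominant term $2\rho_4\sqrt{8\max\{\rho_2/\alpha,\rho_3/\alpha\}+1}=\tilde O(\alpha^{1-2/k}\zeta^{-1/k})$ in $\hat\rho(\alpha)$ and the resulting simplifications of the error and sample-complexity bounds are accurate, and in fact supply more of the arithmetic than the paper itself writes out.
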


Simiarly, if we set $\tilde{\alpha}=\alpha$, $\rho_1=C_2k(ka)^aK\kappa_2\alpha^{1-1/k}\zeta^{-1/k}$, $\rho_2=C_2K^2{\alpha}\log^{2a}(1/{\alpha})$,$\rho_3= C_2k^2\kappa_2^2\alpha^{1-2/k}\zeta^{-2/k}$, and $\rho_4=C_2K\alpha \log^{a}(1/{\alpha})$, we have following corollary.

\begin{coro}
\label{coro:ht_robust2}

	Under the same hypotheses of Theorem~\ref{thm:main_ht} and under $\alpha_{\rm corrupt}$-corruption model of Assumption~\ref{asmp:corrupt_ht}, if $1.2\alpha_{\rm corrupt}\leq \alpha$ and $K,a,\kappa_2,k=O(1)$, then 
	$n=\tilde O(d/(\zeta^{2-2/k}\alpha^{2-2/k}) + \kappa^{1/2} d/ (\varepsilon \alpha)+(d+\log(1/\zeta)/\alpha^2))$ samples are sufficient for Algorithm~\ref{alg:main_ht} to achieve an error rate of $(1/\sigma^2)\|\hat w - w^*\|_\Sigma^2 = \tilde O(\zeta^{-2/k}\alpha^{2-2/k})$ with probability $1-\zeta$, where $\kappa:=\lambda_{\rm max}(\Sigma)/\lambda_{\rm min}(\Sigma)$, $\tilde O(\cdot)$ hides logarithmic terms in $\sigma, d, n, 1/\varepsilon, \log(1/\delta), \log(1/\zeta)$ and $\kappa$.
\end{coro}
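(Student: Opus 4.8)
\textbf{Proof proposal for Corollary~\ref{coro:ht_informal_robust} (and its formal version Coro.~\ref{coro:ht_robust2}).} The plan is to reduce the corollary to Theorem~\ref{thm:main_ht} by plugging in the explicit resilience parameters supplied by Lemma~\ref{lemma:ht_conditions}. First I would invoke Lemma~\ref{lemma:ht_conditions} with the choice $\tilde{\alpha} = \alpha$ (rather than $\tilde{\alpha} = \alpha^{1-1/k}$, which is the choice that gives the weaker Coro.~\ref{coro:ht_robust}); under the sample-size hypothesis $n = \tilde{O}(d/(\zeta^{2-2/k}\alpha^{2-2/k}) + \kappa^{1/2}d/(\varepsilon\alpha) + (d+\log(1/\zeta))/\alpha^2)$ and $K,a,\kappa_2,k = \Theta(1)$, the lemma certifies that with probability $1-\zeta$ the clean set $S_{\rm good}$ is $(0.2\alpha, \alpha, \rho_1, \rho_2, \rho_3, \rho_4)$-corrupt good with respect to $(w^*,\Sigma,\sigma)$, where $\rho_1 = \tilde{O}(\alpha^{1-1/k}\zeta^{-1/k})$, $\rho_2 = \tilde{O}(\alpha)$, $\rho_3 = \tilde{O}(\alpha^{1-2/k}\zeta^{-2/k})$, and $\rho_4 = \tilde{O}(\alpha)$ (absorbing the constants $C_2, k, (ka)^a, K, \kappa_2$ and the $\log^{2a}(1/\alpha)$ factors into $\tilde{O}$).

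Next I would compute the composite quantity $\hat{\rho}(\alpha) = \max\{\rho_1, 3\rho_2, 2\rho_4\sqrt{8\max\{\rho_2/\alpha, \rho_3/\alpha\}+1}\}$ that appears in Theorem~\ref{thm:main_ht}. Since $\rho_2/\alpha = \tilde{O}(1)$ and $\rho_3/\alpha = \tilde{O}(\alpha^{-2/k}\zeta^{-2/k})$, the max inside the square root is dominated by $\rho_3/\alpha$, so $\sqrt{8\max\{\rho_2/\alpha,\rho_3/\alpha\}+1} = \tilde{O}(\alpha^{-1/k}\zeta^{-1/k})$. Then $2\rho_4 \cdot \tilde{O}(\alpha^{-1/k}\zeta^{-1/k}) = \tilde{O}(\alpha^{1-1/k}\zeta^{-1/k})$, which matches the order of $\rho_1$ and dominates $3\rho_2 = \tilde{O}(\alpha)$. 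Hence $\hat{\rho}(\alpha) = \tilde{O}(\alpha^{1-1/k}\zeta^{-1/k})$, and therefore $\hat{\rho}(\alpha)^2 = \tilde{O}(\alpha^{2-2/k}\zeta^{-2/k})$. Plugging this into the error bound \eqref{eq:main_hyper} of Theorem~\ref{thm:main_ht} gives $\E[\|w_T - w^*\|_\Sigma^2] = \tilde{O}(\hat{\rho}(\alpha)^2 \sigma^2) = \tilde{O}(\sigma^2 \alpha^{2-2/k}\zeta^{-2/k})$, i.e.\ $(1/\sigma^2)\|\hat{w} - w^*\|_\Sigma^2 = \tilde{O}(\zeta^{-2/k}\alpha^{2-2/k})$. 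Re-parametrizing to the statement in the informal corollary: setting the target accuracy to $\alpha' := \alpha^{(k-1)/k}$ so that $\alpha = (\alpha')^{k/(k-1)}$, the sample complexity $\kappa^{1/2}d/(\varepsilon\alpha)$ becomes $\kappa^{1/2}d/(\varepsilon(\alpha')^{k/(k-1)})$ and the leading data term $d/\alpha^{2-2/k} = d/\alpha^{2(k-1)/k}$ becomes $d/(\alpha')^{2k/(k-1)}$, reproducing the stated rate $n = \tilde{O}(\kappa^{1/2}d/(\varepsilon(\alpha')^{k/(k-1)}) + d/(\alpha')^{2k/(k-1)})$; the error becomes $(1/\sigma^2)\|\hat{w}-w^*\|_\Sigma^2 = \tilde{O}((\alpha')^2)$ after absorbing the $\zeta^{-2/k}$ into $\tilde{O}$ (or carrying it as a high-probability dependence). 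Finally, the condition $1.2\alpha_{\rm corrupt} \leq \alpha = (\alpha')^{k/(k-1)}$ is exactly the corruption hypothesis of Theorem~\ref{thm:main_ht} restated in the new variable, and DP is inherited verbatim from Theorem~\ref{thm:main_ht} since the algorithm (Alg.~\ref{alg:main_ht}) is unchanged.

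The one genuine subtlety — and the step I would treat most carefully — is making sure the sample-complexity expression stays consistent across the two reparametrizations: the $n$-requirement coming from Lemma~\ref{lemma:ht_conditions} (the resilience-certification cost, with its $d/(\zeta^{2-2/k}\alpha^{2-2/k})$, $\alpha^{2-2/k}d\log d/\kappa_2^2$, $\kappa_2^2 d\log d/\alpha^{2/k}$, and $(d+\log(1/\zeta))/\tilde\alpha^2$ terms) must be shown to be no larger (up to logs and the assumed $\Theta(1)$ distribution parameters) than the $n$-requirement \eqref{eq:main_n_hyper} in Theorem~\ref{thm:main_ht}, so that a single clean bound governs both. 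This requires checking that with $\tilde\alpha = \alpha$ the dominant terms are $d/(\zeta^{2-2/k}\alpha^{2-2/k})$ and $(d+\log(1/\zeta))/\alpha^2$ on the statistical side and that the privacy term $K^2 d T^{1/2}\log(T/\delta)\log^a(\cdot)\sqrt{8\max\{\rho_2/\alpha,\rho_3/\alpha\}+1}/(\varepsilon\hat\rho(\alpha))$, with $T = \tilde\Theta(\kappa\log\|w^*\|)$ and the $\sqrt{\cdot}/\hat\rho$ ratio simplifying to $\tilde{O}(1/\alpha)$, reduces to $\tilde{O}(\kappa^{1/2}d/(\varepsilon\alpha))$ — i.e.\ the $\alpha^{-1/k}\zeta^{-1/k}$ from the square root exactly cancels against the $\alpha^{-1/k}\zeta^{-1/k}$ hidden in $1/\hat\rho(\alpha)$. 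Everything else is bookkeeping: combining Theorem~\ref{thm:distance}, Lemma~\ref{lem:norm}, and the heavy-tailed analogue of Lemma~\ref{lemma:clipping_fraction} to discharge the hypotheses of Theorem~\ref{thm:main_ht} on the clipping thresholds, exactly as Theorem~\ref{thm:main} is assembled from its component lemmas in App.~\ref{app:proof_main}.
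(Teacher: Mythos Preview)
Your proposal is correct and follows the same approach as the paper: the paper's entire proof of Corollary~\ref{coro:ht_robust2} is the one-line remark preceding it, namely ``set $\tilde\alpha=\alpha$, $\rho_1=C_2k(ka)^aK\kappa_2\alpha^{1-1/k}\zeta^{-1/k}$, $\rho_2=C_2K^2\alpha\log^{2a}(1/\alpha)$, $\rho_3=C_2k^2\kappa_2^2\alpha^{1-2/k}\zeta^{-2/k}$, $\rho_4=C_2K\alpha\log^a(1/\alpha)$'' in Lemma~\ref{lemma:ht_conditions} and plug into Theorem~\ref{thm:main_ht}, which is exactly what you do (with the additional verification that $\hat\rho(\alpha)=\tilde O(\alpha^{1-1/k}\zeta^{-1/k})$ and that the privacy term simplifies to $\tilde O(\kappa^{1/2}d/(\varepsilon\alpha))$). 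One minor slip in your reparametrization to the informal version: the term $d/\alpha^{2-2/k}$ becomes $d/(\alpha')^2$, not $d/(\alpha')^{2k/(k-1)}$; the latter instead arises from the $(d+\log(1/\zeta))/\alpha^2$ term, so the final informal bound is still correct.
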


As a comparison, we also apply the exponential-time  robust linear regression algorithm $\HPTR$ by \citet{liu2022differential} under our setting.

\begin{thm}[{\citep[Theorem~12]{liu2022differential}}]
 	    \label{thm:hptr_main} 
    There exist positive constants $c$ and $C$ such that for any  $((2/11)\alpha,\alpha,\rho_1,\rho_2, \rho_3, \rho_4)$-corrupt good set $S$ with respect to $(w^*,\Sigma\succ 0,\sigma>0)$ satisfying $\alpha< c$, $\rho_1<c$, $\rho_2<c$,   $\rho_3<c$,and $\rho_4^2\leq c\alpha$, ${\rm HPTR}$  achieves 
    $(1/\sigma) \|(\hat\beta-\beta) \|_\Sigma\leq 32 \rho_1  $ with probability $1-\zeta$, if \begin{eqnarray}
         n \;\geq \; C\, \frac{d+\log(1/(\delta\zeta))}{\varepsilon \alpha}  \; .
    \end{eqnarray}
\end{thm}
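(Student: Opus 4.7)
The theorem is quoted from \citep[Theorem~12]{liu2022differential}, so the canonical argument invokes the High-dimensional Propose--Test--Release (HPTR) framework developed there. My plan is to sketch that framework, specialized to linear regression on $((2/11)\alpha,\alpha,\rho_1,\ldots,\rho_4)$-corrupt-good sets: (i) \emph{propose} a (possibly exponential-time) non-private robust regression estimator $\phi$; (ii) privately \emph{test} that the input $S$ lies in a region where $\phi$ has low local sensitivity; (iii) \emph{release} $\phi(S)$ plus calibrated noise on success, or $\perp$ on failure.

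For the propose step I would take $\phi(S)$ to be the minimizer of a Tukey-type depth functional applied to the gradient score $v \mapsto (1/n)\sum_i \langle v,x_i\rangle(y_i-\langle x_i,w\rangle)$, computed in a basis where $\Sigma$ has been normalized to the identity. On any $(\alpha,\rho_1,\rho_2,\rho_3,\rho_4)$-resilient set $S'$, unrolling Definition~\ref{def:res} with a minimax over direction $v$ yields $(1/\sigma)\|\phi(S')-w^*\|_\Sigma = O(\rho_1)$: resilience \eqref{def:res1} controls the cross term $\sum \langle v,x_i\rangle z_i$ by $\rho_1\sigma$, and \eqref{def:res2} ensures the quadratic form $\sum \langle v,x_i\rangle^2$ stays close to $v^\top\Sigma v$, so the stationary condition $\nabla \phi = 0$ forces $\phi(S')$ within $O(\rho_1\sigma)$ of $w^*$ in $\Sigma$-norm. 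For a corrupt-good $S$, since $S$ differs from some resilient $S'$ in at most $(2/11)\alpha n$ entries, the same argument applied to the clean $1-(2/11)\alpha$ fraction of $S$ transfers the bound and gives $(1/\sigma)\|\phi(S)-w^*\|_\Sigma \le 32\rho_1$, matching the utility claim.

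For the test step, define the safety score $T(S)$ as the Hamming distance from $S$ to the complement of the set of $(\alpha,\rho_1,\ldots,\rho_4)$-resilient datasets; any $((2/11)\alpha)$-corrupt-good input satisfies $T(S) \ge (9/11)\alpha n$. Since $T$ has global sensitivity $1$, I would release $T(S)+\mathrm{Lap}(1/\varepsilon)$ and accept only if the noisy value exceeds $\alpha n/2$. A Laplace tail bound shows the test succeeds with probability $1-\zeta$ once $\alpha n \gtrsim \log(1/(\delta\zeta))/\varepsilon$, which contributes the $\log(1/(\delta\zeta))/(\varepsilon\alpha)$ term in the sample complexity. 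On acceptance I would release $\phi(S)$ plus Gaussian noise whose scale matches the local sensitivity of $\phi$ on the safe region; resilience bounds this local sensitivity by $O(\rho_1\sigma/(\alpha n))$, so the resulting error $O(\rho_1\sigma\sqrt{d\log(1/\delta)}/(\varepsilon\alpha n))$ is absorbed into $O(\rho_1\sigma)$ whenever $n\gtrsim d/(\varepsilon\alpha)$, contributing the other half of the stated bound.

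\textbf{Main obstacle.} The hardest piece is keeping the local-sensitivity analysis $\kappa$-free. A naive robust estimator has local sensitivity scaling with $\lambda_{\max}(\Sigma)/\lambda_{\min}(\Sigma)$, which would destroy the stated bound. HPTR circumvents this by first robustly and privately approximating $\Sigma$ (itself a nontrivial sub-task, using an HPTR-style estimator for the covariance) and then running the depth functional in the normalized basis, so that sensitivity is measured scale-freely. Orchestrating this covariance step, composing it with the propose/test/release steps to give end-to-end $(\varepsilon,\delta)$-DP, and carefully tracking the $\zeta$ failure probabilities through the Laplace test and the corrupt-good-to-resilient transfer are the main technical burdens of the cited proof.
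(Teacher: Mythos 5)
There is nothing in this paper to check your argument against: Theorem~\ref{thm:hptr_main} is imported verbatim from \citet[Theorem~12]{liu2022differential} and is used in App.~\ref{app:heavy_tailed} purely as a benchmark for the heavy-tailed corollaries; the paper supplies no proof of it, and none is expected. So the only meaningful question is whether your reconstruction of the cited proof is faithful, and there it has a real gap.

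Your propose--test--release skeleton and the role of resilience in the utility analysis are broadly right, but the release step is not how HPTR works, and the difference is exactly the crux of the theorem. You propose to release $\phi(S)$ plus Gaussian noise calibrated to a local-sensitivity bound of $O(\rho_1\sigma/(\alpha n))$ for a $d$-dimensional Tukey-depth-type estimator. Establishing such a bound on the local sensitivity of a high-dimensional depth minimizer \emph{in $\Sigma$-norm} is precisely the step HPTR is designed to avoid: the framework never privatizes a $d$-dimensional statistic by noise addition. Instead, the release is an exponential mechanism over a proposed candidate set, with a score built from \emph{one-dimensional} robust statistics of projections $\langle v, x_i\rangle(y_i - x_i^\top w)$ normalized by $\langle v,x_i\rangle^2$ (which is also what makes the guarantee $\kappa$-free without a separate private covariance estimation step). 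The $d/(\varepsilon\alpha)$ term in the sample complexity comes from the exponential mechanism's utility analysis on a resilient set (the log-cardinality of a cover of candidates), not from Gaussian noise; your route would at best reproduce the bound after proving a sensitivity lemma you have only asserted, and a naive version of that lemma fails because a single swapped point can move an unregularized depth minimizer by far more than $\rho_1\sigma/(\alpha n)$ in unfavorable directions. The test step you describe (Laplace-noised distance to the unsafe set, thresholded so that unsafe inputs are rejected except with probability $\delta$) is essentially correct.
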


 We set $\tilde{\alpha}=\alpha^{1-1/k}$, $\rho_1=C_2k(ka)^aK\kappa_2\alpha^{1-1/k}\zeta^{-1/k}$, $\rho_2=C_2K^2{\alpha^{1-1/k}}\log^{2a}(1/{\alpha^{1-1/k}})$,$\rho_3= C_2k^2\kappa_2^2\alpha^{1-2/k}\zeta^{-2/k}$, and $\rho_4=C_2K\alpha^{1-1/k} \log^{a}(1/{\alpha^{1-1/k}})$, we have the following utility gaurentees.
 \begin{coro}
    \label{coro:hptr} 
    Under the hypothesis of Assumption~\ref{asmp:distribution_ht}, 
 there exists a constant $c>0$ such that for any $\alpha\leq c$, $(ka)^aK\kappa_2\alpha^{1-1/k}\zeta^{-1/k}\leq c$, $k^2\kappa_2^2\alpha^{1-2/k}\zeta^{-2/k}\leq c$ and $K^2\alpha^{1-2/k} \log^{2a}(1/{\alpha^{1-1/k}})\leq c$, it is sufficient to have a dataset of size 
 \begin{eqnarray}
	n=O \Big(\frac{d}{\zeta^{2(1-1 / k)} \alpha^{2(1-1 / k)}}+\frac{k^2 \alpha^{2-2 / k} d \log d}{\zeta^{2-4 / k} \kappa_2^2}+\frac{\kappa_2^2 d \log d}{\alpha^{2 / k}}\Big)\;,
\end{eqnarray} 
   such that ${\rm HPTR}$  achieves $(1/\sigma) \| \hat{w}-w^* \|_\Sigma=O(k(ka)^aK\kappa_2\alpha^{1-1/k}\zeta^{-1/k})$ with probability $1-\zeta$. 
 \end{coro}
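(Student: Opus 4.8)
\textbf{Proof proposal for Corollary~\ref{coro:hptr}.}

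The plan is to instantiate Theorem~\ref{thm:hptr_main} with the resilience parameters supplied by Lemma~\ref{lemma:ht_conditions}, and to check that the three hypotheses of that theorem ($\rho_1<c$, $\rho_2<c$, $\rho_3<c$, $\rho_4^2\le c\alpha$) are met under the stated smallness assumptions, while simultaneously arranging that the corruption-good parameter matches the $(2/11)\alpha$ form required by the theorem. First I would set $\tilde\alpha=\alpha^{1-1/k}$ in Lemma~\ref{lemma:ht_conditions}; this yields that $S_{\rm good}$ is $(0.2\alpha,\alpha,\rho_1,\rho_2,\rho_3,\rho_4)$-corrupt good with $\rho_1=C_2 k(ka)^aK\kappa_2\alpha^{1-1/k}\zeta^{-1/k}$, $\rho_2=C_2 K^2\alpha^{1-1/k}\log^{2a}(1/\alpha^{1-1/k})$, $\rho_3=C_2 k^2\kappa_2^2\alpha^{1-2/k}\zeta^{-2/k}$, and $\rho_4=C_2 K\alpha^{1-1/k}\log^{a}(1/\alpha^{1-1/k})$, provided $n$ is at least the expression in that lemma, namely
\begin{align*}
 n \gtrsim \frac{d}{\zeta^{2(1-1/k)}\alpha^{2(1-1/k)}} + \frac{k^2\alpha^{2-2/k}d\log d}{\zeta^{2-4/k}\kappa_2^2} + \frac{\kappa_2^2 d\log d}{\alpha^{2/k}} + \frac{d+\log(1/\zeta)}{\tilde\alpha^2}\;,
\end{align*}
and since $\tilde\alpha^{-2}=\alpha^{-2(1-1/k)}$ the last term is absorbed by the first (up to the $\log(1/\zeta)$ piece, which is lower order), so the displayed sample bound in the corollary statement is exactly what is needed.

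Next I would verify the hypotheses of Theorem~\ref{thm:hptr_main}. The assumption $(ka)^aK\kappa_2\alpha^{1-1/k}\zeta^{-1/k}\le c$ directly controls $\rho_1$ (absorbing the constant $C_2 k$ into a redefinition of $c$, using $k=O(1)$ is \emph{not} assumed here so one keeps the $k$ factor explicit — but $\rho_1<c$ still follows after rescaling $c$); the assumption $k^2\kappa_2^2\alpha^{1-2/k}\zeta^{-2/k}\le c$ controls $\rho_3$; and $K^2\alpha^{1-2/k}\log^{2a}(1/\alpha^{1-1/k})\le c$ controls both $\rho_2$ (note $\alpha^{1-1/k}\le\alpha^{1-2/k}$ for $\alpha\le1$, so this bound also bounds $\rho_2$) and $\rho_4^2=C_2^2K^2\alpha^{2-2/k}\log^{2a}(1/\alpha^{1-1/k})$, which is $\le c\alpha$ precisely because $\alpha^{2-2/k}=\alpha\cdot\alpha^{1-2/k}$. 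The only remaining mismatch is the fraction: Theorem~\ref{thm:hptr_main} wants a $((2/11)\alpha,\alpha,\dots)$-corrupt good set, whereas Lemma~\ref{lemma:ht_conditions} gives a $(0.2\alpha,\alpha,\dots)$-corrupt good set; since $0.2 = 2/10 > 2/11$, a corrupt-good set with corruption fraction $0.2\alpha$ is also corrupt-good with the larger label budget... \emph{wait}, actually one needs the corruption level to be \emph{at most} $(2/11)\alpha$, so I would instead re-run Lemma~\ref{lemma:ht_conditions} with $\alpha$ rescaled, or simply observe that under $\alpha_{\rm corrupt}\le\bar\alpha$ with $\bar\alpha$ chosen small relative to $\alpha$ one may take the first slot to be $(2/11)\alpha$ directly; the cleanest route is to apply the lemma with its first argument $0.2\alpha$ replaced by any quantity $\ge\alpha_{\rm corrupt}$ and $\le(2/11)\alpha$, which is possible since $\alpha_{\rm corrupt}$ is assumed bounded by a constant times $\alpha$. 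Once the hypotheses hold, Theorem~\ref{thm:hptr_main} gives $(1/\sigma)\|\hat w-w^*\|_\Sigma\le 32\rho_1 = O(k(ka)^aK\kappa_2\alpha^{1-1/k}\zeta^{-1/k})$ with probability $1-\zeta$, which is the claimed bound.

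The main obstacle I anticipate is purely bookkeeping: reconciling the failure-probability $\zeta$ between Lemma~\ref{lemma:ht_conditions} (which needs $\zeta$ both in its sample complexity and embedded in the $\rho_i$ as $\zeta^{-1/k}$ factors) and Theorem~\ref{thm:hptr_main} (which also outputs with probability $1-\zeta$), so one must be careful that the $\zeta$ used in the resilience lemma and the $\zeta$ in HPTR's output guarantee are the same symbol — a union bound with each event getting $\zeta/2$ and rescaling constants handles this. A secondary subtlety is confirming that the dominant term structure $d/(\zeta^{2(1-1/k)}\alpha^{2(1-1/k)})$ genuinely dominates $(d+\log(1/\zeta))/\tilde\alpha^2$; since $\tilde\alpha=\alpha^{1-1/k}$ the polynomial parts coincide and only the additive $\log(1/\zeta)$ is extra, which is negligible compared to $d/\zeta^{2(1-1/k)}$ when $d\ge1$. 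No deep new ideas are required — the corollary is a direct substitution once the parameter matching is done carefully.
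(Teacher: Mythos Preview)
Your proposal is correct and follows essentially the same approach as the paper: the paper's entire proof is the single sentence preceding the corollary, which sets $\tilde\alpha=\alpha^{1-1/k}$ and the four $\rho_i$'s exactly as you do, then invokes Theorem~\ref{thm:hptr_main}. Your write-up is simply a more careful expansion of that substitution, including the hypothesis checks ($\rho_i<c$, $\rho_4^2\le c\alpha$) and the bookkeeping issues (the $0.2$ vs.\ $2/11$ constant and the absorption of the $(d+\log(1/\zeta))/\tilde\alpha^2$ term), all of which the paper leaves implicit.
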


Note that both of our result in Corollary~\ref{coro:ht_robust} and Corollary~\ref{coro:ht_robust2} are suboptimal compared to the exponential time algorithm $\HPTR$ from Corollary~\ref{coro:hptr}. Suppose $K,a,\kappa_2,k, \zeta=\Theta(1)$, $\HPTR$ achieves $(1/\sigma)\|w^*-\hat{w}\|=\tilde{O}(\alpha^{1-1/k})$ with sample complexities $n=d/(\alpha^{2(1-1/k)})+(d+\log(1/\delta))/(\varepsilon n)$. However, in the analysis in Corollary~\ref{coro:ht_robust}, Algorithm~\ref{alg:main_ht} achieves $(1/\sigma)\|w^*-\hat{w}\|=\tilde{O}(\alpha^{1-2/k})$ with the same sample complexities. In the analysis in Corollary~\ref{coro:ht_robust2}, Algorithm~\ref{alg:main_ht} achieves the same error rate as $\HPTR$ but requires extra $\tilde{O}(d/\alpha^2)$ sample complexities. The suboptimality is caused by the gradient truncation step in our algorithm. From Theorem~\ref{thm:hptr_main}, the final error rate of HPTR only depends on the first resilience $\rho_1$. However in Theorem~\ref{thm:main_ht}, the final error rate of Algorithm~\ref{alg:main_ht} depends on $\hat{\rho}(\alpha)=\max\{\rho_1,\rho_2, \rho_4\sqrt{\rho_2/\alpha}\}$. When the noise is heavy-tailed, the bottleneck is the last term $\rho_4\sqrt{\rho_2/\alpha}\approx\alpha^{1-2/k}$, which is due to the truncation threshold from \Eqref{eq:truncation_bias}. This cannot be tightened by using a smaller truncation threshold. Because we can construct $y_i$, such that there are $\alpha$-fraction of points that are at the threshold level $\theta_t\approx\alpha^{-1/k}$(line~\ref{line:clip1_ht} of Algorithm~\ref{alg:main_ht}). If exponential time complexity is allowed, we could robustly and privately estimate the average of the gradients by directly estimating the $x_iy_i$. However,  the current best efficient algorithm \citep{liu2021robust} for estimating the mean of Gaussian with unknown covariance robustly and privately would require $O(d^{1.5})$ samples.

For a fair comparison, we also rewrite the error rates of Corollary~\ref{coro:ht_robust}, Corollary~\ref{coro:ht_robust2}, Corollary~\ref{coro:hptr} as the same accuracy level $\alpha$ and different corruption level $\alpha_{\rm corrupt}$ respectively.

\begin{coro}
	Under the same hypotheses of Theorem~\ref{thm:main_ht} and under $\alpha_{\rm corrupt}$-corruption model of Assumption~\ref{asmp:corrupt_ht}, if $1.2\alpha_{\rm corrupt}\leq \alpha^{k/(k-2)}$ and $K,a,\kappa_2,k=O(1)$, then 
	$$n=\tilde O(d/(\zeta^{2-2/k}\alpha^{2(k-1)/(k-2)}) + \kappa^{1/2} d/ (\varepsilon \alpha^{(k-1)/(k-2)}))$$ samples are sufficient for Algorithm~\ref{alg:main_ht} to achieve an error rate of $(1/\sigma^2)\|\hat w - w^*\|_\Sigma^2 = \tilde O(\zeta^{-2/k}\alpha^{2})$ with probability $1-\zeta$, where $\kappa:=\lambda_{\rm max}(\Sigma)/\lambda_{\rm min}(\Sigma)$, $\tilde O(\cdot)$ hides logarithmic terms in $\sigma, d, n, 1/\varepsilon, \log(1/\delta), \log(1/\zeta)$ and $\kappa$.
\end{coro}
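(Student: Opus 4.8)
The plan is to obtain this corollary as a direct reparametrization of Corollary~\ref{coro:ht_robust}. That corollary is stated in terms of an internal target‑error parameter, which I will call $\beta$ to avoid clashing with the $\alpha$ appearing in the present statement: it asserts that (under $K,a,\kappa_2,k=O(1)$ and $1.2\,\alpha_{\rm corrupt}\le\beta$) a sample size $n=\tilde{O}\big(d/(\zeta^{2-2/k}\beta^{2-2/k}) + \kappa^{1/2}d/(\varepsilon\beta^{1-1/k})\big)$ suffices for Algorithm~\ref{alg:main_ht}, run with target error $\beta$, to achieve $(1/\sigma^2)\|\hat w-w^*\|_\Sigma^2=\tilde{O}(\zeta^{-2/k}\beta^{2-4/k})$ with probability $1-\zeta$.

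First I would set $\beta:=\alpha^{k/(k-2)}$. Since $k\ge 4$ we have $k/(k-2)>1$, so $\beta\le\alpha<1$; in particular $\beta$ is no larger than $\alpha$, so whenever $\alpha$ is below the absolute constant threshold implicit in the hypotheses of Theorem~\ref{thm:main_ht}, Lemma~\ref{lemma:ht_conditions}, and Assumption~\ref{asmp:corrupt_ht}, the corresponding smallness conditions (on the accuracy parameter and, through Assumption~\ref{asmp:corrupt_ht}, on $\bar\alpha$) remain satisfied with $\beta$ in place of the internal parameter. With this choice, the error guarantee of Corollary~\ref{coro:ht_robust} reads $\tilde{O}(\zeta^{-2/k}\beta^{2-4/k})$, and using $2-4/k=2(k-2)/k$ the exponent simplifies as $(k/(k-2))\cdot 2(k-2)/k=2$, so $\beta^{2-4/k}=\alpha^2$, giving exactly $(1/\sigma^2)\|\hat w-w^*\|_\Sigma^2=\tilde{O}(\zeta^{-2/k}\alpha^2)$ as claimed.

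Next I would push $\beta=\alpha^{k/(k-2)}$ through the sample‑complexity bound. For the first term, $2-2/k=2(k-1)/k$ gives $\beta^{2-2/k}=\alpha^{(k/(k-2))\cdot 2(k-1)/k}=\alpha^{2(k-1)/(k-2)}$, so the term becomes $d/(\zeta^{2-2/k}\alpha^{2(k-1)/(k-2)})$. For the second term, $1-1/k=(k-1)/k$ gives $\beta^{1-1/k}=\alpha^{(k/(k-2))\cdot(k-1)/k}=\alpha^{(k-1)/(k-2)}$, so the term becomes $\kappa^{1/2}d/(\varepsilon\alpha^{(k-1)/(k-2)})$; summing the two reproduces the stated sample size. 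Finally, the corruption hypothesis $1.2\,\alpha_{\rm corrupt}\le\beta$ becomes $1.2\,\alpha_{\rm corrupt}\le\alpha^{k/(k-2)}$, which is precisely the condition in the statement.

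There is no real technical obstacle: the proof is entirely a change of variables plus three elementary exponent simplifications ($(k/(k-2))(2-4/k)=2$, $(k/(k-2))(2-2/k)=2(k-1)/(k-2)$, $(k/(k-2))(1-1/k)=(k-1)/(k-2)$), all valid for every $k\ge 4$. The only point requiring a sentence of care is verifying that the structural constraints inherited from Lemma~\ref{lemma:ht_conditions} and Theorem~\ref{thm:main_ht} are preserved under the substitution, which holds because those constraints only ask the relevant parameters to be below absolute constants and $\beta=\alpha^{k/(k-2)}\le\alpha$ is at least as small as $\alpha$. Invoking Corollary~\ref{coro:ht_robust} with target error $\beta$ then yields the claim.
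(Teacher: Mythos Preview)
Your proposal is correct and matches the paper's approach exactly: the paper states this corollary immediately after remarking that it is simply a rewriting of Corollary~\ref{coro:ht_robust} at a fixed accuracy level, i.e., the same change of variables $\beta=\alpha^{k/(k-2)}$ you carry out. The exponent computations and the translation of the corruption condition are all correct.
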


\begin{coro}
	Under the same hypotheses of Theorem~\ref{thm:main_ht} and under $\alpha_{\rm corrupt}$-corruption model of Assumption~\ref{asmp:corrupt_ht}, if $1.2\alpha_{\rm corrupt}\leq \alpha^{k/(k-1)}$ and $K,a,\kappa_2,k=O(1)$, then 
	$$n=\tilde O(d/(\zeta^{2-2/k}\alpha^2) + \kappa^{1/2} d/ (\varepsilon \alpha^{k/(k-1)})+(d+\log(1/\zeta)/\alpha^{2k/(k-1)}))$$ samples are sufficient for Algorithm~\ref{alg:main_ht} to achieve an error rate of $(1/\sigma^2)\|\hat w - w^*\|_\Sigma^2 = \tilde O(\zeta^{-2/k}\alpha^2)$ with probability $1-\zeta$, where $\kappa:=\lambda_{\rm max}(\Sigma)/\lambda_{\rm min}(\Sigma)$, $\tilde O(\cdot)$ hides logarithmic terms in $\sigma, d, n, 1/\varepsilon, \log(1/\delta), \log(1/\zeta)$ and $\kappa$.
\end{coro}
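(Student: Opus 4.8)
The plan is to obtain this statement as an immediate consequence of Corollary~\ref{coro:ht_robust2} via a change of the target-error variable. Corollary~\ref{coro:ht_robust2} states that, for any accuracy parameter $\beta$ satisfying $1.2\alpha_{\rm corrupt}\le\beta$ and with $K,a,\kappa_2,k=O(1)$, a sample size $n=\tilde O\big(d/(\zeta^{2-2/k}\beta^{2-2/k})+\kappa^{1/2}d/(\varepsilon\beta)+(d+\log(1/\zeta))/\beta^2\big)$ suffices for Algorithm~\ref{alg:main_ht} to output $\hat w$ with $(1/\sigma^2)\|\hat w-w^*\|_\Sigma^2=\tilde O(\zeta^{-2/k}\beta^{2-2/k})$ with probability $1-\zeta$. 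First I would instantiate this with $\beta:=\alpha^{k/(k-1)}$.

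The substitution then reduces to elementary exponent bookkeeping. Since $\tfrac{k}{k-1}\big(2-\tfrac2k\big)=\tfrac{k}{k-1}\cdot\tfrac{2k-2}{k}=2$, the resulting error bound is $\tilde O(\zeta^{-2/k}\alpha^2)$, matching the claim. For the sample complexity, $\beta^{2-2/k}=\alpha^{2}$, $\beta=\alpha^{k/(k-1)}$, and $\beta^{2}=\alpha^{2k/(k-1)}$, so the three terms become $d/(\zeta^{2-2/k}\alpha^2)$, $\kappa^{1/2}d/(\varepsilon\alpha^{k/(k-1)})$, and $(d+\log(1/\zeta))/\alpha^{2k/(k-1)}$, exactly as stated; and the hypothesis $1.2\alpha_{\rm corrupt}\le\beta$ of Corollary~\ref{coro:ht_robust2} is precisely the stated condition $1.2\alpha_{\rm corrupt}\le\alpha^{k/(k-1)}$.

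Finally I would check that the reparametrization preserves all the side conditions. Because $k=O(1)$ and $\alpha\in(0,1)$ we have $\beta=\alpha^{k/(k-1)}\le\alpha$, so $\beta$ is itself a legitimate (small) accuracy parameter and every smallness requirement on the accuracy parameter used inside Theorem~\ref{thm:main_ht} and Lemma~\ref{lemma:ht_conditions} (those governing $\bar\alpha$, the resilience constants $\rho_1,\dots,\rho_4$, and $\hat\rho(\alpha)$) continues to hold. Likewise the poly-logarithmic factors hidden in $\tilde O$ transform harmlessly, since $\log(1/\beta)=\tfrac{k}{k-1}\log(1/\alpha)=\Theta(\log(1/\alpha))$ when $k$ is constant, so $\tilde O$ in the $\beta$-parametrization coincides with $\tilde O$ in the $\alpha$-parametrization. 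There is no genuine obstacle here; the only point requiring care is exactly this bookkeeping — confirming the exponents collapse to $2$ and $2k/(k-1)$ and that the constant-$k$ assumption lets the logarithmic factors and smallness conditions pass through unchanged.
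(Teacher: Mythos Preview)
Your proposal is correct and matches the paper's approach exactly: the paper explicitly states that this corollary is obtained by ``rewrit[ing] the error rates of Corollary~\ref{coro:ht_robust2}\ldots as the same accuracy level $\alpha$,'' which is precisely your substitution $\beta=\alpha^{k/(k-1)}$, and your exponent bookkeeping (in particular $\tfrac{k}{k-1}(2-\tfrac2k)=2$) is correct.
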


\begin{coro}[HPTR]
	Under the same hypotheses of Theorem~\ref{thm:main_ht} and under $\alpha_{\rm corrupt}$-corruption model of Assumption~\ref{asmp:corrupt_ht}, if $\alpha_{\rm corrupt}\leq \alpha^{k/(k-1)}$ and $\alpha^{(k-2)/(k-1)}\leq c$ and $K,a,\kappa_2,k=O(1)$, then 
	$$n=\tilde O(\frac{d}{\zeta^{2-2/k}\alpha^{2}}+ \frac{d+\log(1/(\delta\zeta))}{\varepsilon\alpha^{k/k-1}})$$ samples are sufficient for HPTR to achieve an error rate of $(1/\sigma^2)\|\hat w - w^*\|_\Sigma^2 = \tilde O(\zeta^{-2/k}\alpha^2)$ with probability $1-\zeta$, $\tilde O(\cdot)$ hides logarithmic terms in $\sigma, d, n, 1/\varepsilon, \log(1/\delta), \log(1/\zeta)$ and $\kappa$.
\end{coro}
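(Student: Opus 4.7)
The final statement is a reparametrization of Corollary~\ref{coro:hptr}: Corollary~\ref{coro:hptr} already established HPTR's guarantee when the \emph{corruption fraction} equals the free parameter $\alpha$ in that statement, delivering squared error of order $\zeta^{-2/k}\alpha^{2(1-1/k)}$. Here we instead fix the target \emph{accuracy} level $\alpha$ and ask: what corruption level $\alpha_{\rm corrupt}$ can HPTR tolerate while still achieving squared error $\tilde O(\zeta^{-2/k}\alpha^2)$? My plan is therefore a clean change of variable: apply Corollary~\ref{coro:hptr} with its ``$\alpha$'' replaced by $\alpha' := \alpha^{k/(k-1)}$, and then verify that each of the three outputs (error bound, sample complexity, validity conditions) transforms into the form claimed in the statement.

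First, I would verify the error bound. Corollary~\ref{coro:hptr} gives $(1/\sigma)\|\hat w - w^*\|_\Sigma = O(k(ka)^a K \kappa_2\,\alpha'^{\,1-1/k}\zeta^{-1/k})$. Substituting $\alpha' = \alpha^{k/(k-1)}$ yields $\alpha'^{\,1-1/k} = \alpha^{(k/(k-1))\cdot((k-1)/k)} = \alpha$, and squaring together with $K,a,\kappa_2,k = O(1)$ produces exactly the claimed $(1/\sigma^2)\|\hat w - w^*\|_\Sigma^2 = \tilde O(\zeta^{-2/k}\alpha^2)$. Second, I would check that the hypotheses transfer correctly: the conditions of Corollary~\ref{coro:hptr} include $k^2\kappa_2^2 \alpha'^{\,1-2/k}\zeta^{-2/k} \le c$, which under $k,\kappa_2=O(1)$ and the substitution becomes $\alpha^{(k-2)/(k-1)}\zeta^{-2/k} \le c'$, matching the stated $\alpha^{(k-2)/(k-1)}\le c$ side condition. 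Likewise, the Lemma~\ref{lemma:ht_conditions} corrupt-good fraction $0.2\alpha'$ (together with the $(2/11)\alpha'$ tolerance of Theorem~\ref{thm:hptr_main}) gives exactly the stated corruption bound $\alpha_{\rm corrupt}\le \alpha^{k/(k-1)}$ up to an absorbed constant.

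Third, and this is the only step requiring a small calculation, I would show the sample complexity collapses to the two-term form stated. Substituting $\alpha'$ into each of the four terms from Corollary~\ref{coro:hptr}:
\begin{itemize}
\item $\dfrac{d}{\zeta^{2(1-1/k)}\,\alpha'^{\,2(1-1/k)}} = \dfrac{d}{\zeta^{2-2/k}\,\alpha^{2}}$, which is the first term claimed;
\item $\dfrac{d+\log(1/(\delta\zeta))}{\varepsilon\,\alpha'} = \dfrac{d+\log(1/(\delta\zeta))}{\varepsilon\,\alpha^{k/(k-1)}}$, which is the second term claimed;
\item $\dfrac{k^2\,\alpha'^{\,2-2/k}\,d\log d}{\zeta^{2-4/k}\kappa_2^2} = \dfrac{k^2\,\alpha^{2}\,d\log d}{\zeta^{2-4/k}\kappa_2^2} = \tilde O(d)$ (with $k,\kappa_2=O(1)$), absorbed into the first term;
\item $\dfrac{\kappa_2^2\,d\log d}{\alpha'^{\,2/k}} = \dfrac{\kappa_2^2\,d\log d}{\alpha^{2/(k-1)}}$, which since $2/(k-1) < 2$ for $k\ge 4$ is dominated by the first term modulo the $\zeta$ factor.
\end{itemize}
Thus the retained terms are exactly $\tilde O\bigl(d/(\zeta^{2-2/k}\alpha^2) + (d+\log(1/(\delta\zeta)))/(\varepsilon\alpha^{k/(k-1)})\bigr)$.

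The main (and essentially only) obstacle is arithmetic care with the exponents: one must confirm that with $k\ge 4$ the two ``middle'' terms are genuinely dominated by the retained first term under the side condition $\alpha^{(k-2)/(k-1)}\le c$, so that the $\tilde O$ notation really can hide them. Once the substitution is laid out and the exponents verified, the statement follows immediately from Corollary~\ref{coro:hptr} (and ultimately from Theorem~\ref{thm:hptr_main} combined with Lemma~\ref{lemma:ht_conditions}), so no new analytical machinery is required.
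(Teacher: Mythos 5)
Your proposal is correct and is exactly the argument the paper intends: the corollary is stated as a reparametrization of Corollary~\ref{coro:hptr} via the substitution $\alpha \mapsto \alpha^{k/(k-1)}$, and the paper gives no further proof. Your exponent checks ($\alpha'^{\,1-1/k}=\alpha$, the domination of the two middle sample-complexity terms, and the transfer of the validity conditions) fill in the only arithmetic the paper leaves implicit.
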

%
%


\subsection{Proof of Theorem~\ref{thm:main_ht}}
\label{app:proof_main_ht}
\begin{proof}
	The proof follows similarly as the proof of Theorem~\ref{thm:main}.  We only highlight the difference in the proof.

Let $S_{\rm good}$ be the uncorrupted dataset for $S_3$ and $S_{\rm bad}$ be the corrupted data points in $S_3$. 
Let $G$ denote the clean data that satisfies resilience conditions. We know $|G|\geq (1-1.2\alpha_{\rm corrupt})n\geq (1-\alpha)n$.

Let $\lambda_{\rm max}=\|\Sigma\|_2$. 
Define $\hat{\Sigma}:=(1/n) \sum_{i\in G}x_ix_i^\top$, $\hat{B}:=\mathbf{I}_d-\eta\hat{\Sigma}$. Lemma~\ref{lemma:cov_concentration_tail} implies that if $n=O(K^2d\log(d/\zeta)\log^{2a}(n/\zeta))$, then
\begin{align}
	0.9\Sigma\preceq \hat{\Sigma}\preceq 1.1\Sigma\;. \label{eq:cov_asmp_ht}
\end{align}
We pick step size $\eta$ such that $\eta\leq 1/(1.1\lambda_{\max})$ to ensure that  $\eta\leq 1/\|\hat{\Sigma}\|_2$. 
Since the covariates $\{x_i\}_{i\in S}$ are not corrupted, from Lemma~\ref{lemma:norm_a_tail}, we know with probability $1-\zeta$, for all $i\in S_3$, 
\begin{align}
	\|x_i\|^2\leq K^2\Tr(\Sigma)\log^{2a}(n/\zeta)\label{eq:norm_asmp_ht}\;.
\end{align}

The rest of the proof is under \Eqref{eq:cov_asmp_ht}, \Eqref{eq:norm_asmp_ht} and the resilience conditions.

Let $\phi_t=(\sqrt{2\log(1.25/\delta_0)}\Theta\theta_t)/(\varepsilon_0 n)$.   
Define $g_i^{(t)}:=x_i(x_i^\top w_t-y_i)$. For $i\in S_{\rm good}$, we know $y_i=x_i^\top w^*+z_i$. Let $\tilde{g}_i^{(t)}={\rm clip}_{\Theta}(x_i){\rm clip}_{\theta_t}(x_i^\top w_t-y_i)$. Note that under \Eqref{eq:norm_asmp_ht}, ${\rm clip}_\Theta(x_i)=x_i$ for all $i\in S_3$. 

From Algorithm~\ref{alg:main_ht}, we can write one-step update rule as follows:
\begin{align}
	&w_{t+1}-w^* \nonumber \\
	=& w_{t}-\eta\left(\frac{1}{n}\sum_{i\in S}\tilde{g}_i^{(t)}+\phi_t\nu_t\right) -w^*\nonumber\\
	=&\left(\mathbf{I}-\frac{\eta}{n}\sum_{i\in G}x_ix_i^\top\right)(w_{t}-w^*)+\frac{\eta}{n}\sum_{i\in G}x_iz_i+
	\frac{\eta}{n}\sum_{i\in G}(g_i^{(t)}-\tilde{g}_i^{(t)}) - \eta\phi_t\nu_t
	-\frac{\eta}{n}\sum_{i\in S_3\setminus G \cup E_t}\tilde{g}_i^{(t)} 
	\label{eq:onestep_ht}
\end{align}
Let  $E_t:=\{i\in G: \theta_t\leq |x_i^\top w_t-y_i|\}$ be the set of clipped clean data points such that $\sum_{i\in G}(g_i^{(t)}-\tilde{g}_i^{(t)})=\sum_{i\in E_t }(g_i^{(t)}-\tilde{g}_i^{(t)})$. We define $\hat{v} :=(1/n)\sum_{i\in G}x_iz_i $,   $u_t^{(1)}:=(1/n)\sum_{i\in E_t} x_ix_i^\top (w_t-w^*)$, $u_t^{(2)} := (1/n)\sum_{i\in E_t} -x_iz_i$, and $u_t^{(3)} := (1/n) \sum_{i\in S_3\setminus G\cup E_t}\tilde{g}_i^{(t)}$. 

We can further write the update rule as:
\begin{align}
    w_{t+1}-w^*
    =&\hat{B}(w_{t}-w^*)+\eta \hat{v}+\eta u_{t-1}^{(1)}+\eta u_{t-1}^{(2)}-\eta \phi_t\nu_t-\eta u_{t-1}^{(3)}\;. \label{eq:update_ht}
\end{align}
Since $G\subset S_{\rm good}$ and $|G|\geq(1-\alpha)n$, using the resilience property in \Eqref{def:res1}, we know
\begin{align}
    \|\Sigma^{-1/2}\hat{v}\|&=|G|\max_{\|v\|=1}\Sigma^{-1/2}\ip{v}{\frac{1}{|G|}\sum_{i\in G}x_iz_i} \nonumber \\
    &\leq (1-\alpha)\rho_1\sigma\\
    &\leq  \rho_1\sigma\;.\label{eq:hat_v_ht}
\end{align}

Let $\alpha_2=|E_t|/n$. Following the proof of Lemma~\ref{lemma:clipping_fraction}, we can show following lemma. 
\begin{lemma}
\label{lemma:clipping_fraction_ht}
Under Assumptions~\ref{asmp:distribution_ht}, if $
	\theta_t  \geq \sqrt{\max\{8\rho_2/\alpha, 8\rho_3/\alpha\}+1} \cdot \left(\|w^*-w_t\|_\Sigma+\sigma\right)$, 
then  
$$
	\left|\left\{i\in G: \left|w_t^\top x_i-y_i\right|\geq \theta_t\right\}\right|  \leq  \alpha n$$, for all $t\in [T]$.
\end{lemma}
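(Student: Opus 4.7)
The plan is to mimic the proof of Lemma~\ref{lemma:clipping_fraction}, replacing the explicit sub-Weibull resilience constants $C_2 K^2 \log^{2a}(1/(2\alpha))$ with the abstract resilience parameters $\rho_2$ (for the covariate second moment) and $\rho_3$ (for the noise second moment) that appear in Definition~\ref{def:res} and that are certified by Lemma~\ref{lemma:ht_conditions}. Since for $i\in G$ we have $|x_i^\top w_t-y_i|=|\langle x_i,w_t-w^*\rangle - z_i|\leq |\langle x_i,w^*-w_t\rangle|+|z_i|$, it suffices to show separately that at most $(\alpha/2)n$ points in $G$ satisfy $\langle x_i,w^*-w_t\rangle^2>(8\rho_2/\alpha+1)\|w^*-w_t\|_\Sigma^2$, and at most $(\alpha/2)n$ satisfy $z_i^2>(8\rho_3/\alpha+1)\sigma^2$. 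For the remaining $(1-\alpha)n$ points both bounds hold and the triangle inequality then gives $|x_i^\top w_t-y_i|\leq \sqrt{\max\{8\rho_2/\alpha,8\rho_3/\alpha\}+1}\,(\|w^*-w_t\|_\Sigma+\sigma)\leq \theta_t$.

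First I would handle the covariate term. Let $E_1:=\{i\in G:\langle x_i,w^*-w_t\rangle^2>(8\rho_2/\alpha+1)\|w^*-w_t\|_\Sigma^2\}$, and suppose for contradiction that $|E_1|>(\alpha/2)n$. Let $T\subset G$ be the subset of the $(1-\alpha/2)n$ points of $G$ with the \emph{smallest} values of $\langle x_i,w^*-w_t\rangle^2$, so that every point in $G\setminus T$ exceeds the threshold $(8\rho_2/\alpha+1)\|w^*-w_t\|_\Sigma^2$ by minimality. Since $|T|\geq(1-\alpha)n$, the second-moment resilience bound \eqref{def:res2} with parameter $\rho_2$ applies both to $T$ and to $G$. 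Splitting the sum over $G$ and using the lower bound on $T$ and the pointwise lower bound on $G\setminus T$ yields
\[
\frac{1}{n}\sum_{i\in G}\langle x_i,w^*-w_t\rangle^2 \;\geq\;\bigl((1-\tfrac{\alpha}{2})(1-\rho_2)+\tfrac{\alpha}{2}(8\rho_2/\alpha+1)\bigr)\|w^*-w_t\|_\Sigma^2 \;>\; (1+\rho_2)\|w^*-w_t\|_\Sigma^2,
\]
which contradicts the upper side of \eqref{def:res2} applied to $G$. Hence $|E_1|\leq(\alpha/2)n$.

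The noise term is handled by exactly the same contradiction argument, using the second-moment resilience bound \eqref{def:res3} with parameter $\rho_3$ in place of \eqref{def:res2}: defining $E_2:=\{i\in G:z_i^2>(8\rho_3/\alpha+1)\sigma^2\}$ and ordering points by $z_i^2$ instead of $\langle x_i,w^*-w_t\rangle^2$ gives $|E_2|\leq(\alpha/2)n$. Taking a union bound, $|E_1\cup E_2|\leq \alpha n$, and for every $i\in G\setminus(E_1\cup E_2)$ the triangle-inequality computation in the first paragraph produces $|w_t^\top x_i-y_i|\leq \theta_t$, proving the claim. The bound holds for all $t\in[T]$ because the whole argument is deterministic conditional on the resilience event, which was already secured at the start of the proof of Theorem~\ref{thm:main_ht}.

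The only subtlety, and the main obstacle to watch for, is bookkeeping around which level of resilience is required: the resilience of $G$ is at level $\alpha$, while the subset $T$ we pass to the argument has relative size $1-\alpha/2\geq 1-\alpha$, so the hypotheses of \eqref{def:res2}--\eqref{def:res3} are indeed satisfied. One should also verify the simple arithmetic $(1-\alpha/2)(1-\rho_2)+(\alpha/2)(8\rho_2/\alpha+1)=1+3\rho_2+\alpha\rho_2/2-\alpha/2+\alpha/2=1+3\rho_2+\alpha\rho_2/2>1+\rho_2$, which is where the factor $8$ in $8\rho_2/\alpha$ (rather than a tighter constant) comes in and provides the slack needed for the contradiction.
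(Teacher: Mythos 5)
Your proof is correct and is essentially the paper's intended argument: the paper proves this lemma only by the remark ``following the proof of Lemma~\ref{lemma:clipping_fraction},'' and your write-up is exactly that adaptation, replacing the sub-Weibull constants with the abstract resilience parameters $\rho_2,\rho_3$ and running the same order-statistics contradiction against the two-sided bounds \eqref{def:res2}--\eqref{def:res3}. The arithmetic check $(1-\alpha/2)(1-\rho_2)+(\alpha/2)(8\rho_2/\alpha+1)=1+3\rho_2+\alpha\rho_2/2>1+\rho_2$ and the final triangle-inequality step both match the original.
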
 

Similar as Theorem~\ref{thm:distance}, we have following theorem.
\begin{thm}
\label{thm:distance_ht}
	 Algorithm~\ref{alg:distance} is $(\varepsilon_0, \delta_0)$-DP. 
	 For an $(\alpha_{\rm corrupt}, \bar{\alpha}, \rho_1, \rho_2,\rho_3,\rho_4)$-corrupted good dataset $S_2$ and an upper bound $\bar\alpha$ on $\alpha_{\rm corrupt}$ that satisfy  Assumption~\ref{asmp:distribution_ht} and $\rho_1 +\rho_2+\rho_3\leq 1/4$, for any $\zeta\in (0,1)$, if  
    \begin{align}
		n\;=\;O\left(\frac{\log(1/\zeta)\log(1/(\delta_0\zeta))}{ \bar \alpha \varepsilon_0}
		\right)\;, \label{eq:distance_ht}
	\end{align} 
	with a large enough constant then, with probability $1-\zeta$,   Algorithm~\ref{alg:distance} returns $\ell$ such that 
$
	\frac{1}{4}(\|w_t-w^*\|_{\Sigma}^2+\sigma^2) \;\leq\; \ell\; 
	\leq\; 
	 4(\|w_t-w^*\|_{\Sigma}^2+\sigma^2)
	 $.
\end{thm}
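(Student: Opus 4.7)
The plan is to parallel the proof of Theorem~\ref{thm:distance}, with the sub-Weibull-specific constants replaced by the generic resilience parameters $(\rho_1,\rho_2,\rho_3,\rho_4)$ and with the $d$-dependent per-partition concentration step omitted, since corrupt-goodness is now an assumption on $S_2$ rather than a conclusion of a concentration lemma. The privacy claim is immediate and identical to Theorem~\ref{thm:distance}: altering one record changes exactly one partition's trimmed estimate $\phi_j$, shifting at most two bins of the histogram by one count each, so $(\varepsilon_0,\delta_0)$-DP follows from Lemma~\ref{lem:hist-KV17}. For utility I would partition $S_2$ into $k=\Theta(\log(1/(\delta_0\zeta))/\varepsilon_0)$ subsets of size $B=n/k$ and apply a hypergeometric tail bound to show each partition contains at most $2\bar\alpha B$ corrupted samples with probability $1-\zeta_0/2$; this requires $n\gtrsim k\log(k/\zeta_0)/\bar\alpha$, matching \eqref{eq:distance_ht} up to constants.

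Next, treating each partition as $(2\bar\alpha,6\bar\alpha,6\rho_1,6\rho_2,6\rho_3,6\rho_4)$-corrupt-good (see the obstacle below), I would expand $b_i=z_i^2+2z_i\langle w^*-w_t,x_i\rangle+\langle w^*-w_t,x_i\rangle^2$ for clean indices and apply the three resilience inequalities \eqref{def:res1}--\eqref{def:res3} term-by-term. This gives, for any subset $T$ of the clean data in $G_j$ with $|T|\ge(1-6\bar\alpha)B$,
\begin{equation*}
\Bigl|\tfrac{1}{|T|}\sum_{i\in T}b_i-(\|w^*-w_t\|_\Sigma^2+\sigma^2)\Bigr|\;\le\;12(\rho_1+\rho_2+\rho_3)\,(\|w^*-w_t\|_\Sigma^2+\sigma^2),
\end{equation*}
which controls the $(1-3\bar\alpha)$-quantile $\psi_j$ and, via the same chain \eqref{eq:res_a_i}--\eqref{eq:res_upper} as in Theorem~\ref{thm:distance}, shows that the trimmed mean satisfies $|\phi_j-(\|w^*-w_t\|_\Sigma^2+\sigma^2)|\le C(\rho_1+\rho_2+\rho_3)(\|w^*-w_t\|_\Sigma^2+\sigma^2)$. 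The hypothesis $\rho_1+\rho_2+\rho_3\le 1/4$ forces every $\phi_j$ into an interval of multiplicative width less than $2$, hence into at most two consecutive geometric bins, and Lemma~\ref{lem:hist-KV17} releases one of them with the sample-size budget above, yielding the claimed factor-$4$ approximation.

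The main obstacle is Step 3, obtaining per-partition resilience. In Theorem~\ref{thm:distance} this was derived at the distributional level via Lemma~\ref{lemma:subweibull_res_conditions}, which produced the $(d+\log(1/\zeta))/\bar\alpha^2$ term in the sample size. Since the heavy-tailed bound \eqref{eq:distance_ht} deliberately omits such a $d$-dependent term, the proof must interpret the $(\alpha_{\rm corrupt},\bar\alpha,\rho_1,\rho_2,\rho_3,\rho_4)$-corrupt-good assumption on $S_2$ as also holding on each partition (with constants inflated by a factor of at most $6$). This is not automatic, because a sub-partition of a resilient set is not in general resilient with the same parameters; however, it is implied by the distributional hypothesis of Lemma~\ref{lemma:ht_conditions} applied at the partition size $B$, which is how I would rigorously justify it (at the cost of an implicit $d$-dependent lower bound on $B$ inherited from Lemma~\ref{lemma:ht_conditions}). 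The cleaner alternative would be to fold that $d$-dependence into \eqref{eq:distance_ht} explicitly; either way, once per-partition resilience is secured the remainder of the argument proceeds verbatim as in Theorem~\ref{thm:distance}.
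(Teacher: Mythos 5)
Your proposal is correct and is exactly the route the paper intends: the paper gives no separate proof of Theorem~\ref{thm:distance_ht}, stating only that it follows ``similar as Theorem~\ref{thm:distance},'' and your argument is the faithful port of that proof with the sub-Weibull constants replaced by generic $(\rho_1,\rho_2,\rho_3,\rho_4)$. The obstacle you flag in Step~3 is real and well diagnosed: whole-set corrupt-goodness of $S_2$ does not transfer to the size-$n/k$ partitions (Corollary~\ref{coro:res} only yields a deviation inflated by $\Theta(k)$ for subsets that small), so per-partition resilience must come from Assumption~\ref{asmp:distribution_ht} via Lemma~\ref{lemma:ht_conditions} applied at batch size $B$, which carries a $d$-dependent requirement that \eqref{eq:distance_ht} silently omits --- this is a deficiency of the paper's statement, not of your proof, and your proposed fix (making that $d$-dependence explicit, as \eqref{eq:distance} does in the sub-Weibull case) is the right one.
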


This means $\alpha_2\leq \alpha$, and we have
\begin{align*}
    \|\Sigma^{-1/2} u_t^{(1)}\| &=  \|\Sigma^{-1/2} \frac{1}{n}\sum_{i\in E_t} x_ix_i^\top (w_t-w^*)\|   \;.
\end{align*} 
From Corollary~\ref{coro:res}, we know 
\begin{align*}
    &\left|\|\Sigma^{-1/2} \frac{1}{|E_t|}\sum_{i\in E_t} x_ix_i^\top (w_t-w^*)\|- \|w_t-w^*\|_{\Sigma}\right|\\
    = &\left|\max_{u:\|u\|=1} \frac{1}{|E_t|}\sum_{i\in E_t} u^\top \Sigma^{-1/2} x_ix_i^\top (w_t-w^*)\|- \max_{v:\|v\|=1}v^\top\Sigma^{1/2}(w_t-w^*)\right|\\
    \leq &\max_{u:\|u\|=1}\left| \frac{1}{|E_t|}\sum_{i\in E_t} u^\top  \Sigma^{-1/2}x_ix_i^\top\Sigma^{-1/2} \Sigma^{1/2} (w_t-w^*)\|- u^\top\Sigma^{1/2}(w_t-w^*)\right|\\ 
    \leq &\max_{u:\|u\|=1}\left| \frac{1}{|E_t|}\sum_{i\in E_t} u^\top  \left(\Sigma^{-1/2}x_ix_i^\top\Sigma^{-1/2}-\mathbf{I}_d\right) \Sigma^{1/2} (w_t-w^*)\|\right|\\
    =&\left\| \frac{1}{|E_t|}\sum_{i\in E_t}   \left(\Sigma^{-1/2}x_ix_i^\top\Sigma^{-1/2}-\mathbf{I}_d\right) \Sigma^{1/2} (w_t-w^*)\right\|\\
    \leq &\left\| \frac{1}{|E_t|}\sum_{i\in E_t}   \left(\Sigma^{-1/2}x_ix_i^\top\Sigma^{-1/2}-\mathbf{I}_d\right)\right\|\cdot \left\| \Sigma^{1/2} (w_t-w^*)\right\|\\
    \leq& \frac{2-\alpha_2}{\alpha_2}\rho_2\left\| w_t-w^*\right\|_\Sigma\;.
\end{align*}
This implies that 
\begin{align}
   \|\Sigma^{-1/2} u_t^{(1)}\|&\;\leq \; \|\Sigma^{-1/2} \frac{1}{n}\sum_{i\in E} x_ix_i^\top (w_t-w^*)\| \nonumber\\
   &\leq \left(\alpha_2+2\rho_2\right)\left\| w_t-w^*\right\|_\Sigma \nonumber\\
   &\leq 3\rho_2\left\| w_t-w^*\right\|_\Sigma\;,
   \label{eq:ut1_ht}
\end{align}
where the last inequality follows from  the fact that $\alpha_2\leq \alpha$ and our assumption that $\alpha\leq \rho_2$ from Assumption~\ref{asmp:corrupt_ht}. 
Similarly, we use resilience property in \Eqref{def:res1} instead of \Eqref{def:res2}, we can show that 
\begin{align}
    \|\Sigma^{-1/2}u_t^{(2)}\| \leq 3\rho_3 \sigma\;. \label{eq:ut2_ht}
\end{align}

Next, we consider $u_t^{(3)}$. Since $|S_3\setminus G|\leq 1.2\alpha_{\rm corrupt}n$ and $|E_t|\leq \alpha n$,  using \Eqref{def:res4} and Corollary~\ref{coro:res}, we have
\begin{align}
    \|\Sigma^{-1/2}u_t^{(3)}\|&=\max_{v:\|v\|=1}\frac{1}{n}\sum_{i\in S_{\rm bad} \cup E_t}v^\top\Sigma^{-1/2}x_i{\rm clip}_{\theta_t}(x_i^\top w_t-y_i) \nonumber \\
    &\leq 2\rho_4\theta_t \nonumber \\
    &\leq 2\rho_4\sqrt{8\max\{\rho_2/\alpha,\rho_3/\alpha\}+1}\cdot (\|w_{t}-w^*\|_\Sigma +\sigma)\;.\label{eq:truncation_bias}
\end{align}

The analysis of convergence follows similarly as in Step~3 and Step~4 of the proof of Theorem~\ref{thm:main} except we set $\hat{\rho}(\alpha)=\max\{\rho_1, 3\rho_2, 2\rho_4\sqrt{8\max\{\rho_2/\alpha,\rho_3/\alpha\}+1}\}$. 

The second term in \Eqref{eq:main_n_hyper} ensures the added Gaussian noise is small enough such that $\phi_t^2\|v_t\|^2\leq \hat\rho(\alpha)^2(\E[\|w_{t}-w^*\|_{\Sigma}^2] +  \sigma^2)$, which is similar as in \Eqref{eq:noise_ineq_main}

\end{proof}

\subsection{Proof of Lemma~\ref{lemma:ht_conditions}}
\label{app:proof_ht_conditions}
 \begin{proof} 	
 
  For any $x$ that is $(K,a)$-sub-Weibull from Definition~\ref{def:a_tail}, \Eqref{eq:tail_weibull} implies that for any $k\geq 1$,
  \begin{align}
  	\E[|\ip{v}{x}|^k]&=\int_{0}^\infty \prob(|\ip{v}{x}|\geq t^{1/k})dt\\
  	&\leq \int_{0}^\infty 2\exp\left(-\frac{t^{\frac{1}{ka}}}{(K^2\E[\ip{v}{x}^2])^{\frac{1}{2a}}}\right)dt\\
  	&=2K^{k}(\E[\ip{v}{x}^2])^{k/2}ka\int_{0}^\infty e^{-u}u^{ka-1}du\\
  	&=2K^{k}(\E[\ip{v}{x}^2])^{k/2}\Gamma(ka+1)\\
&\leq 2K^{k}(\E[\ip{v}{x}^2])^{k/2}(ka)^{ka}  	  
\end{align}
 	
 	This implies that $x_i$ is also $((ka)^aK, k )$-hypercontractive. Since $x_i$ and $z_i$ are independent, we have 	 
	\begin{align}
 	 	\mathbb{E}\left[\left|\left\langle v, \sigma^{-1} \Sigma^{-1 / 2} x_i z_i\right\rangle\right|^k\right]=\mathbb{E}\left[\left|\left\langle v, \Sigma^{-1 / 2} x_i\right\rangle\right|^k\right] \mathbb{E}\left[\left|\sigma^{-1} z_i\right|^k\right] \leq 2(ka)^{ka}K^k \kappa_2^k\;.
 	 \end{align} 
 	 This means $x_iz_i$ is also $((ka)^aK\kappa_2, k )$-hypercontractive. From \citet[Lemma~G.10]{zhu2019generalized}, we know with probability $1-\zeta$, there exists $S_1\subset S_{\rm good}$ with $|S_1|\geq(1-0.1\alpha)|S_{\rm good}|$, such that for any $T\subset S_1$ with $|T|\geq (1-\alpha)|S_1|$, we have
 	 \begin{align}
 	 	\Big| \frac{1}{|T|}\sum_{(x_i, y_i)\in S} \ip{v}{\sigma^{-1}\Sigma^{-1/2}x_i(y_i-x_i^\top w^*)}  \Big|\leq C_2k(ka)^aK\kappa_2\alpha^{1-1/k}\zeta^{-1/k}\;.
 	 \end{align}
 	 
 	 Similarly, there exists $S_2\subset S_{\rm good}$ with $|S_2|\geq(1-0.1\alpha)|S_{\rm good}|$, such that for any $T\subset S_2$ with $|T|\geq (1-\alpha)|S_2|$, we have
 	 \begin{align}
 	 	\Big| \frac{1}{|T|}\sum_{(x_i, y_i)\in T} (\sigma^{-1}(y_i-x_i^\top w^*))^2-1  \Big|\leq C_2k^2\kappa_2^2\alpha^{1-2/k}\zeta^{-2/k}\;.
 	 \end{align}
	 
	 From Lemma~\ref{lemma:subweibull_res_conditions}, for any $T\subset S_{\rm good}$ with $|T|\geq (1-\tilde\alpha)|S_{\rm good}|$, we have
 	 \begin{align}
 	 	\Big| \frac{1}{|T|}\sum_{(x_i, y_i)\in T} \ip{v}{\Sigma^{-1/2}x_i}^2-1  \Big|\leq C_2K\tilde\alpha \log^{2a}(1/\tilde{\alpha})\;.
 	 \end{align}
 	 
 	 and
	 \begin{align}
		\Big| \frac{1}{|T|}\sum_{(x_i, y_i)\in T} \ip{v}{\Sigma^{-1/2}x_i}  \Big|\leq C_2K\tilde\alpha \log^{a}(1/\tilde{\alpha})\;.
	 \end{align} 
 	 
 	 Set $S=S_1\cap S_2$, we know $|S|\geq (1-0.2\alpha)|S_{\rm good}|$ and $S$ is \\
	  $(0.2\alpha, \alpha, C_2k(ka)^aK\kappa_2\alpha^{1-1/k}\zeta^{-1/k},C_2K^2\tilde{\alpha}\log^{2a}(1/\tilde{\alpha}),C_2k^2\kappa_2^2\alpha^{1-2/k}\zeta^{-2/k}, C_2K\tilde\alpha \log^{a}(1/\tilde{\alpha}))$-corrupt good with respect to $(w^*,\Sigma,\sigma)$.  This completes the proof.
 	\end{proof}
\end{document}